\newcommand{\ve}{\mathbf{e}}
\newcommand{\vo}{\mathbf{o}}
\newcommand{\va}{\mathbf{a}}
\newcommand{\vmu}{\boldsymbol{\mu}}
\newcommand{\vlambda}{\boldsymbol{\lambda}}
\newcommand{\vtheta}{\boldsymbol{\theta}}
\newcommand{\xhdr}[1]{\textbf{#1.}}
\newcommand{\vsigma}{\boldsymbol{\sigma}}
\title{Stateful Strategic Regression}
\begin{document}
\author{Keegan Harris\\
School of Computer Science\\
Carnegie Mellon University\\
Pittsburgh, PA 15213\\
\texttt{keeganh@cmu.edu}\\
\And
Hoda Heidari\\
School of Computer Science\\
Carnegie Mellon University\\
Pittsburgh, PA 15213\\
\texttt{hheidari@cmu.edu}\\
\And
Zhiwei Steven Wu\\
School of Computer Science\\
Carnegie Mellon University\\
Pittsburgh, PA 15213\\
\texttt{zstevenwu@cmu.edu}
}

\theoremstyle{definition}
\newtheorem{theorem}{Theorem}[section]
\newtheorem{lemma}[theorem]{Lemma}
\newtheorem{fact}[theorem]{Fact}
\newtheorem{claim}[theorem]{Claim}
\newtheorem{remark}[theorem]{Remark}
\newtheorem{corollary}[theorem]{Corollary}
\newtheorem{proposition}[theorem]{Proposition}
\newtheorem{assumption}[theorem]{Assumption}
\newtheorem{example}[theorem]{Example}
\theoremstyle{definition}
\newtheorem{definition}[theorem]{Definition}

\date{}
\maketitle

\begin{abstract}
Automated decision-making tools increasingly assess individuals to determine if they qualify for high-stakes opportunities. A recent line of research investigates how strategic agents may respond to such scoring tools to receive favorable assessments. While prior work has focused on the \emph{short-term} strategic interactions between a decision-making institution (modeled as a principal) and individual decision-subjects (modeled as agents), we investigate interactions spanning \emph{multiple time-steps}. In particular, we consider settings in which the agent's effort investment today can accumulate over time in the form of an internal \emph{state}---impacting both his future rewards and that of the principal. We characterize the Stackelberg equilibrium of the resulting game and provide novel algorithms for computing it. Our analysis reveals several intriguing insights about the role of multiple interactions in shaping the game's outcome: First, we establish that in our stateful setting, the class of all linear assessment policies remains as powerful as the larger class of all monotonic assessment policies. While recovering the principal's optimal policy requires solving a non-convex optimization problem, we provide polynomial-time algorithms for recovering both the principal and agent's optimal policies under common assumptions about the process by which effort investments convert to observable features. Most importantly, we show that with multiple rounds of interaction at her disposal, the principal is more effective at incentivizing the agent to accumulate effort in her desired direction. 
Our work addresses several critical gaps in the growing literature on the societal impacts of automated decision-making---by focusing on \emph{longer time horizons} and accounting for the \emph{compounding} nature of decisions individuals receive over time. 

\end{abstract}

\section{Introduction}

Automated decision-making tools increasingly assess individuals to determine whether they qualify for life-altering opportunities in domains such as lending~\citep{jagtiani2019roles}, higher education~\citep{kuvcak2018machine}, employment~\citep{sanchez2020does}, and beyond. These assessment tools have been widely criticized for the blatant disparities they produce through their scores~\citep{sweeney2013discrimination,propublica}. This overwhelming body of evidence has led to a remarkably active area of research into understanding the societal implications of algorithmic/data-driven automation. 
Much of the existing work on the topic has focused on the \emph{immediate} or \emph{short-term} societal effects of automated decision-making. (For example, a thriving line of work in Machine Learning (ML) addresses the unfairness that arises when ML predictions inform high-stakes decisions~\citep{dwork2012fairness,hardt2016equality,kleinberg2016inherent,celis2019classification,agarwal2018reductions,donini2018empirical,corbett2018measure} by defining it as a form of predictive disparity, e.g., inequality in false-positive rates~\citep{hardt2016equality,propublica} across social groups.)
With the exception of several noteworthy recent articles (which we discuss shortly), prior work has largely ignored the \emph{processes} through which algorithmic decision-making systems can \emph{induce, perpetuate, or amplify} undesirable choices and behaviors. 

Our work takes a \emph{long-term perspective} toward modeling the interactions between individual decision subjects and algorithmic assessment tools. We are motivated by two key observations: First, algorithmic assessment tools often provide predictions about the \emph{latent} qualities of interest (e.g., creditworthiness, mastery of course material, or job productivity) by relying on \emph{imperfect} but \emph{observable} proxy attributes that can be directly evaluated about the subject (e.g., past financial transactions, course grades, peer evaluation letters). Moreover, their design ignores the \emph{compounding} nature of advantages/disadvantages individual subjects accumulate over time in pursuit of receiving favorable assessments (e.g., debt, knowledge, job-related skills). 
To address how individuals \emph{respond} to decisions made about them through modifying their observable characteristics, a growing line of work has recently initiated the study of the \emph{strategic} interactions between decision-makers and decision-subjects (see, e.g., \citep{dong2018strategic,hu2018disparate,milli2018social,kleinberg2020classifiers, hardtetal}). This existing work has focused mainly on the \emph{short-term} implications of strategic interactions with algorithmic assessment tools---e.g., by modeling it as a \emph{single round} of interaction between a principal (the decision-maker) and agents (the decision-subjects)~\citep{kleinberg2020classifiers}. In addition, existing work that studies interactions over time assumes that agents are myopic in responding to the decision-maker's policy~\cite{BechavodCausal, Shavitetal, perdomo2020performative, dong2018strategic}. We expand the line of inquiry to \emph{multiple rounds} of interactions,  accounting for the impact of actions today on the outcomes players can attain tomorrow.

\xhdr{Our multi-round model of principal-agent interactions} We take the model proposed by \citet{kleinberg2020classifiers} as our starting point. In \citeauthor{kleinberg2020classifiers}'s formulation, a principal interacts with an agent \emph{once}, where the interaction takes the form of a Stackelberg game. The agent receives a score $y = f(\vtheta, \vo)$, in which $\vtheta$ is the principal's choice of assessment parameters, and $\vo$ is the agent's observable characteristics.
The score is used to determine the agent's merit with respect to the quality the principal is trying to assess. (As concrete examples, $y$ could correspond to the grade a student receives for a class, or the FICO credit score of a loan applicant.) The principal moves first, publicly announcing her assessment rule $\vtheta$ used to evaluate the agent. The agent then best responds to this assessment rule by deciding how to invest a \emph{fixed} amount of effort into producing a set of observable features $\vo$ that maximize his score $y$. \citeauthor{kleinberg2020classifiers} characterize the assessment rules that can incentivize the agent to invest in specific types of effort (e.g., those that lead to real \emph{improvements} in the quality of interest as opposed to \emph{gaming} the system). We generalize the above setting to $T > 1$ rounds of interactions between the principal and the agent and allow for the possibility of certain effort types rolling over from one step to the next. Our key finding is that longer time horizon provides the principal additional latitude in the range of effort sequences she can incentivize the agent to produce. To build intuition as to why repeated interactions lead to the expansion of incentivizable efforts, consider the following stylized example:
\begin{example}\label{running}
Consider the classroom example of \citeauthor{kleinberg2020classifiers} where a teacher (modeled as a principal) assigns a student (modeled as an agent) an overall grade $y$ based on his observable features; in this case test and homework score. Assume that the teacher chooses an assessment rule and assigns a score $y = \theta_{TE} TE + \theta_{HW} HW$, where $TE$ is the student's test score $HW$ is his homework score, and $\theta_T, \theta_{HW} \in \mathbb{R}$ are the weight of each score in the student's overall grade. The student can invest effort into any of three activities: copying answers on the test, studying, and looking up homework answers online. In a one-round setting where the teacher only evaluates the student once, the student may be more inclined to copy answers on the test or look up homework answers online, since these actions immediately improve the score with relatively lower efforts. However, in a multiple-round setting, these two actions do not improve the student's knowledge (which impacts the student's future grades as well), and so these efforts do not carry over to future time steps. When there are multiple rounds of interaction, the student will be incentivized to invest effort into studying, as knowledge accumulation over time takes less effort in the long-run compared to cheating every time. We revisit this example in further detail in Appendix \ref{sec:example}.

\begin{figure}
    \centering
    \floatbox[{\capbeside\thisfloatsetup{capbesideposition={right, center},capbesidewidth=0.6\textwidth}}]{figure}[\FBwidth]
    {\caption{Average effort spent studying vs. average effort spent cheating over time for the example in Appendix \ref{sec:example}. The line $x+y=1$ represents the set of all possible Pareto optimal average effort profiles. The shaded region under the line represents the set of average effort profiles which can be incentivized with a certain time horizon. Darker shades represent longer time horizons. In the case where $T=1$, it is not possible to incentivize the agent to spend any effort studying. Arrows are used to demonstrate the additional set of Pareto optimal average effort profiles that can be incentivized with each time horizon. As the time horizon increases, it becomes possible to incentivize a wider range of effort profiles.}}
    {\includegraphics[width=0.4\textwidth, height=\textheight,keepaspectratio]{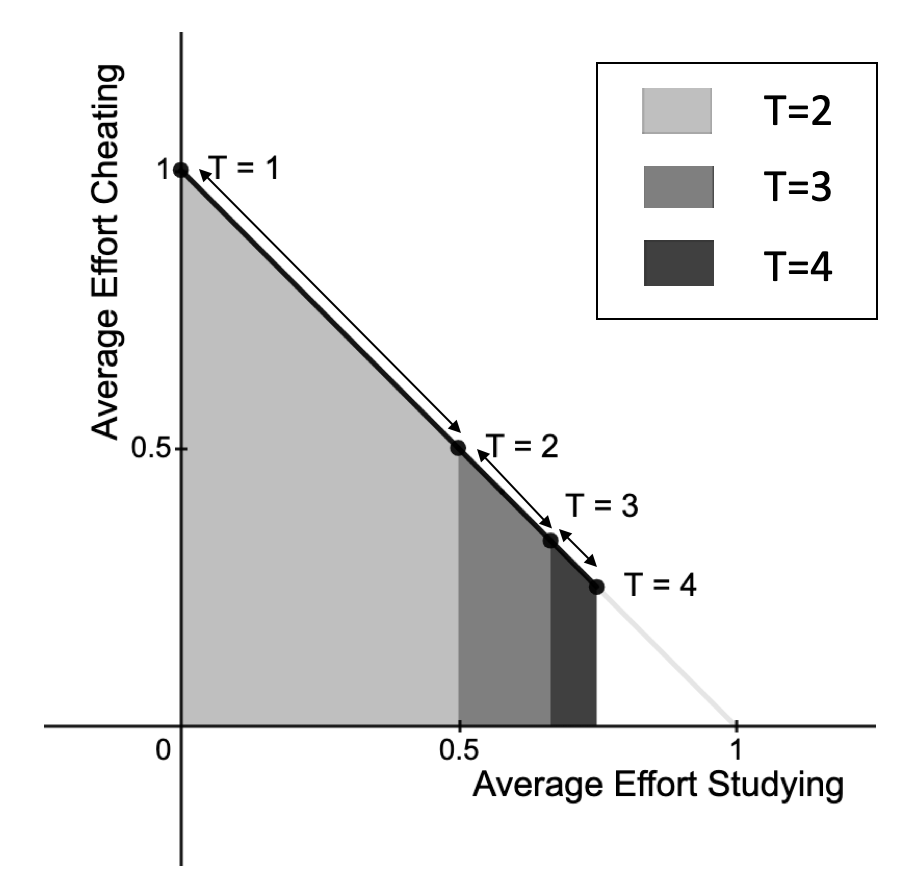}\label{fig:ex}}
\end{figure}
\end{example}

\xhdr{Summary of our findings and techniques}
We formalize settings in which the agent's effort investment today can \emph{accumulate} over time in the form of an internal \emph{state}---impacting both his future rewards and that of the principal. We characterize the Stackelberg equilibrium of the resulting game and provide novel algorithmic techniques for computing it. We begin by establishing that for the principal, the class of all \emph{linear} assessment policies remains as powerful as the larger class of all \emph{monotonic} assessment policies. In particular, we prove that if there exists an assessment policy that can incentivize the agent to produce a particular sequence of effort profiles, there also exists a linear assessment policy which can incentivize the exact same effort sequence. 

We then study the equilibrium computation problem, which in general involves optimizing \emph{non-convex} objectives. Despite the initial non-convexity, we observe that when the problem is written as a function of the agent's incentivized efforts, the principal's non-convex objective becomes convex. Moreover, under a common assumption on agent's conversion mapping from efforts to observable features, the set of incentivizable effort policies is also {convex}. Given this structure, we provide a polynomial-time algorithm that directly optimizes the principal's objective over the set of incentivizable effort policies, which subsequently recovers agent's and principal's equilibrium strategies. Even though prior work \cite{watchit, multidim} has also taken this approach for solving other classes of non-convex Stackelberg games, our work has to overcome an additional challenge that the agent's set of  incentivizable efforts is not known a-priori. We resolve this challenge by providing a \emph{membership oracle} (that determines whether a sequence of agent efforts can be incentivized by any assessment policy), which allows us to leverage the convex optimization method due to \citet{kalai2006simulated}.

Our analysis reveals several intriguing insights about the role of repeated interactions in shaping the long-term outcomes of decision-makers and decision subjects: For example, we observe that with multiple rounds of assessments, both parties can be better off employing \emph{dynamic/time-sensitive} strategies as opposed to \emph{static/myopic} ones. Crucially, perhaps our most significant finding is that by considering the effects of multiple time-steps, the principal is significantly more effective at incentivizing the agent to accumulate effort in her desired direction (as demonstrated in  Figure~\ref{fig:ex} for a stylized teacher-student example). In conclusion, our work addresses two critical gaps in the growing literature on the societal impacts of automated decision-making--by focusing on \emph{longer time horizons} and accounting for the \emph{compounding} nature of decisions individuals receive over time.

\subsection{Related work}

A growing line of work at the intersection of computer science and social sciences investigates the impacts of algorithmic decision-making models on people (see, e.g.,~\citep{hu2018short,ustun2018actionable,liu2018delayed,dong2018strategic}). As we outline below, significant attention has been devoted to settings in which decision-subjects are strategic and respond to the decision-maker's choice of assessment rules.  
\citet{liu2018delayed} and \citet{kannan2018downstream} study how a utility-maximizing \emph{decision-maker} may respond to the predictions made by a predictive rule (e.g., the decision-maker may interpret/utilize the predictions a certain way or decide to update the model entirely.) 
\citet{mouzannar2019fair} and \citet{heidari2019long} propose several dynamics for how individuals within a population may react to predictive rules by changing their qualifications.
\citet{dong2018strategic,hu2018disparate,milli2018social} address \emph{strategic classification}---a setting in which decision subjects are assumed to respond \emph{strategically} and potentially \emph{untruthfully} to the choice of the predictive model, and the goal is to design classifiers that are robust to strategic manipulation. Generalizing strategic classification, \citet{perdomo2020performative} propose a risk-minimization framework for \emph{performative predictions}, which broadly refers to settings in which the act of making a prediction influences the prediction target.
\emph{Incentive-aware learning}~\citep{zhang2021incentive,ahmadi2020strategic} is another generalization that, at a high-level, seeks to characterize the conditions under which one can train predictive rules that are robust to training data manipulations.

Two additional topics that are conceptually related to our work but differ in their motivating problems and goals are \emph{adversarial prediction} and \emph{strategy-proof regression}.
The adversarial prediction problem~\citep{bruckner2011stackelberg,dalvi2004adversarial} is motivated by settings (e.g., spam detection) in which an adversary actively manipulates data to increase the false-negative rate of the classifier. Adversarial predictions have been modeled and analyzed as a zero-sum game~\citep{dalvi2004adversarial} or a Stackelberg competition~\citep{bruckner2011stackelberg}.
Strategyproof/truthful linear regression~\citep{dekel2010incentive,cummings2015truthful,chen2018strategyproof} offers mechanisms for incentivizing agents to report their data truthfully.

As mentioned earlier, many of our modeling choices closely follow \citet{kleinberg2020classifiers}. Below, we provide a summary of \citeauthor{kleinberg2020classifiers}'s  results and briefly mention some of the recent contributions following their footsteps.
While much of prior work on strategic classification views all feature manipulation as undesirable~\citep{dong2018strategic,hu2018disparate,milli2018social}, \citeauthor{kleinberg2020classifiers} made a distinction between feature manipulation via \textit{gaming} (investing effort to change observable features in a way that has no positive impact on the quality the principal is trying to measure) and feature manipulation via \textit{improvement} (investing effort in such a way that the underlying characteristics the principal is trying to measure are improved). Their model consists of a single round of interaction between a principal and an agent, and their results establish the optimality and limits of linear assessment rules in incentivizing desired effort profiles.
%
Several papers since then have studied similar settings (see, e.g., \citet{miller2020strategic,frankel2019improving}) with goals that are distinct from ours.
(For example, \citeauthor{frankel2019improving} 
find a fixed-point assessment rule that improves accuracy by under-utilizing the observable data and flattening the assessment rule.)

Finally, we mention that principle-agent games~\citep{laffont2009theory} are classic economic tools to model interactions in which a self-interested entity (the agent) responds to the policy/contract enacted by another (the principal) in ways that are contrary to the principle's intentions. The principal must, therefore, choose his/her strategy accounting for the agent's strategic response.
Focusing on linear strategies is a common practice in this literature~\citep{holmstrom1987aggregation,carroll2015robustness,dutting2019simple}.
For simplicity, we present our analysis for linear assessment rules, but later show that the class of all linear assessment policies is equally as powerful as the class of all monotone assessment policies (Theorem \ref{thm:linear-opt}).


\section{Problem formulation}\label{sec:formulation}
In our \emph{stateful} strategic regression setting, a principal interacts with the \emph{same} agent over the course of $T$ time-steps, modeled via a Stackelberg game.\footnote{To improve readability, we adopt the convention of referring to the principal as she/her and the agent as he/him throughout the paper.} 
The principal moves first, announcing an \textit{assessment policy}, which consists of a \textit{sequence} of assessment rules given by parameters $\{\vtheta_t\}_{t=1}^T$. Each $\vtheta_t$ is used for evaluating the agent at round $t = 1, \cdots, T$. The agent then best responds to this assessment rule by investing effort in different activities, which in turn produces a series of observable features $\{\vo_t\}_{t=1}^T$ that maximize his overall score. 
Through each assessment round $t \in \{1, \cdots, T\}$, the agent receives a score $y_t = f(\vtheta_t, \vo_t)$, where $\vtheta_t$ is the principal's assessment parameters for round $t$, and $\vo_t$ is the agent's observable features at that time. 
Following \citeauthor{kleinberg2020classifiers}, we focus on monotone assessment rules.
\begin{definition}[Monotone assessment rules]
    A assessment rule $f(\vtheta, \cdot): \mathbb{R}^n \rightarrow \mathbb{R}$ is \textit{monotone} if $f(\vtheta, \vo) \geq f(\vtheta, \mathbf{o'})$ for $o_k \geq o'_k$ $\forall k \in \{1, ..., n\}$. Additionally, $\exists k \in \{1, ..., n\}$ such that strictly increasing $o_k$ strictly increases $f(\vtheta, \boldsymbol{o})$.
\end{definition}

For convenience, we assume the principal's assessment rules are linear, that is, $y_t = f(\vtheta_t, \vo_t) = \vtheta_t^\top \vo_t$.
Later we show that the linearity assumption is without loss of generality. 
We also restrict $\vtheta_t$ to lie in the $n$-dimensional probability simplex $\Delta^n$. That is, we require each component of $\vtheta_t$ to be at least $0$ and the sum of the $n$ components equal $1$. 

\xhdr{From effort investments to observable features and internal states}
The agent can modify his observable features by investing effort in various activities. While these effort investments are private to the agent and the principal cannot directly observe them, they lead to features that the principal can observe. 
In response to the principal's assessment policy,
The agent plays an \textit{effort policy}, consisting of a \textit{sequence} of effort profiles $\{\ve_t\}_{t=1}^T$ where each individual coordinate of $\ve_t$ (denoted by $e_{t,j}$) is a function of the principal's assessment policy $\{\vtheta_t\}_{t=1}^T$. Specifically, the agent chooses his policy $(\ve_1, \cdots, \ve_T)$, so that it is a best-response to the the principal's assessment policy $(\vtheta_1, \cdots, \vtheta_T)$. 

Next, we specify how effort investment translates into observable features. 
We assume an agent's observable features in the first round take the form $\vo_1 = \vo_0 + \vsigma_W(\ve_1)$,
where $\vo_0 \in \mathbb{R}^n$ is the initial value of the agent's observable features \textit{before} any modification, $\ve_1 \in \mathbb{R}^d$ is the effort the agent expends to modify his features in his first round of interaction with the principal, and $\vsigma_W: \mathbb{R}^{d} \rightarrow \mathbb{R}^n$ is the \textit{effort conversion function}, parameterized by $W$. The effort conversion function is some concave mapping from effort expended to observable features. (For example, if the observable features in the classroom setting are test and homework scores, expending effort studying will affect both an agent's test and homework scores, although it may require more studying to improve test scores from $90\%$ to $100\%$ than from $50\%$ to $60\%$.)

Over time, effort investment can accumulate. (For example, small businesses accumulate \textit{wealth} over time by following good business practices. Students \textit{learn} as they study and accumulate \textit{knowledge}.) This accumulation takes the form of an internal \textit{state}, which has the form $\mathbf{s}_t = \mathbf{s}_0 + \Omega \sum_{i=1}^{t-1} \ve_i$. Here $\Omega \in \mathbb{R}^{d \times d}$ is a \emph{diagonal} matrix in which $\Omega_{j,j}$, $j \in \{1, \ldots, d\}$ determines how much one unit of effort (e.g., in the $j$th effort coordinate, $e_j$) rolls over from one time step to the next, and $\mathbf{s}_0$ is the agent's initial ``internal state''. An agent's observable features are, therefore, a function of both the effort he expends, as well as his internal state. Specifically, $\vo_t = \vsigma_W(\mathbf{s}_t + \ve_t)$
(here $\vsigma_W (\mathbf{s}_0)$ is analogous to $\vo_0$ in the single-shot setting). Note that while for simplicity, we assume the accumulating effort types are socially desirable, our results apply as well to settings where undesirable efforts can similarly accumulate.

\xhdr{Utility functions for the agent and the principal}
Given the above mapping, the agent's goal is to pick his effort profiles so that the observable features they produce maximize the \textit{sum} of his scores over time, that is, $\text{the agent's utility }=\sum_{t=1}^T y_t= \sum_{t=1}^T \vtheta_t^\top \vo_t.$
Our focus on the sum of scores over time is a conventional choice and is motivated by real-world examples. (A small business owner who applies for multiple loans cares about the cumulative amount of loans he/she receives. A student taking a series of exams cares about his/her average score across all of them.)

The principal's goal is to choose his assessment rules over time so as to maximize cumulative effort investments according to her preferences captured by a matrix $\Lambda$. Specifically, the principal's utility
\small$= \left \| \Lambda \sum_{t=1}^T \ve_t \right \|_1$\normalsize.
The principal's utility can be thought of as a weighted $\ell_1$ norm of the agent's cumulative effort, where $\Lambda \in \mathbb{R}^{d \times d}$ is a \emph{diagonal} matrix where the element $\Lambda_{jj}$ denotes how much the principal wants to incentivize the agent invest in effort component $e_j$.\footnote{Note that while we only consider diagonal $\Omega \in \mathbb{R}^{d \times d}_+$, our results readily extend to general $\Omega, \in \mathbb{R}^{d \times d}_+$. By focusing on diagonal matrices we have a one-to-one mapping between state and effort components. Non-diagonal $\Omega$ corresponds to cases where different effort components can contribute to multiple state components.}

\xhdr{Constraints on agent effort} As was the case in the single-shot setting of \citeauthor{kleinberg2020classifiers}, we assume that the agent's choice of effort $\ve_t$ at each time $t$ is subject to a fixed budget $B$ (with respect to the $\ell_1$ norm). Without loss of generality, we consider the case where $B = 1$. We explore the consequences of an alternative agent effort formulation -- namely a \emph{quadratic cost penalty} -- in Appendix \ref{sec:quad}.

\begin{proposition}
It is possible to incentivize a wider range of effort profiles by modeling the principal-agent interaction over \emph{multiple} time-steps, compared to a model which only considers one-shot interactions. See Appendix \ref{sec:example} for an example which illustrates this phenomena.
\end{proposition}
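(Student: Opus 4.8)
The plan is to prove this proposition \emph{constructively}, by exhibiting a single instance of the stateful game in which some target effort profile can be incentivized when $T > 1$ but provably cannot be incentivized when $T = 1$; this is exactly the role played by the teacher--student instance of Example~\ref{running}, which is fully specified in Appendix~\ref{sec:example}. Concretely, I would first fix all of the problem primitives: the effort dimensions (studying, cheating on the test, copying homework answers online), the concave conversion map $\vsigma_W$ relating effort to the observable test/homework scores, the diagonal accumulation matrix $\Omega$ with $\Omega_{jj} > 0$ only in the studying coordinate (so that cheating and copying do \emph{not} carry over), the initial state $\vs_0$, and the per-round budget $\|\ve_t\|_1 \le 1$. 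The object whose range I want to enlarge is the set of cumulative (or average) effort profiles $\sum_{t=1}^T \ve_t$ that arise as an agent best response to \emph{some} admissible assessment policy $\{\vtheta_t\}_{t=1}^T \subseteq \Delta^n$; call this the incentivizable set $\mathcal{I}_T$.

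Since the agent's utility is $\sum_{t=1}^T \vtheta_t^\top \vsigma_W(\vs_t + \ve_t)$ with $\vs_t = \vs_0 + \Omega\sum_{i<t}\ve_i$, the next step is to write the marginal conditions of the agent's budget-constrained best response. For $T = 1$ the state is irrelevant, the agent simply maximizes $\vtheta_1^\top \vsigma_W(\vs_0 + \ve_1)$ over $\|\ve_1\|_1 \le 1$, and the goal is to show that for \emph{every} $\vtheta_1 \in \Delta^n$ the marginal score from a unit of cheating effort strictly exceeds that from a unit of studying effort at any candidate allocation, so the budget is always spent away from studying and the studying coordinate of $\mathcal{I}_1$ is pinned to $0$. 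For $T > 1$, I would exhibit an explicit policy (e.g.\ one that rewards the accumulated scores in later rounds) under which a unit of studying effort invested early is credited in all subsequent rounds through $\Omega$, so its \emph{amortized} marginal value overtakes that of cheating; verifying that the resulting effort sequence is indeed a best response and places positive mass on studying then certifies a point of $\mathcal{I}_T \setminus \mathcal{I}_1$.

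Finally I would assemble these two computations into the containment-and-separation conclusion $\mathcal{I}_1 \subsetneq \mathcal{I}_T$, optionally noting monotonicity in $T$ to match the nested shaded regions of Figure~\ref{fig:ex}. The main obstacle is the \emph{negative} half of the argument: establishing that no one-shot assessment rule can incentivize any studying effort requires a statement that is uniform over the whole simplex $\Delta^n$ and over all feasible allocations, not just a single convenient $\vtheta_1$. I expect to handle this by exploiting the concavity of $\vsigma_W$ together with the structure of the conversion map so that the marginal-score comparison between the cheating and studying coordinates holds \emph{pointwise} on the budget set; the accumulation matrix $\Omega$ then supplies precisely the extra future marginal value that flips this comparison once $T \ge 2$, which is the crux of why a longer horizon strictly enlarges the incentivizable set.
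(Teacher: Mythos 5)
Your proposal is correct and takes essentially the same route as the paper: the paper proves this proposition purely by the constructive classroom instance in Appendix~\ref{sec:example}, where with $W_{CT}=W_{CH}=3$ and $W_{ST}=W_{SH}=1$ no single-round rule $\vtheta_1\in\Delta^n$ can incentivize studying (the cheating coordinates' marginal score dominates pointwise, as cited from \citet{kleinberg2020classifiers}), while for $T\ge 3$ the explicit policy $\theta_{t,TE}=\theta_{t,HW}=0.5$ for all $t$ makes studying a best response because $\Omega$ credits it in all later rounds. Your containment-and-separation framing $\mathcal{I}_1\subsetneq\mathcal{I}_T$ and the uniform-over-the-simplex negative half match what the paper's example establishes.
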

 
\section{Equilibrium characterization}

The following optimization problem captures the expression for the agent's best-response to an arbitrary sequence of assessment rules.\footnote{Throughout this section when it improves readability, we denote the dimension of matrices in their subscript (e.g., $X_{a \times b}$ means $X$ is an $a \times b$ matrix).} (Recall that $d$ refers to the dimension of effort vectors ($\ve_t$'s), and $n$ refers to the number of observable features, i.e., the dimension of $\vo_t$'s.) 

The set of agent best-responses to a linear assessment policy, $\{\vtheta_t\}_{t=1}^T$, is given by the following optimization procedure:
\small
\begin{equation*} \begin{aligned}
    \{\ve^*_t\}_{t=1}^T = \arg \max_{\ve_1, ..., \ve_T} 
    \quad & \sum_{t=1}^T \vtheta_t^\top \vsigma_W \left(\mathbf{s}_0 + \Omega \sum_{i=1}^{t-1} \ve_i + \ve_t \right), \quad  
    \text{s.t. } e_{t,j} \geq 0,
    \quad \sum_{j=1}^d e_{t,j} \leq 1 \; \forall t, j
\end{aligned} \end{equation*}
\normalsize

The goal of the principal is to pick an assessment policy $\{\vtheta\}_{t=1}^T$ in order to maximize the total magnitude of the effort components she cares about, i.e. 

\small
\begin{equation*}
\begin{aligned} 
    \{\vtheta^*_t\}_{t=1}^T = \arg \max_{\vtheta_1, \ldots, \vtheta_T}
    \quad & \left \| \Lambda \sum_{t=1}^T \ve^*_t(\vtheta_t, \ldots, \vtheta_T) \right \|_1, \quad 
    \text{s.t. } \vtheta_t \in \Delta^n \; \forall t,
\end{aligned}
\end{equation*}
\normalsize

where we abuse notation by treating $\ve^*_t$ as a function of $(\vtheta_t, \ldots, \vtheta_T)$. Substituting the agent's optimal effort policy into the above expression, we obtain the following formalization of the principal's assessment policy:

\begin{proposition}[Stackelberg Equilibrium]\label{prop:eq}
Suppose the principal's strategy space consists of all sequences of linear monotonic assessment rules. The Stackelberg equilibrium of the stateful strategic regression game, $\left(\{\vtheta^*_t\}_{t=1}^T , \{\ve^*_t\}_{t=1}^T \right)$, can be specified as the following bilevel multiobjective optimization problem. As is standard throughout the literature, we assume that the agent breaks ties in favor of the principal. Moving forward, we omit the constraints on the agent and principal action space for brevity.
{\scriptsize
\begin{equation*}
\begin{aligned} 
    \{\vtheta^*_t\}_{t=1}^T =  \arg \max_{\vtheta_1, ..., \vtheta_T}
    \quad & \left \| \Lambda \sum_{t=1}^T \ve^*_t(\vtheta_t, ..., \vtheta_T) \right \|_1, \quad 
    \text{s.t. } \; \{\ve^*_t\}_{t=1}^T = \arg\max_{\ve_1, ..., \ve_T} 
    \; \sum_{t=1}^T \vtheta_t^{*\top} \vsigma_W\left(\mathbf{s}_0 + \Omega \sum_{i=1}^{t-1} \ve_i + \ve_t \right)\\ 
\end{aligned}
\end{equation*}
}
\end{proposition}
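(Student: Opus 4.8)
The plan is to derive the displayed nested program directly from the definition of a Stackelberg equilibrium, in which the principal (leader) commits to an assessment policy $\{\vtheta_t\}_{t=1}^T$ and the agent (follower) then best-responds. The two optimization problems already stated in this section encode the follower's and the leader's subproblems; the content of the proposition is to verify that nesting them yields a well-defined equilibrium and that the principal's utility may legitimately be evaluated at the agent's \emph{induced} effort policy.

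First I would fix an arbitrary $\{\vtheta_t\}_{t=1}^T$ with each $\vtheta_t\in\Delta^n$ and analyze the agent's induced subproblem. The per-round constraints $e_{t,j}\ge 0,\ \sum_j e_{t,j}\le 1$ are separable across $t$ and each cuts out a compact simplex, while the objective $\sum_t \vtheta_t^\top\vsigma_W(\cdot)$ is continuous; hence a maximizing effort policy $\{\ve^*_t\}_{t=1}^T$ exists. I would then record the structural observation underlying the notation in the statement: because the internal state accumulates \emph{forward} only---$\mathbf{s}_t$ depends on $\ve_1,\dots,\ve_{t-1}$ but never on $\ve_t,\dots,\ve_T$---the effort $\ve_t$ influences only the score terms $t,t+1,\dots,T$. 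Consequently the agent's problem admits a backward-induction (dynamic-programming) solution in which the decision rule for $\ve_t$ depends on the current state together with $(\vtheta_t,\dots,\vtheta_T)$; this is what the (deliberately abused) notation $\ve^*_t(\vtheta_t,\dots,\vtheta_T)$ is meant to capture.

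Next I would pass to the leader's problem. By the Stackelberg assumption the principal anticipates this best response, so she evaluates her utility $\|\Lambda\sum_t\ve_t\|_1$ at the induced efforts $\ve^*_t$ rather than at freely chosen efforts. Substituting the lower-level solution into the upper-level objective and maximizing over $\{\vtheta_t\}_{t=1}^T\in(\Delta^n)^T$ produces precisely the displayed bilevel program, whose optimizer $\{\vtheta^*_t\}_{t=1}^T$ together with the induced $\{\ve^*_t\}_{t=1}^T$ is the claimed equilibrium pair.

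The step I expect to be the main obstacle is the \emph{well-definedness} of this substitution when the agent's best response is not unique. In general the inner $\arg\max$ is a set-valued correspondence, so $\ve^*_t(\vtheta_t,\dots,\vtheta_T)$ need not be single-valued and the principal's objective could be ambiguous. This is exactly where the stated convention enters: invoking the assumption that the agent breaks ties in favor of the principal selects, among all agent-optimal policies, one that maximizes the principal's utility, rendering the upper-level objective well-defined. Since the constraint set $(\Delta^d)^T$ is fixed and compact and the agent's objective is jointly continuous, Berge's maximum theorem makes the best-response correspondence compact-valued and upper-hemicontinuous; the tie-broken principal value is then upper semicontinuous, so the outer maximum is attained over the compact domain $(\Delta^n)^T$ and the nested optimization faithfully characterizes the Stackelberg equilibrium.
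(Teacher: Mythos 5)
Your proposal is correct and follows essentially the same route as the paper, which likewise obtains the bilevel program by substituting the expressions for $y_t$, $\vo_t$, and $\mathbf{s}_t$ into the agent's best-response problem and then nesting that inside the principal's objective (see the derivation in Appendix B). Your additional care about existence and well-definedness of the inner $\arg\max$ (compactness, tie-breaking, Berge's theorem) is sound and goes slightly beyond what the paper writes down, but it does not change the argument's structure.
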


\subsection{Linear assessment policies are optimal}

Throughout our formalization of the Stackelberg equilibrium, we have assumed that the principal deploys \emph{linear} assessment rules, when \emph{a priori} it is not obvious why the principal would play assessment rules of this form. We now show that the linear assessment policy assumption is without loss of generality. We start by defining the concept of \emph{incentivizability} for an effort policy, and characterize it through a notion of a \emph{dominated effort policy}.

\begin{definition}[Incentivizability]
An effort policy $\{\ve_t\}_{t=1}^T$ is \textit{incentivizable} if there exists an assessment policy $\{ f(\vtheta_t, \cdot) \}_{t=1}^T$ for which playing $\{\ve_t\}_{t=1}^T$ is \emph{a} best response. (Note: $\{\ve_t\}_{t=1}^T$ need not be the \emph{only} best response.)
\end{definition}

\begin{definition}[Dominated Effort Policy]
We say the effort policy $\{\ve_t\}_{t=1}^T$ is \emph{dominated by} another effort policy if an agent can achieve the same or higher observable feature values by playing another effort policy $\{\mathbf{a}_t\}_{t=1}^T$ that does not spend the full effort budget on at least one time-step.
\end{definition}

Note that an effort policy which is dominated by another effort policy will never be played by a rational agent no matter what set of decision rules are deployed by the principal, since a better outcome for the agent will always be achievable.

\begin{theorem} \label{thm:linear-opt}
For any effort policy $\{\ve_t\}_{t=1}^T$ that is not dominated by another effort policy, there exists a linear assessment policy that can incentivize it.
\end{theorem}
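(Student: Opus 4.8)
The plan is to exploit the concavity of the agent's objective to reduce the existence of an incentivizing linear policy to a first-order (supporting-hyperplane) condition, and then to verify that this condition can be met with weight vectors lying in the per-round simplices. First I would observe that, because $\vsigma_W$ is concave and every admissible $\vtheta_t$ is coordinatewise non-negative (it lives in $\Delta^n$), the map $\ve \mapsto \vtheta_t^\top \vsigma_W(\cdot)$ is concave while its argument $\mathbf{s}_0+\Omega\sum_{i<t}\ve_i+\ve_t$ is affine in the effort variables; hence the agent's cumulative objective $U_\vtheta(\ve_1,\dots,\ve_T)=\sum_t \vtheta_t^\top\vsigma_W(\mathbf{s}_0+\Omega\sum_{i<t}\ve_i+\ve_t)$ is concave over the convex product of budget simplices. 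For a concave maximization the KKT conditions are both necessary and sufficient, so $\{\ve_t\}$ is a best response to \emph{some} linear policy exactly when there exist $\vtheta_t\in\Delta^n$ and multipliers making $\{\ve_t\}$ stationary. Writing $J_t$ for the Jacobian of $\vsigma_W$ at the induced argument, the marginal value of the $(t,j)$ effort coordinate is the linear-in-$\vtheta$ quantity $g_{t,j}(\vtheta)=(\vtheta_t^\top J_t)_j+\Omega_{jj}\sum_{t'>t}(\vtheta_{t'}^\top J_{t'})_j$, and incentivizability reduces to finding $\vtheta$ so that at each round the coordinates the agent actually uses share a common maximal marginal value while unused coordinates have no larger one.

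Next I would recast this in observable-feature space to produce the weights. The reachable set of feature trajectories together with all coordinatewise-smaller ones, namely $\tilde{\mathcal O}=\{(\vo_1,\dots,\vo_T): \exists\text{ feasible }\ve\text{ with }\vo_t\le\vsigma_W(\mathbf{s}_0+\Omega\sum_{i<t}\ve_i+\ve_t)\;\forall t\}$, is \emph{convex}: it is the projection of the convex set cut out by the constraints $\vo_t\le\vsigma_W(\cdot)$, each convex because $\vsigma_W$ composed with an affine map is concave. The non-domination hypothesis does the essential work here: it forces $\{\ve_t\}$ to spend the full budget at every round (a policy leaving budget unspent is dominated, witnessed by itself), and it places the induced trajectory $\{\vo_t^*\}$ on the Pareto frontier of $\tilde{\mathcal O}$. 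Since $\tilde{\mathcal O}$ is downward closed, a supporting hyperplane at $\{\vo_t^*\}$ has a \emph{non-negative} normal $(\vtheta_1,\dots,\vtheta_T)$, i.e. $\{\vo_t^*\}$ maximizes $\sum_t\vtheta_t^\top\vo_t$ over all reachable trajectories, which is exactly the statement that the agent is willing to produce $\{\ve_t\}$ under these weights.

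The remaining step, and the one I expect to be the main obstacle, is normalization: the supporting hyperplane delivers a single \emph{joint} non-negative normal, whereas the theorem requires each block $\vtheta_t$ to lie in its own simplex $\Delta^n$. A global rescaling cannot fix this, since the rounds are coupled through the state and rescaling one block changes the inter-temporal trade-off the agent faces; worse, a block can a priori vanish, because the agent may invest effort at round $t$ purely to build state $\Omega\ve_t$ rewarded only later (giving $\vtheta_t=0$, which cannot be scaled into $\Delta^n$). My plan to overcome this is to abandon the feature-space normal and instead work with the effort-space first-order system from the first paragraph: the set of $\vtheta$ making $\{\ve_t\}$ stationary is a convex cone (the preimage of the effort-feasibility normal cone under the linear map $\vtheta\mapsto(g_{t,j}(\vtheta))$), and the extra freedom introduced by the Jacobians $J_t$ and the roll-over matrix $\Omega$ gives more room than the feature-space cone alone. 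I would then show this cone meets the product of simplices by a Farkas/LP-duality argument: were it empty, the separating certificate could be assembled into an alternative feasible effort policy that weakly raises every observable feature while freeing budget at some round, i.e. a policy dominating $\{\ve_t\}$, contradicting non-domination. Showing that the certificate genuinely yields such a dominating policy—using full-budget spending together with the monotonicity and concavity of $\vsigma_W$ and the diagonal structure of $\Omega$—is the technical heart of the proof.
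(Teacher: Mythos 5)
Your overall strategy---reduce incentivizability to a first-order ``marginal value is maximal on the support'' condition and then obtain the weights from an LP-duality/Farkas certificate whose infeasibility would contradict non-domination---is exactly the structure of the paper's argument. But the proposal stops precisely where the proof has to happen: you write that ``showing that the certificate genuinely yields such a dominating policy \dots is the technical heart of the proof'' without exhibiting the certificate or the construction. The paper closes this gap by a concrete choice of primal program: minimize $\sum_{t}\|\va_t\|_1$ over effort policies that achieve weakly larger observable features than $\{\ve_t\}_{t=1}^T$ at \emph{every} round, subject to $\va_t\ge\mathbf{0}$ and $\|\va_t\|_1\le 1$. Non-domination is, by definition, the statement that $\{\ve_t\}_{t=1}^T$ itself is optimal for this program with value $T$. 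Stationarity of the Lagrangian gives $W^\top\vlambda_t^*+\sum_{i>t}\Omega^\top W^\top\vlambda_i^*\le(1+\gamma_t^*)\mathbf{1}_d$ (where $\gamma_t^*$ is the multiplier of the round-$t$ budget constraint), and complementary slackness forces equality exactly on the coordinates where $e_{t,j}>0$. Since the left-hand side is literally $\nabla_{\va_t}\sum_i y_i$ under the linear policy $\vtheta_t=\vlambda_t^*$, the dual variables themselves are the incentivizing assessment policy. You should carry out this step rather than gesture at it; without it the proposal is a plan, not a proof.

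Two further remarks. First, the normalization issue you raise in your last paragraph is genuine---the duality argument only delivers $\vlambda_t^*\ge\mathbf{0}$, not $\vlambda_t^*\in\Delta^n$, and per-round rescaling does change the intertemporal trade-off; the paper's own proof in fact does not resolve this and simply declares $\{\vlambda_t^*\}$ to be a linear assessment policy, so you identified a real subtlety rather than a defect of your own approach. Second, your paragraph-two claim that non-domination places the feature trajectory on the Pareto frontier of the downward-closed reachable set is not immediate from the paper's definition of domination (which requires matching features \emph{while freeing budget at some round}, not merely being feature-dominated): you need a scaling argument showing that any strictly feature-better full-budget policy can be perturbed into a weakly feature-better policy with slack, and the state coupling through $\Omega$ makes this nontrivial. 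Since you abandon the feature-space route anyway, the cleaner fix is to drop that paragraph and go directly to the LP above.
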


See Appendix \ref{sec:linear-opt-proof} for the complete proof. We characterize whether an effort \emph{policy} $\{\ve_t \}_{t=1}^T$ is dominated or not by a linear program, and show that a \emph{subset} of the dual variables correspond to a linear assessment policy which can incentivize it. \citeauthor{kleinberg2020classifiers} present a similar proof for their setting, defining a linear program to characterize whether an effort \emph{profile} $\ve_t$ is dominated or not. They then show that if an effort profile is \emph{not} dominated, the dual variables of their linear program correspond to a linear assessment rule which can incentivize it. While the proof idea is similar, their results do not extend to our setting because our linear program must include an additional constraint for every time-step to ensure that the budget constraint is always satisfied. We show that by examining the complementary slackness condition, we can upper-bound the gradient of the agent's cumulative score with respect to a subset of the dual variables $\{\vlambda_t\}_{t=1}^T$ (where each upper bound depends on the ``extra'' term $\gamma_t$ introduced by the linear budget constraint for that time-step). Finally, we show that when an effort policy is not dominated, all of these bounds hold with equality and, because of this, the subset of dual variables $\{\vlambda_t\}_{t=1}^T$ satisfy the definition of a linear assessment policy which can incentivize the effort policy $\{\ve_t \}_{t=1}^T$.
\section{Equilibrium computation for linear effort conversions}
While the optimization in Proposition \ref{prop:eq} is nonconvex in general, we provide polynomial-time algorithms for settings in which the agent's effort conversion function can reasonably be viewed as being linear, i.e. $\vsigma_W = W$, where $W \in \mathbb{R}^{n \times d}$ is the agent's \emph{effort conversion matrix}. Each component $w_{ij}$ of $W$ is a nonnegative term which represents how much an increase in observable feature $i$ one unit of effort in action $j$ translates to. While this assumption may not be realistic in some settings, it may work well for others and is a common assumption in the strategic classification literature (e.g., \cite{Shavitetal, dong2018strategic, BechavodCausal}).

\xhdr{Overview of our solution} Under settings in which the effort conversion function is linear, we can rewrite the game's Stackelberg Equilibrium in a simplified form (Proposition \ref{prop:eq-linear}). Under this formulation, the agent's optimal effort policy can be computed by solving a \emph{sequence} of linear programs, but computing the principal's optimal assessment policy is a nonconvex optimization problem. However, when we write the principal's objective in terms of the agent's efforts (incentivized by the principal's policy), the function becomes convex. Given this observation, we design an algorithm to optimize the principal's objective over the the set of incentivizable effort profiles (instead of over the principal's policy space). To perform the optimization via convex optimization methods, we first establish that the set of effort profiles is convex and provide a \emph{membership oracle} that determines if an effort profile belongs to this set. Given the membership oracle, we leverage the convex optimization method in \citet{kalai2006simulated} to find the (approximate) optimal incentivizable effort profile with high probability. Given this effort policy, we can use the dual of our membership oracle to recover a linear assessment policy which can incentivize it.
We begin by characterizing the Stackelberg Equilibrium in this setting.
\begin{proposition}[Stackelberg Equilibrium]\label{prop:eq-linear}
Suppose the agent's effort conversion function $\vsigma_W$ is linear. The Stackelberg equilibrium of the stateful strategic regression game, $\left(\{\vtheta^*_t\}_{t=1}^T , \{\ve^*_t\}_{t=1}^T \right)$, can then be specified as follows:
\small
\begin{equation}\label{eq:se-linear}
    \begin{aligned}
       \forall t: \indent \ve^*_t =& \arg \max_{\ve_t} 
        \quad \left(\vtheta_t^{*\top} W + \left(\sum_{i=1}^{T-t} \vtheta_{t+i}^{*\top}\right) W \Omega\right) \ve_t\\ 
    \{\vtheta^*_t\}_{t=1}^T =& \arg \max_{\vtheta_1, \ldots, \vtheta_T}
    \quad \left \| \Lambda \sum_{t=1}^T \arg \max_{\ve_t}
     \left(\vtheta_t^T W + \sum_{i=1}^{T-t} \vtheta_{t+i}^\top W \Omega\right) \ve_t \right \|_1\\ 
\end{aligned} 
\end{equation}
\normalsize
\end{proposition}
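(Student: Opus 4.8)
The plan is to derive the simplified equilibrium directly from Proposition~\ref{prop:eq} by substituting the linear conversion $\vsigma_W = W$ and then exploiting the algebraic structure this induces. First I would rewrite the agent's inner objective: replacing $\vsigma_W(\cdot)$ by $W(\cdot)$ and distributing $W$ across its argument gives
$$\sum_{t=1}^T \vtheta_t^\top W \mathbf{s}_0 \;+\; \sum_{t=1}^T \vtheta_t^\top W \ve_t \;+\; \sum_{t=1}^T \sum_{i=1}^{t-1}\vtheta_t^\top W \Omega \ve_i.$$
The first sum does not depend on any effort variable, so it is an additive constant that may be dropped from the $\arg\max$ without changing the agent's best response.

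The crux of the argument is to rearrange the double sum $\sum_{t=1}^T \sum_{i=1}^{t-1}\vtheta_t^\top W\Omega\,\ve_i$ so as to collect the coefficient multiplying each individual effort vector. For a fixed inner index $i$, the outer index runs over $t \in \{i+1,\dots,T\}$, so swapping the order of summation shows that $\ve_t$ is weighted by $\big(\sum_{t'=t+1}^T \vtheta_{t'}^\top\big)W\Omega = \big(\sum_{i=1}^{T-t}\vtheta_{t+i}^\top\big)W\Omega$. Adding the direct term $\vtheta_t^\top W\ve_t$, the agent's objective becomes the separable sum $\sum_{t=1}^T \big[\vtheta_t^\top W + \big(\sum_{i=1}^{T-t}\vtheta_{t+i}^\top\big)W\Omega\big]\ve_t$. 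I expect this index bookkeeping—getting the off-by-one in the accumulation $\Omega\sum_{i=1}^{t-1}\ve_i$ right, so that round-$t$ effort is reweighted only by the \emph{strictly future} assessment rules $\vtheta_{t+1},\dots,\vtheta_T$—to be the only delicate part of the proof.

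Finally I would argue that the maximization decouples across time-steps. Two facts combine: the objective just derived is additively separable, with its $t$-th summand depending only on $\ve_t$; and the feasible region is a Cartesian product of per-round budget simplices, since the constraints $e_{t,j}\ge 0$ and $\sum_j e_{t,j}\le 1$ link coordinates only within a single round $t$. A separable objective maximized over a product domain attains its maximum block-by-block, so the joint $\arg\max$ reduces to solving, independently for each $t$, the program $\ve_t^* = \arg\max_{\ve_t}\big[\vtheta_t^\top W + \big(\sum_{i=1}^{T-t}\vtheta_{t+i}^\top\big)W\Omega\big]\ve_t$, which is the first line of \eqref{eq:se-linear}. (The tie-breaking-in-favor-of-the-principal convention inherited from Proposition~\ref{prop:eq} merely selects among these per-round maximizers and does not disturb the separability.) Substituting these best responses into the principal's outer objective $\big\|\Lambda\sum_{t=1}^T \ve_t^*\big\|_1$ then yields the second line of \eqref{eq:se-linear}, completing the characterization.
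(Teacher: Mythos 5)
Your proposal is correct and follows essentially the same route as the paper's proof (Lemma \ref{lem:linear-actions} in Appendix \ref{sec:lin-eq-proof}): substitute $\vsigma_W = W$, drop the $\mathbf{s}_0$ terms, regroup the double sum so that $\ve_t$ is weighted by $\vtheta_t^\top W + \bigl(\sum_{i=1}^{T-t}\vtheta_{t+i}^\top\bigr)W\Omega$, exploit separability over the product of per-round budget constraints, and substitute the per-round best responses into the principal's objective. Your explicit justification of the decoupling (separable objective over a Cartesian-product feasible set) is a touch more careful than the paper's, but the argument is the same.
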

\emph{Proof Sketch.} We show that under linear effort conversion functions, the agent's best response problem is \emph{linearly seperable} across time, and the agent's effort profile at each time is given by a linear program. We then plug in each expression for the agent's optimal effort profile at time $t$ into the principal's optimization problem to obtain our final result. See Appendix \ref{sec:lin-eq-proof} for the full proof. 

Given the principal's assessment policy $\{\vtheta_t\}_{t=1}^T$, it is possible to recover the agent's optimal effort policy by solving the linear program for $\ve_t$ at each time $t$. On the other hand, recovering the principal's optimal assessment policy is more difficult. The principal's optimal policy takes the form of a multiobjective bilevel optimization problem, a class of problems which are NP-Hard in general \cite{colson2007overview}. However, we are able to exploit the following proposition to give a polynomial-time algorithm for recovering the principal's optimal assessment policy.

\begin{proposition}
The set of incentivizable effort policies is convex if the effort conversion function is linear.
\end{proposition}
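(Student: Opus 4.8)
The plan is to work from the linearly separable characterization of the agent's best response in Proposition~\ref{prop:eq-linear}. There, for a fixed assessment policy, the agent's effort at time $t$, $\ve_t$, maximizes a linear objective $\mathbf{g}_t \ve_t$ over the budget simplex $S = \{\ve \geq 0,\ \sum_j e_j \leq 1\}$, where $\mathbf{g}_t = \vtheta_t^\top W + \left(\sum_{i=1}^{T-t}\vtheta_{t+i}^\top\right) W\Omega$ is a nonnegative row vector determined by the policy. Hence an effort policy $\{\ve_t\}_{t=1}^T$ is incentivizable exactly when there exists an assessment policy $\{\vtheta_t\}_{t=1}^T$ (with each $\vtheta_t \in \Delta^n$) such that every $\ve_t$ maximizes $\mathbf{g}_t \ve_t$ over $S$. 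The first step is to turn ``being a maximizer over $S$'' into explicit inequalities: since $S = \mathrm{conv}\{\mathbf{0}, \ve^{(1)}, \ldots, \ve^{(d)}\}$ (the origin together with the unit vectors), $\ve_t$ maximizes $\mathbf{g}_t \ve_t$ iff $\mathbf{g}_t \ve_t \geq 0$ and $\mathbf{g}_t \ve_t \geq g_{t,j}$ for every coordinate $j$. For a \emph{fixed} target $\{\ve_t\}$ these are \emph{linear} constraints in the policy variables $\{\vtheta_t\}$ (because $\mathbf{g}_t$ is linear in $\{\vtheta_t\}$ and $\ve_t$ is fixed); their feasibility, together with $\vtheta_t \in \Delta^n$, is a linear program whose feasibility is precisely incentivizability. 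This LP also doubles as the membership oracle used in the subsequent algorithm.

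To prove convexity I would take two incentivizable policies $\{\ve_t\}$ and $\{\va_t\}$, with witnessing assessment policies $\{\vtheta_t\}$ and $\{\vtheta'_t\}$ respectively, and a mixing weight $\alpha \in [0,1]$, and exhibit an assessment policy under which $\{\alpha\ve_t + (1-\alpha)\va_t\}$ is a best response. The clean structural fact to lean on is that the set of maximizers of a fixed linear functional over $S$ is a \emph{face} of $S$, hence convex: so it suffices to produce, for each $t$, a single achievable objective $\mathbf{g}_t$ whose maximizing face contains both $\ve_t$ and $\va_t$, since the convex combination then lies in that same face and is itself a maximizer. When both policies spend the full budget at time $t$ we have $\sum_j (\alpha\ve_t + (1-\alpha)\va_t)_j = 1$, so the budget constraint is automatically met, and the task reduces to placing every coordinate in $\mathrm{supp}(\ve_t) \cup \mathrm{supp}(\va_t)$ at the top of $\mathbf{g}_t$ simultaneously.

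The main obstacle is exactly this last point: the arg-max map is not affine in the objective direction, so naively averaging the two witnesses (e.g.\ $\tfrac12(\vtheta_t + \vtheta'_t)$) need not keep all of the required coordinates tied at the top---a coordinate favored moderately by \emph{both} witnesses can overtake the targeted ones at the average. Resolving this is where the budget-simplex geometry, the nonnegative structure of $W$, and the coupling that a single $\vtheta$ must induce all the $\mathbf{g}_t$'s at once across time must be used. Concretely, I would search along the segment of achievable objectives between the two witnesses (which remains achievable by convexity of $\Delta^n$ and linearity of $\mathbf{g}_t$ in $\{\vtheta_t\}$) for a point at which the relevant coordinates become jointly maximal, via an intermediate-value argument on the coordinatewise gaps; the delicate part is ruling out a third coordinate dominating at the crossover, which is precisely where the effort-conversion structure must be invoked. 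As a remark consistent with the downstream optimization, I would also observe that the principal's objective $\left\|\Lambda\sum_t \ve_t\right\|_1$ is convex and is therefore maximized at an extreme point, so even an argument over the convex hull of the incentivizable set suffices for the algorithm, provided one checks the optimizer is itself incentivizable.
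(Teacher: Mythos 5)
Your reduction of incentivizability to the existence of a witness policy $\{\vtheta_t\}_{t=1}^T$ making each $\ve_t$ a maximizer of the linear functional $\mathbf{g}_t$ over the budget simplex is fine, and you have correctly located the crux: the set of effort policies admitting \emph{some} witness is a union of arg-max faces taken over all achievable objective directions, and a union of faces is not convex in general, so neither averaging the two witnesses nor interpolating between them obviously works. But you then leave exactly that crux unresolved. The intermediate-value search along the segment between the two witnesses fails as stated: if under $\vtheta_t$ the induced objective on three relevant coordinates is $(3,0,2)$ and under $\vtheta'_t$ it is $(0,3,2)$, then at the crossover where coordinates $1$ and $2$ tie, coordinate $3$ strictly dominates, and nothing you invoke (nonnegativity of $W$, $\vtheta_t \in \Delta^n$, the coupling across time) is actually shown to exclude such a configuration. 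Writing that ``this is precisely where the effort-conversion structure must be invoked'' names the gap without closing it, so the proof is incomplete at its only nontrivial step.

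The paper avoids constructing a witness for the midpoint altogether. It uses the dual characterization already established via Theorem \ref{thm:linear-opt}: an effort policy is incentivizable iff it is not dominated, i.e., iff the linear program $V(\{\ve_t\}_{t=1}^T)$ of Optimization \ref{eq:kappa} attains value $T$. Convexity is then argued by contradiction on that characterization: if the midpoint $\{\mathbf{z}_t\}_{t=1}^T$ of two incentivizable policies were dominated by some $\{\boldsymbol{\zeta}_t\}_{t=1}^T$ saving effort at a time $s$, then rescaling $\boldsymbol{\zeta}_s$ back to the full budget yields, by linearity of the feature map and monotonicity of $W$, feature values at time $s$ that weakly exceed (strictly in one coordinate) the average of the two endpoints' feature values, whence the rescaled policy dominates one of the two endpoints, contradicting their incentivizability. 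If you want to salvage your approach, the honest route is to first pass to the equivalence ``incentivizable $\Leftrightarrow$ not dominated'' and then argue on the domination side --- which is precisely the paper's proof, not a repair of your interpolation argument.
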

\textit{Proof Sketch.} In order to show that the set of incentivizable effort policies is convex, we assume that it is not and proceed via proof by contradiction. We construct an effort policy $\{\mathbf{z}_t \}_{t=1}^T$ by taking the element-wise average of two incentivizable effort policies $\{\mathbf{x}_t \}_{t=1}^T$ and $\{\mathbf{y}_t \}_{t=1}^T$, and assume it is not incentivizable. Since $\{\mathbf{z}^*_t \}_{t=1}^T$ is not incentivizable, there exists some effort policy $\{\boldsymbol{\zeta}^*_t \}_{t=1}^T$ which dominates it. We show that if this is the case, then $\{\boldsymbol{\zeta}^*_t \}_{t=1}^T$ must dominate either $\{\mathbf{x}_t \}_{t=1}^T$ or $\{\mathbf{y}_t \}_{t=1}^T$. This is a contradiction, since both are incentivizable. See Appendix \ref{sec:convex} for the full proof.

Note that the linear program from Theorem \ref{thm:linear-opt} can serve as a \emph{membership oracle} for this set. To see this, note that given an effort policy $\{\ve_t\}_{t=1}^T$, the LP returns a value of $T$ if and only if $\{\ve_t\}_{t=1}^T$ is incentivizable. We now show how to leverage this membership oracle to recover the principal's optimal assessment policy in polynomial time.

Define \texttt{CvxOracle}$(f, \mathcal{M}, R, r, \boldsymbol{\alpha}_0, \epsilon, \delta)$ to be the membership oracle method of \citet{kalai2006simulated}, which, for a convex set $\mathcal{C}$, takes a linear function $f$ over the convex set $\mathcal{C}$, membership oracle $\mathcal{M}$ to the convex set $\mathcal{C}$, initial point $\boldsymbol{\alpha}_0$ inside of $\mathcal{C}$, radius $R$ of a ball containing $\mathcal{C}$, and a radius $r$ of a ball contained in $\mathcal{C}$ and centered at $\boldsymbol{\alpha}_0$ as input, and returns a member of the convex set which minimizes $f$ up to some $\mathcal{O}(\epsilon)$ term, with probability at least $1 - \delta$. We now present an informal version of their main theorem, followed by our algorithm.
\begin{theorem}[Main Theorem of \citet{kalai2006simulated} (Informal)]
    For any convex set $\mathcal{C} \in \mathbb{R}^n$, given a membership oracle $\mathcal{M}$, starting point $\boldsymbol{\alpha}_0$, upper bound $R$ on the radius of the ball containing $\mathcal{C}$, and lower bound $r$ on the radius of the ball containing $\mathcal{C}$, the algorithm of \citet{kalai2006simulated} returns a point $\boldsymbol{\alpha}_I$ such that \small$f(\boldsymbol{\alpha}_I) - \min_{\boldsymbol{\alpha}^* \in \mathcal{C}} f(\boldsymbol{\alpha}^*) \leq \epsilon$ \normalsize
    with probability $1 - \delta$, where the number of iterations is $I = \mathcal{O}(\sqrt{n} \log(Rn/r \epsilon \delta))$, and  $\mathcal{O}(n^4)$ calls to the membership oracle are made at each iteration.
\end{theorem}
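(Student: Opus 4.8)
The plan is to recover the result through the simulated-annealing framework of \citet{kalai2006simulated}, reducing linear optimization over $\mathcal{C}$ to a sequence of sampling problems, each of which can be solved using only the membership oracle $\mathcal{M}$ (together with evaluations of the given linear function $f$). The guiding principle is that minimizing $f$ over $\mathcal{C}$ is, in the zero-temperature limit, equivalent to sampling from the Boltzmann distribution $\pi_T(\boldsymbol{\alpha}) \propto e^{-f(\boldsymbol{\alpha})/T}\,\mathbf{1}[\boldsymbol{\alpha} \in \mathcal{C}]$: as $T \to 0$ the mass of $\pi_T$ concentrates near $\arg\min_{\boldsymbol{\alpha} \in \mathcal{C}} f$. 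First I would make this quantitative, proving a concentration bound of the form $\mathbb{E}_{\boldsymbol{\alpha} \sim \pi_T}[f(\boldsymbol{\alpha})] - \min_{\boldsymbol{\alpha}^* \in \mathcal{C}} f(\boldsymbol{\alpha}^*) \le c\,nT$ for a universal constant $c$. This follows from the log-concavity of $\pi_T$ (a linear exponent restricted to a convex body is log-concave) via a standard moment estimate, and it shows that it suffices to drive the temperature down to $T_{\mathrm{final}} = \Theta(\epsilon/n)$ to guarantee an $\epsilon$-accurate value in expectation.

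Next I would set up the cooling schedule. Begin at a high temperature $T_0$, chosen large relative to the range of $f$ over $\mathcal{C}$ (which is controlled by $R$), so that $\pi_{T_0}$ is essentially uniform on $\mathcal{C}$ and can be sampled starting from $\boldsymbol{\alpha}_0$, which lies at the center of a radius-$r$ ball contained in $\mathcal{C}$. Then cool geometrically, $T_{k+1} = T_k\,(1 - 1/\sqrt{n})$, until reaching $T_{\mathrm{final}}$. The number of phases is $\mathcal{O}(\sqrt{n}\,\log(T_0/T_{\mathrm{final}})) = \mathcal{O}(\sqrt{n}\,\log(Rn/(r\epsilon)))$, and a union bound over the phases to achieve overall success probability $1-\delta$ supplies the remaining factor, yielding the stated count $I = \mathcal{O}(\sqrt{n}\,\log(Rn/(r\epsilon\delta)))$. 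The specific ratio $1 - 1/\sqrt{n}$ is what makes consecutive distributions close: the crucial lemma is that the second moment of the density ratio $d\pi_{T_{k+1}}/d\pi_{T_k}$ is bounded by a universal constant, so that a sample approximately distributed as $\pi_{T_k}$ is an $L_2$-\emph{warm start} for $\pi_{T_{k+1}}$.

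To produce each sample I would run a geometric random walk (hit-and-run, or the ball walk) whose transitions require only calls to $\mathcal{M}$ to test membership in $\mathcal{C}$ and evaluations of $f$. From an $L_2$-warm start, the rapid-mixing analysis of hit-and-run for log-concave densities (Lov\'asz and Vempala) guarantees mixing to the desired total-variation accuracy in $\mathcal{O}^{*}(n^{3})$ steps; drawing the $\mathcal{O}(n)$ samples per phase needed both to control the empirical value estimate and to seed the next phase's warm start therefore costs $\mathcal{O}^{*}(n^{4})$ oracle calls per phase, matching the per-iteration bound. At the final temperature I would return the empirical average (or the best observed point) as $\boldsymbol{\alpha}_I$, and combine the concentration bound with Markov's inequality and the accumulated sampling errors to conclude $f(\boldsymbol{\alpha}_I) - \min_{\boldsymbol{\alpha}^* \in \mathcal{C}} f(\boldsymbol{\alpha}^*) \le \epsilon$ with probability at least $1-\delta$.

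The main obstacle is the coupled analysis of warm-start preservation and rapid mixing. One must show simultaneously that (i) the isoperimetry and conductance of the restricted log-concave Boltzmann distributions yield polynomial mixing of the walk from a warm start, and (ii) the $L_2$-warmness of the start is \emph{preserved} across phases under geometric cooling. These requirements pull against each other---the schedule must cool slowly enough to keep each start warm, yet use only $\mathcal{O}(\sqrt{n}\log(\cdots))$ phases---and reconciling them is the technical heart of the argument. It rests on isoperimetric inequalities for log-concave measures and on careful control of the density-ratio variance, none of which uses any structure particular to the strategic-regression setting; here the theorem is invoked as a black box with $\mathcal{C}$ taken to be the (convex) set of incentivizable effort policies and $\mathcal{M}$ the membership oracle furnished by the linear program of Theorem~\ref{thm:linear-opt}.
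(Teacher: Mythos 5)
This statement is not proved in the paper at all: it is an informal restatement of the main result of \citet{kalai2006simulated}, imported as a black box so that Algorithm 1 can call \texttt{CvxOracle} on the convex set of incentivizable effort policies. There is therefore no in-paper proof to compare against; the right benchmark is the original simulated-annealing argument of Kalai and Vempala, and your sketch reconstructs that argument faithfully: the Boltzmann family $\pi_T \propto e^{-f/T}$ restricted to $\mathcal{C}$, the bound $\mathbb{E}_{\pi_T}[f] - \min_{\mathcal{C}} f = \mathcal{O}(nT)$ forcing $T_{\mathrm{final}} = \Theta(\epsilon/n)$, the geometric cooling ratio $1 - 1/\sqrt{n}$ justified by the constant bound on the second moment of the density ratio between consecutive phases ($L_2$-warmness), and hit-and-run mixing in $\mathcal{O}^{*}(n^3)$ steps from a warm start, giving $\mathcal{O}^{*}(n^4)$ oracle calls per phase over $\mathcal{O}(\sqrt{n}\log(Rn/r\epsilon\delta))$ phases. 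One small inaccuracy: in Kalai--Vempala the $\mathcal{O}^{*}(n)$ samples drawn per phase are needed primarily to re-estimate the covariance and keep the distribution near-isotropic (so that the mixing bound continues to apply), not mainly to control an empirical value estimate; the rounding/isotropy maintenance is part of the coupled analysis you correctly identify as the technical heart. That caveat aside, your proposal is a correct account of the cited proof, and its use here --- with $\mathcal{C}$ the set of incentivizable effort policies and $\mathcal{M}$ the linear program of Theorem~\ref{thm:linear-opt} --- matches exactly how the paper invokes it.
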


\small
\begin{algorithm}[H]\label{alg:main}
\SetAlgoLined
\KwResult{An assessment policy $\{ \vtheta_t^*\}_{t=1}^T$}
 \quad Define $\mathcal{C}$ to be the set of incentivizable effort policies\;
 \quad Let $f(\{\ve_t\}_{t=1}^T) = - \left\| \Lambda \sum_{t=1}^T \ve_t \right \|_1$, where $\{\ve_t\}_{t=1}^T$ is an incentivizable effort policy\;
 \quad Define $\mathcal{M}$ to be the linear program from Theorem \ref{thm:linear-opt}\;
 \quad Fix an arbitrary assessment policy $\{ \vtheta_{t,0}\}_{t=1}^T$. Solve for initial incentivizable effort policy $\{ \ve_{t,0} \}_{t=1}^T$ as in Proposition \ref{eq:se-linear}\;
 \quad Set $R = \sqrt{\frac{T(d-1)}{2(T(d-1)+1)}}$\;
 \quad $\{\ve_t^*\}_{t=1}^T = $ \texttt{CvxOracle}$(f, \mathcal{M}, R, r, \{ \ve_{t,0} \}_{t=1}^T, \epsilon, \delta)$\;
 \quad Set the primal variables of $\mathcal{M}$ equal to $\{\ve_t^*\}_{t=1}^T$, and solve a system of linear equations to recover the dual variables $\{\vtheta_t^*\}_{t=1}^T$\;
 \caption{Assessment Policy Recovery}
\end{algorithm}
\normalsize

\begin{theorem}[Optimal Assessment Policy]\label{thm:membership-oracle}
    Let $\mathcal{C}$ be the set of incentivizable effort policies. Assuming that $\mathcal{C}$ contains a ball with radius at least $r$ centered at $\{\ve_{t,0}\}_{t=1}^T$, the assessment policy $\{ \vtheta_t^*\}_{t=1}^T$ recovered by Algorithm \ref{alg:main} is an $\epsilon$-optimal assessment policy, with probability at least $1 - \delta$.
\end{theorem}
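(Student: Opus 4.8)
\emph{Proof proposal.} The plan is to show that Algorithm~\ref{alg:main} is a faithful instantiation of the Kalai--Vempala membership-oracle method applied to the principal's objective over the convex set $\mathcal{C}$ of incentivizable effort policies, and that the assessment policy recovered from the returned effort policy inherits the same $\epsilon$-optimality guarantee. Concretely, I would first verify that every hypothesis of the Kalai--Vempala theorem is met, invoke it as a black box to obtain an $\epsilon$-optimal \emph{effort} policy $\{\ve_t^*\}_{t=1}^T$ with probability at least $1-\delta$, and finally argue that the dual-recovered \emph{assessment} policy $\{\vtheta_t^*\}_{t=1}^T$ achieves the same objective value, making it $\epsilon$-optimal for the principal.

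The reduction requires checking four things. (i) $\mathcal{C}$ is convex, which is exactly the preceding proposition. (ii) The objective passed to \texttt{CvxOracle} must be \emph{linear} on $\mathcal{C}$; this is the key observation. Because the agent's budget constraints force $e_{t,j}\ge 0$ and $\Lambda$ is diagonal with nonnegative entries, every coordinate of $\Lambda\sum_t\ve_t$ is nonnegative, so $\bigl\|\Lambda\sum_{t=1}^T\ve_t\bigr\|_1=\sum_{j}\Lambda_{jj}\sum_{t=1}^T e_{t,j}$, and hence $f(\{\ve_t\}_{t=1}^T)=-\bigl\|\Lambda\sum_t\ve_t\bigr\|_1$ is a linear functional on $\mathcal{C}$; minimizing $f$ is therefore identical to maximizing the principal's utility. (iii) The membership oracle $\mathcal{M}$ must correctly decide $\mathcal{C}$: the linear program of Theorem~\ref{thm:linear-opt} returns the value $T$ exactly when the queried policy is non-dominated, i.e.\ incentivizable, so $\mathcal{M}$ is a correct (and polynomial-time) membership oracle. (iv) The geometric preconditions hold: since a non-dominated policy spends the full budget at every step, each $\ve_t$ lies in the probability simplex $\Delta^{d-1}$, so $\mathcal{C}$ is contained in a product of $T$ simplices, from which the algorithm derives the enclosing radius $R$; the inner-ball condition of radius $r$ about the starting point is the theorem's standing assumption, and the starting point $\{\ve_{t,0}\}_{t=1}^T$ is itself a best response to an arbitrary assessment policy computed as in Proposition~\ref{prop:eq-linear} and thus lies in $\mathcal{C}$.

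With these in place, Kalai--Vempala guarantees that \texttt{CvxOracle} returns $\{\ve_t^*\}_{t=1}^T\in\mathcal{C}$ with $f(\{\ve_t^*\})-\min_{\{\ve_t\}\in\mathcal{C}}f(\{\ve_t\})\le\epsilon$ with probability at least $1-\delta$, i.e.\ $\{\ve_t^*\}$ is within $\epsilon$ of the maximum principal utility over all incentivizable effort policies. I then need the bridge from effort-space to policy-space optimality: every assessment policy induces, through the agent's best response, an element of $\mathcal{C}$, and conversely Theorem~\ref{thm:linear-opt} shows every element of $\mathcal{C}$ is incentivized by some linear assessment policy, so $\max_{\{\vtheta_t\}}\|\Lambda\sum_t\ve_t^*(\vtheta)\|_1=\max_{\{\ve_t\}\in\mathcal{C}}\|\Lambda\sum_t\ve_t\|_1$. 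Finally, since $\{\ve_t^*\}$ is non-dominated, the algorithm fixes the primal variables of $\mathcal{M}$ to $\{\ve_t^*\}$ and reads off the dual variables $\{\vtheta_t^*\}$, which by Theorem~\ref{thm:linear-opt} form a linear assessment policy incentivizing $\{\ve_t^*\}$; invoking the standing assumption that the agent breaks ties in the principal's favor, the principal's realized utility under $\{\vtheta_t^*\}$ equals $\|\Lambda\sum_t\ve_t^*\|_1$, which is within $\epsilon$ of optimal. This yields $\epsilon$-optimality with probability $1-\delta$.

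The main obstacle I anticipate is not the invocation of Kalai--Vempala itself but the two places where the effort-space and policy-space formulations must be reconciled: first, justifying that the objective is genuinely linear (the nonnegativity of efforts is doing essential work, converting what was originally a non-convex maximization of an $\ell_1$ norm into something a linear-objective method can solve), and second, ensuring that the effort policy returned by the continuous optimizer is \emph{exactly} realizable by the recovered linear policy at the claimed value. The latter is delicate because incentivizability only guarantees that $\{\ve_t^*\}$ is \emph{a} best response, not the unique one, so the tie-breaking convention is essential to conclude that the principal actually collects $\|\Lambda\sum_t\ve_t^*\|_1$ rather than the value of some alternative best response; verifying the geometric containment constant $R$ and the correctness and polynomial runtime of the membership oracle are comparatively routine.
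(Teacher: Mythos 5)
Your proposal follows essentially the same route as the paper's proof: establish convexity of $\mathcal{C}$, observe that nonnegativity of efforts makes $f(\{\ve_t\}_{t=1}^T)=-\|\Lambda\sum_t\ve_t\|_1$ linear, use the linear program from Theorem~\ref{thm:linear-opt} as the membership oracle (value $T$ iff incentivizable), obtain the initial point from an arbitrary assessment policy, invoke \citet{kalai2006simulated}, and recover the assessment policy from the dual variables via complementary slackness. Your treatment is in fact slightly more careful than the paper's on the effort-space-to-policy-space bridge and the role of the tie-breaking convention, which the paper leaves implicit, but the argument is the same.
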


Before proceeding the proof sketch for Theorem \ref{thm:membership-oracle}, we remark that the assumption of $\mathcal{C}$ containing a ball of radius $r$ is commonplace within the membership oracle-based convex optimization literature, both in theory \citep{grotschel2012geometric, kalai2006simulated}, and practice (e.g., \cite{blum2014learning}). The assumption implies that if it is possible to incentivize an agent to play effort policy $\{ \ve_{t,0} \}_{t=1}^T$, then it is also possible to incentivize them to play other effort policies within a small margin of $\{ \ve_{t,0} \}_{t=1}^T$.

\emph{Proof Sketch.} 
The proof consists of several steps. First, note that the agent's effort policy consists of $T$ $d$-dimensional probability simplexes, which is a $T(d-1)$-dimensional simplex. The circumradius (i.e., the minimum radius of a ball containing the $T(d-1)$-dimensional simplex) is \small$R = \sqrt{\frac{T(d-1)}{2(T(d-1)+1)}}$\normalsize.
Next, we observe that we can use the linear program defined in the proof of Theorem \ref{thm:linear-opt} as a membership oracle to the set of incentivizable effort policies. Finally, we observe that the function we are trying to optimize is linear and that it is possible to identify an initial point $\{\ve_{t,0}\}_{t=1}^T$ within the convex set $\mathcal{C}$. We can then use a membership oracle-based convex optimization procedure such as \citet{kalai2006simulated} to recover the incentivizable effort policy which is most desirable to the principal (up to some $\mathcal{O}(\epsilon)$ term, with high probability) in polynomial time. Given this effort policy, we can use the complementary slackness conditions of our membership oracle linear program to recover the corresponding dual variables, a subset of which will correspond to an assessment policy which can incentivize the agent to play this effort policy. See Appendix \ref{sec:membership-proof} for full details. 

The existence of such a membership oracle-based method shows that tractable algorithms exist to recover the principal's optimal assessment policy, and heuristics need not be resorted to under a large class of settings, despite the bilevel multiobjective optimization problem which must be solved. 

\subsection{How many rounds are necessary to implement a desired effort profile?}
In the classroom example, we saw that a wider range of effort profiles can be incentivized by extending the fixed budget setting of \citeauthor{kleinberg2020classifiers} to multiple time-steps. But how long does the time horizon have to be in order to incentivize a desired effort profile if the principal can pick the time horizon? Additionally, what conditions are sufficient for an effort profile to be incentivizable? We formalize the notion of $(T,t)$-Implementability in the linear effort conversion function setting with these questions in mind.

\begin{definition}[$(T,t)$-Implementability]
A basis vector $\mathbf{b}_j$ is said to be $(T,t)$-implementable if a rational agent can be motivated to spend their entire effort budget on $\mathbf{b}_j$ for all times $1 \leq t' \leq t$.
\end{definition}

\begin{theorem} \label{th:T}
If \small$
    T \geq t + \max_c \min_m \frac{ \max \left\{0, W_{mc} - W_{mj} \right\} }{\left( \Omega_{jj} W_{mj} - \Omega_{cc} W_{mc} \right)}
$ \normalsize
and \small$\Omega_{jj}W_{mj} - \Omega_{cc} W_{mc} > 0$\normalsize, then $\mathbf{b}_j$ is $(T,t)$-implementable. 
\end{theorem}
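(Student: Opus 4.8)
The plan is to start from the linear-effort-conversion characterization of the agent's best response in Proposition \ref{prop:eq-linear}. There, the effort at time $t'$ solves a linear program over the simplex $\Delta^d$ with objective vector $\mathbf{c}_{t'}^\top = \vtheta_{t'}^\top W + \big(\sum_{i=1}^{T-t'}\vtheta_{t'+i}^\top\big)W\Omega$. Because the feasible region is the probability simplex and the objective is linear, any maximizer places all mass on a coordinate of maximal objective value; hence the agent is induced to play $\ve_{t'}^\ast=\mathbf{b}_j$ exactly when $c_{t',j}\ge c_{t',c}$ for every $c\neq j$ (the assumed tie-breaking in the principal's favor covers equalities). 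So $(T,t)$-implementability of $\mathbf{b}_j$ reduces to exhibiting an assessment policy $\{\vtheta_{t'}\}$ under which coordinate $j$ is a maximizer of $\mathbf{c}_{t'}$ simultaneously for all $1\le t'\le t$.

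Next I would compute these coefficients for a tractable candidate policy. Using that $\Omega$ is diagonal, the $c$-th coefficient unfolds into $c_{t',c}=\sum_m\theta_{t',m}W_{mc}+\Omega_{cc}\sum_{i=1}^{T-t'}\sum_m\theta_{t'+i,m}W_{mc}$. Taking the principal to commit to a single assessment feature $m$ at every round, i.e. $\vtheta_{t'}=\mathbf{b}_m$, collapses this to $c_{t',c}=W_{mc}\big(1+(T-t')\Omega_{cc}\big)$, so the domination requirement at time $t'$ becomes $W_{mj}\big(1+(T-t')\Omega_{jj}\big)\ge W_{mc}\big(1+(T-t')\Omega_{cc}\big)$ for all $c$. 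Writing $k=T-t'$ and rearranging gives $k\big(\Omega_{jj}W_{mj}-\Omega_{cc}W_{mc}\big)\ge W_{mc}-W_{mj}$. Here the role of the sign hypothesis $\Omega_{jj}W_{mj}-\Omega_{cc}W_{mc}>0$ becomes clear: it guarantees that $j$'s per-round accumulation advantage is strictly positive, so the inequality is eventually satisfied as the horizon grows, and dividing yields the per-pair threshold $k\ge \max\{0,\,W_{mc}-W_{mj}\}/(\Omega_{jj}W_{mj}-\Omega_{cc}W_{mc})$, where the $\max\{0,\cdot\}$ absorbs the case in which $j$ already weakly dominates $c$ in feature $m$.

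I would then argue that the binding constraint is the last relevant round. Since the threshold inequality tightens as $k$ shrinks and $k=T-t'$ is minimized at $t'=t$, verifying domination at $t'=t$ implies it for every earlier round; this is exactly what turns the per-round condition into the single condition $T-t\ge(\ldots)$, i.e. the additive $t+$ in the statement. Finally, to pass from a fixed feature to the claimed bound, the principal optimizes her choice: for each competing coordinate $c$ it suffices that \emph{some} feature $m$ render $j$ dominant over $c$ (the inner $\min_m$), while the policy must simultaneously dominate the \emph{worst} competitor (the outer $\max_c$), producing $T\ge t+\max_c\min_m(\ldots)$.

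The step I expect to be the main obstacle is precisely this last one: a single committed policy must dominate all competitors $c$ at once, whereas the inner $\min_m$ lets the favorable feature depend on $c$. Reconciling these requires more than the single-feature calculation — I would phrase the feasibility of the domination system as a two-player zero-sum (minimax) problem in which the principal mixes over assessment features and an adversary selects the competing coordinate, and invoke LP duality together with the special additive structure $c_{t',c}=u_c\big(1+(T-t')\Omega_{cc}\big)$, where $u=W^\top\vtheta$ ranges over the convex hull of the rows of $W$. Establishing that this effective reformulation attains the $\max_c\min_m$ threshold, and checking the nondegeneracy needed for the tie-breaking convention and the positivity hypothesis to apply uniformly over $c$, is the crux of the argument.
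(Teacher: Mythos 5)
Your derivation tracks the paper's own proof (Appendix \ref{sec:Tt}) almost exactly through its main body: the paper likewise starts from the fact that the agent concentrates his budget on the maximal coordinate of $\vtheta_{t'}^\top W + \bigl(\sum_{i}\vtheta_{t'+i}^\top\bigr)W\Omega$, restricts attention to a static policy, specializes to $\vtheta=\mathbf{b}_m$ so that the domination condition at round $t'$ reads $W_{mj}\bigl(1+(T-t')\Omega_{jj}\bigr)\geq W_{mc}\bigl(1+(T-t')\Omega_{cc}\bigr)$, and solves for $T-t$ using positivity of $\Omega_{jj}W_{mj}-\Omega_{cc}W_{mc}$, with the binding round being $t'=t$. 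Up to that point your argument is correct and is essentially the paper's argument.

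The gap is the final step, which you correctly flag as the crux but do not close, and the repair you sketch would not close it. The minimax/LP-duality route fails for a concrete reason: knowing that for each competitor $c$ there exists \emph{some} feature $m_c$ with $A_{c,m_c}:=(W_{m_c j}-W_{m_c c})+(T-t)\bigl(\Omega_{jj}W_{m_c j}-\Omega_{cc}W_{m_c c}\bigr)\geq 0$ does not imply that the zero-sum game with payoff matrix $A$ has nonnegative value; von Neumann's theorem requires a column (or mixture of columns) that is good against every \emph{mixture} over rows $c$, and per-row pure feasibility gives no control over mixtures (e.g.\ $A=\bigl(\begin{smallmatrix}1 & -10\\ -10 & 1\end{smallmatrix}\bigr)$ has a nonnegative entry in every row but strictly negative value). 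What a single committed feature actually delivers is the threshold $\min_m\max_c(\cdot)$, which in general exceeds the claimed $\max_c\min_m(\cdot)$. For what it is worth, the paper does not resolve this either: its proof fixes one index $m$ (chosen so that $\Omega_{jj}W_{mj}\geq\Omega_{cc}W_{mc}$ holds for all $c$) and then states the conclusion with $\min_m$ inside $\max_c$ without justifying the exchange of quantifiers; read strictly, the paper's argument also only establishes the $\min_m\max_c$ version. So your instinct about where the difficulty lies is sound, but to finish you would need either to justify the quantifier exchange using additional structure of $W$ and $\Omega$, or to state and prove the bound with $\min_m\max_c$ in place of $\max_c\min_m$.
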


\begin{figure}
    \centering
    \floatbox[{\capbeside\thisfloatsetup{capbesideposition={right, center},capbesidewidth=0.44\textwidth}}]{figure}[\FBwidth]
    {\caption{Contribution of studying vs $T-t$. We plot $\Omega_{S}$ (x-axis) vs $T-t$  (y-axis) for our classroom example in Appendix \ref{sec:example}. Note that $\Omega_{S}$ is assumed to be $1$ in the appendix. Lighter colors indicate settings in which the student has more incentive to cheat. As long as $\Omega_S > 0$, there exists \emph{some} time horizon under which the student is incentivized to study. As $\Omega_S$ increases, the number of extra time-steps required to incentivize studying decreases.}}
    {\includegraphics[width=0.44\textwidth, height=\textheight,keepaspectratio]{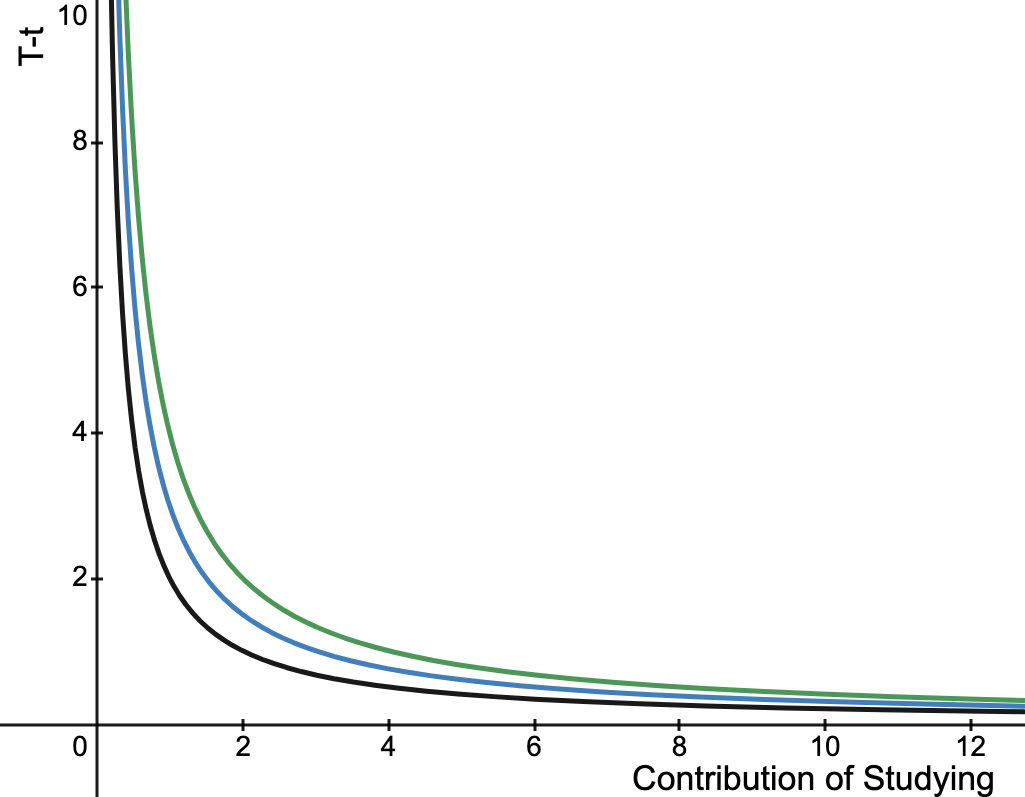}\label{fig:Tt}}
\end{figure}

See Appendix \ref{sec:Tt} for the full derivation.
This bound shows that \emph{any} basis vector is incentivizable if it accumulates faster than other effort profiles. In the worst case, the space of incentivizable effort profiles is the same as in \citeauthor{kleinberg2020classifiers} (just set $T=1$). However, if an effort component accumlates faster than other effort components, there will always exist a time horizon $T$ for which it can be incentivized. In our classroom example, as long as the student retains \emph{some} knowledge from studying, there always will exist a time horizon for which it is possible to incentivize the student to study (see Figure \ref{fig:Tt}). Note that while the principal may be interested in incentivizing more than just basis vectors, there does not appear to be a closed-form lower bound for $T$ for non-basis effort profiles.
\section{Concluding discussion} \label{sec:conclusion}

We proposed a simple and tractable model in which a principal assesses an agent over a series of timesteps to steer him in the direction of investment in desirable but unobservable types of activities. Our work addresses three crucial gaps in the existing literature, stemming from restricted focus on (1) \emph{short-term} interactions, (2) with \emph{myopic} agents, (3) ignoring the role of earlier effort investments (i.e., the \emph{state}) on future rewards. We observe that within our stateful strategic regression setting, the principal is capable of implementing a more expansive space of average effort investments. Our main results consisted of algorithms for computing the equilibrium of the principal-agent interactions, and characterizing several interesting properties of the equilibrium.
There are several natural extensions and directions for future work suggested by our basic model and findings.

\xhdr{Alternative cost functions} Following \citet{kleinberg2020classifiers}, we assumed throughout our analysis that the agent has a \emph{fixed effort budget} in each round. One natural extension of our model is to explore alternative cost formulations for the agent. In Appendix \ref{sec:quad}, we provide the analysis for one natural alternative---that is, a cost term which scales \emph{quadratically} with the total effort expended. Our findings generally remain unaltered. The main qualitative difference between the equilibria of the fixed budget vs. quadratic cost is the following: While under the fixed budget setting, the agent's optimal effort policy is a sequence of basis vectors and the principal's optimal assessment policy generally is not, we find that the opposite is true under the quadratic cost setting. We believe the case-study of quadratic costs provides reassuring evidence for the robustness of our results to the choice of the cost function, however, we leave a more systematic study of equilibrium sensitivity to agent cost function as an interesting direction for future work.

\xhdr{Bounded rationality} While we assumed the principal and the agent in our model respond rationally and optimally to each other's strategies, in real-world scenarios, people and institutions are often not fully rational. Therefore, it would be interesting to consider models where our players' rationality is bounded, e.g., by designing assessment policies that are robust to suboptimal effort policies and are capable of implementing desired investments despite the agent's bounded rationality. 

\xhdr{Unknown model parameters \& learning} We assumed the fundamental parameters of our model (e.g., $\vsigma_W, \Omega, \Lambda$ and $T$) are public knowledge. It would be interesting to extend our work to settings where not all these parameters are known. Can we design learning algorithms that allow the players to learn their optimal policy over time as they interact with their counterparts? 

\xhdr{Other simplifying assumptions} Finally, we made several simplifying assumptions to gain the insights offered by our analysis. In particular, our algorithms for recovering the optimal principal and agent policies relied on the agent having a \emph{linear} effort conversion function. It would be interesting to explore algorithms which work for a wider range of effort conversion functions. Additionally, we assumed that effort expended towards some action was time-independent (e.g., one hour spent studying today is equivalent to one hour spent studying yesterday). It would be interesting to relax this assumption and study settings in which the accumulation of effort is subjected to a discount factor.
\section{Acknowledgements}
This research is supported in part by the NSF FAI Award \#1939606. The authors would like to thank Anupam Gupta and Gabriele Farina for helpful discussions about convex optimization techniques, and Gokul Swamy for helpful comments and suggestions.

\pagebreak
\bibliographystyle{abbrvnat}
\bibliography{bibliography}

\pagebreak
\appendix
\section{Formalizing the classroom example}\label{sec:example}

\begin{example}
We demonstrate this by revisiting the classroom example. Recall that  a teacher assigns a student an overall grade $y = \theta_{TE} TE + \theta_{HW} HW$, where $TE$ is the student's test score $HW$ is their homework score, and $\theta_{TE} \; \& \; \theta_{HW}$ are the weight of each score in the student's overall grade. The student can invest effort into any of three activities: copying answers on the test ($CT$, improves test score), studying ($S$, improves both test and homework score), and looking up homework answers online ($CH$, improves homework score). Suppose the relationship between observable features and effort $\ve$ the agent chooses to spend is defined by the equations \begin{equation*}
    TE = TE_0 + W_{CT} CT + W_{ST} S
\end{equation*}
\begin{equation*}
    HW = HW_0 + W_{SH} S + W_{CH} CH
\end{equation*}
where $TE_0$ and $HW_0$ are the test and homework scores the student would receive if they did not expend any effort. If $W_{CT} = W_{CH} = 3$ and $W_{ST} = W_{SH} = 1$, there is no combination of $\theta_{TE}, \theta_{HW}$ values the teacher can deploy to incentivize the student to study, because the benefit of cheating is just too great. (See \cite{kleinberg2020classifiers} for more detail.)

Now consider a multi-step interaction between a teacher and student in which effort invested in studying carries over to future time-steps in the form of knowledge accumulation. The relationships between observable features and effort expended are now defined as 
\begin{equation*}
    TE_t = TE_0 + W_{CT} CT_t + W_{ST} s_t 
\end{equation*}
and
\begin{equation*}
    HW_t = HW_0 + W_{SH} s_t + W_{CH} CH_t
\end{equation*}
where $s_t = \sum_{i=1}^t S_i$ is the agent's internal \textit{knowledge state}.
Instead of assigning students a single score $y_1$, the teacher assigns the student a score $y_t$ at each round by picking $\theta_{t,T}, \theta_{t,HW}$ at every time-step. The student's grade is then the summation of all scores across time. Suppose $T \geq 3$, where $T$ is the number of rounds of interaction. Consider $W_{CT} = W_{CH} = 3$, $W_{ST} = W_{SH} = 1$, and $TE_0 = HW_0 = 0$. Unlike in the single-round setting, it is easy to verify that students can now be incentivized to study by picking $\theta_{t,TE} = \theta_{t,HW} = 0.5$ $\forall t$.

\begin{figure}
    \begin{subfigure}[b]{0.49\textwidth}
        \centering
        \includegraphics[width=\textwidth, height=\textheight,keepaspectratio]{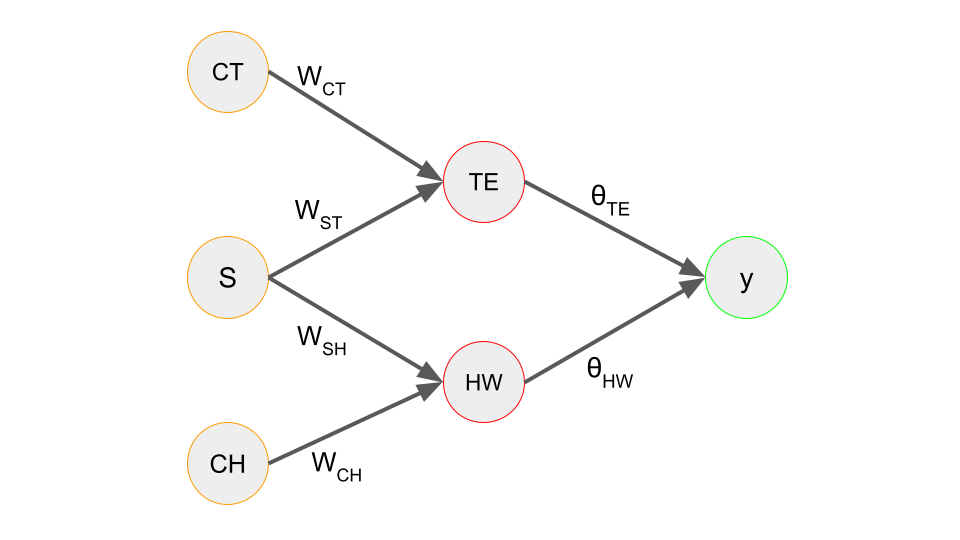}
        \caption{Single-step classroom setting.}
        \label{fig:ex1}
    \end{subfigure}
    \hfill
    \begin{subfigure}[b]{0.49\textwidth}
        \centering
        \includegraphics[width=\textwidth, height=\textheight,keepaspectratio]{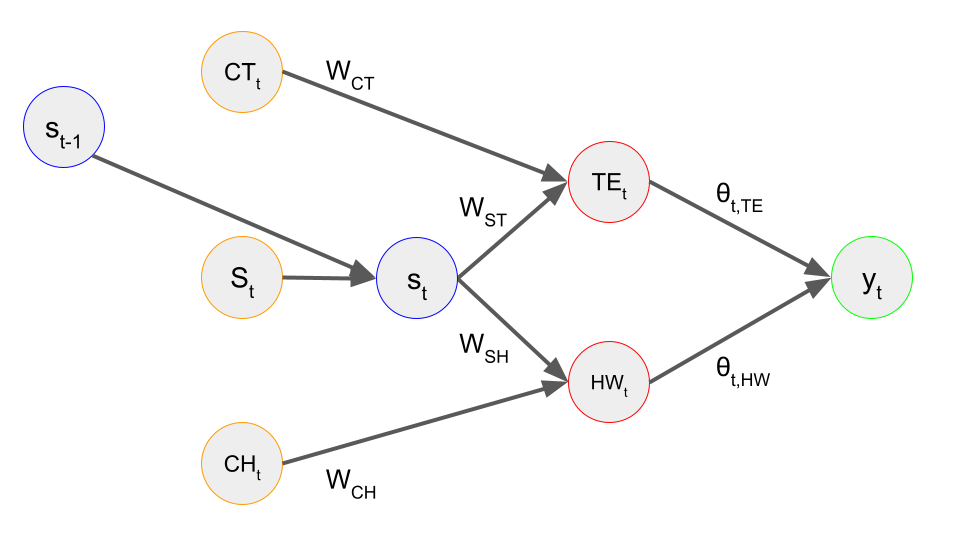}
        \caption{Multi-step classroom setting.}
        \label{fig:ex2}
    \end{subfigure}
    \caption{Comparison between the single-step and multi-step scenarios in the hypothetical classroom setting. The single-step formulation does not account for changes in the student's internal state over time. In the multi-step formulation, effort put towards studying accumulates in the form of knowledge. Modeling this effort accumulation allows the teacher to incentivize the student to study across a wider range of parameter values. The agent can invest effort in 3 actions: cheating on the test (CT), studying (S), and cheating on the homework (CH). $W$ values denote how much one unit of effort translates to the two observable features, test score (T) and homework score (HW). The student's score ($y_t$) at each time-step is a weighted average of these two observable features. In the multi-step setting, $s_t$ denotes the student's internal knowledge state at time $t$.}
\end{figure}
\end{example}
\section{Equilibrium derivations}
\subsection{Agent's best-response effort sequence}
A rational agent solves the following optimization to determine his best-response effort policy:
\begin{equation*} 
\begin{aligned}
    \{\ve^*_t\}_{t=1}^T = \arg \max_{\ve_1, ..., \ve_T} 
    \quad & \sum_{t=1}^T \left(y_t = f_t(\ve_1, \ldots, \ve_t) \right)\\ 
    \text{s.t. } \quad & e_{t,j} \geq 0 \; \forall t,j,
    \quad \sum_{j=1}^d e_{t,j} \leq 1 \; \forall t
\end{aligned} 
\end{equation*}
Recall that the agent's score $y_t$ at each time-step is a function of $(\ve_1, \ldots, \ve_t)$, the sequence of effort expended by the agent so far.
Replacing the score $y_t$ and observable features $\mathbf{o}_t$ with their respective equations, we obtain the expression
\begin{equation*} \begin{aligned}
    \{\ve^*_t\}_{t=1}^T = \arg \max_{\ve_1, ..., \ve_T}
    \quad & \sum_{t=1}^T \vtheta_t^\top \vsigma_W \left(\mathbf{s}_t + \ve_t \right)\\
    \text{s.t. } \quad & e_{t,j} \geq 0 \; \forall t,j,
    \quad \sum_{j=1}^d e_{t,j} \leq 1 \; \forall t
\end{aligned} \end{equation*}

where the agent's internal state $\mathbf{s}_t$ at time $t$ is a function of the effort he expends from time $1$ to time $t-1$. Replacing $\mathbf{s}_t$ with the expression for agent state, we get

\begin{equation*} \begin{aligned}
    \{\ve^*_t\}_{t=1}^T = \arg \max_{\ve_1, ..., \ve_T} 
    \quad & \sum_{t=1}^T \vtheta_t^\top \vsigma_W  \left(\mathbf{s}_0 + \Omega \sum_{i=1}^{t-1} \ve_i + \ve_t \right)\\ 
    \text{s.t. } \quad & e_{t,j} \geq 0 \; \forall t,j,
    \quad \sum_{j=1}^d e_{t,j} \leq 1 \; \forall t
\end{aligned} \end{equation*}
\section{Proof of Theorem \ref{thm:linear-opt}}\label{sec:linear-opt-proof}

\begin{proof}
Let $\kappa$ be the optimal value of the following linear program:

\begin{equation} \label{eq:kappa}
\begin{aligned} 
    V(\{\ve_t\}_{t=1}^T) = \min_{\mathbf{a}_1, \mathbf{a}_2, \ldots, \mathbf{a}_T} 
    \quad & \sum_{t=1}^T \| \mathbf{a}_t \|_1 \\ 
    \text{s.t. } 
    \quad & W \left(\Omega \sum_{i=1}^{t-1} \mathbf{a}_i + \mathbf{a}_t \right) \geq W \left(\Omega \sum_{i=1}^{t-1} \ve_i + \ve_t \right), 
    \; \mathbf{a}_t \geq \mathbf{0}_d, 
    \; \| \mathbf{a}_t \|_1 \leq 1, \; \forall t
\end{aligned}
\end{equation}

Optimization \ref{eq:kappa} can be thought of as trying to minimize the total effort $\{\mathbf{a}_t\}_{t=1}^T$ the agent spends across all $T$ time-steps, while achieving the same or greater feature values at every time $t$ compared to $\{\ve_t\}_{t=1}^T$. Let $\{\mathbf{a}^*_t\}_{t=1}^T$ denote the set of optimal effort profiles for Optimization \ref{eq:kappa}. If $\{\ve_t\}_{t=1}^T \in \{\mathbf{a}^*_t\}_{t=1}^T$, a value of $\kappa = T$ is obtained. 
A dominated effort policy is formally defined as follows:
\begin{lemma}[Dominated Effort Policy] \label{lem:dom}
An effort policy $\{\ve_t\}_{t=1}^T$ is \textit{dominated by} another effort policy if $\kappa < T$.
\end{lemma}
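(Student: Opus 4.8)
The plan is to prove Lemma \ref{lem:dom} as a two-sided equivalence between the combinatorial notion of domination (from the Definition of a Dominated Effort Policy) and the algebraic condition $\kappa < T$ on the optimal value of Optimization \ref{eq:kappa}. The whole argument hinges on the elementary observation that a sum of $T$ quantities, each constrained to lie in $[0,1]$ by the per-round budget constraints $\|\va_t\|_1 \le 1$, equals $T$ if and only if every term equals $1$; equivalently, the sum is strictly below $T$ if and only if at least one round leaves budget unspent.

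First I would record that $\kappa \le T$ always holds: the policy $\{\ve_t\}_{t=1}^T$ is itself feasible for Optimization \ref{eq:kappa} (taking $\va_t = \ve_t$ meets every feature inequality with equality and respects $\va_t \ge \mathbf{0}_d$ and $\|\va_t\|_1 \le 1$), so its objective value $\sum_{t=1}^T \|\ve_t\|_1 \le T$ upper-bounds $\kappa$, and $\kappa = T$ forces every optimizer to spend the full budget at every step. Next, for the forward direction, I would suppose $\kappa < T$ and let $\{\va^*_t\}_{t=1}^T$ be an optimizer. Its feature constraints say precisely that $\{\va^*_t\}$ produces weakly larger observable features than $\{\ve_t\}$ at each time step (the common additive term $W\vs_0$ cancels on both sides of each inequality), and since $\sum_{t=1}^T \|\va^*_t\|_1 = \kappa < T$ with each summand at most $1$, some round $t$ must have $\|\va^*_t\|_1 < 1$. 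Thus $\{\va^*_t\}$ is exactly a witness of domination in the sense of the Definition.

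For the converse I would start from a dominating policy $\{\va_t\}_{t=1}^T$ guaranteed by the Definition: it is a valid effort policy (hence feasible for Optimization \ref{eq:kappa}), it achieves weakly larger features at every round, and it leaves the budget unspent on at least one round while obeying $\|\va_t\|_1 \le 1$ everywhere. Its objective value is therefore strictly below $T$, and by optimality $\kappa \le \sum_{t=1}^T \|\va_t\|_1 < T$. Combining the two directions yields the stated characterization.

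I expect the main obstacle to be conceptual rather than computational: making the equivalence airtight requires verifying that the LP's feature inequalities capture the ``same or higher observable feature values'' clause of the Definition exactly, and that ``objective strictly below $T$'' is the faithful algebraic encoding of ``does not spend the full effort budget on at least one time-step.'' The first point leans on the monotone structure of the conversion, so that dominating the transformed features is equivalent to dominating the pre-transformation arguments; the second is the $[0,1]$-summation fact noted above. Once these two correspondences are pinned down, both implications follow immediately.
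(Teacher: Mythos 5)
Your proof is correct and matches the paper's reading: the paper states this lemma without a separate proof, treating it as the immediate LP reformulation of the definition of domination, and your two-directional verification (feasibility of $\{\ve_t\}_{t=1}^T$ gives $\kappa \le T$; an optimizer with objective below $T$ weakly dominates the features while leaving budget unspent at some round; conversely any dominating policy is feasible with objective strictly below $T$) is exactly the intended argument. No gaps.
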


The Lagrangian of Optimization \ref{eq:kappa} can be written as 
\begin{equation*}
\begin{aligned}
    L = \quad & \sum_{t=1}^T \| \mathbf{a}_t \|_1 + 
        \sum_{t=1}^T \vlambda_t^\top W \left(\Omega \sum_{i=1}^{t-1} (\ve_i - \va_i) + \ve_t - \va_t \right) 
        + \gamma_t \left(\| \mathbf{a}_t \|_1 - 1 \right) - \vmu^\top_t \va_t, \\ 
    \text{where} 
    \quad & \vlambda_t \geq \mathbf{0}_n, \; 
    \vmu_t \geq \mathbf{0}_d, \; \forall t
\end{aligned}
\end{equation*}

In order for stationarity to hold, $\nabla_{\va_t} L(\va^*, \vlambda^*, \vmu^*, \boldsymbol{\gamma}^*) = \mathbf{0}_d$ $\forall t$, where $\mathbf{x}^*$ denotes the optimal values for variable $\mathbf{x}$. Applying the stationarity condition to Lagrangian function, we obtain

\begin{equation} \label{eq:stationary1}
    \mathbf{1}_d - W^\top \vlambda_t^* - \sum_{i=t+1}^T \Omega^\top W^\top \vlambda_i^* + \gamma_t^* \cdot \mathbf{1}_d - \vmu_t^* = \mathbf{0}_d, \; \forall t
\end{equation}

Because of dual feasibility, $\vmu_t \geq \mathbf{0}_d \; \forall t$. By rearranging Equation \ref{eq:stationary1} and using this fact, we can obtain the following bound on $W^\top \vlambda_t^* + \sum_{t=i+1}^T \Omega^\top W^\top \vlambda_t^*$:

\begin{equation} \label{eq:stationary2}
    W^\top \vlambda_t^* + \sum_{i=t+1}^T \Omega^\top W^\top \vlambda_i^* \leq (1 + \gamma_t^*) \cdot \mathbf{1}_d, \; \forall t
\end{equation}

Next we look at the complementary slackness condition. For complementary slackness to hold, $\vmu_t^{*\top} \va_t^* = 0 \; \forall t$. If $\kappa = T$, then $\{\ve_t\}_{t=1}^T \in \{\va_t^*\}_{t=1}^T$ and therefore $\{\ve_t\}_{t=1}^T$ is not dominated. If $\{\ve_t\}_{t=1}^T$ is not dominated, $\vmu_t^{*\top} \ve_t = 0 \; \forall t$. This means that if $e_{t,j} > 0$, $\mu_{t,j} = 0$, $\forall t, j$. This, along with Equation \ref{eq:stationary1}, implies that 

\begin{equation*}
    \left[ W^\top \vlambda_t^* + \sum_{i=t+1}^T \Omega^\top W^\top \vlambda^*_i \right]_j = 1 + \gamma_t^* 
\end{equation*}

for all $t, j$ where $e_{t,j} > 0$.

Switching gears, consider the set of \emph{linear} assessment policies $\mathcal{L}$ for which $\{\ve_t\}_{t=1}^T$ is incentivizable. The set of linear assessment policies for which $\{\ve_t\}_{t=1}^T$ is incentivizable is the set of linear assessment policies for which the derivative of the total score with respect to the agent's effort policy is maximal at the coordinates which $\{\ve_t\}_{t=1}^T$ has support on. Denote this set of coordinates as $S$, and the set of coordinates which $\ve_t$ has support on as $S_t$. Formally,

\begin{equation*}
    \mathcal{L} = \left \{ \{\vtheta_t\}_{t=1}^T \Biggr\rvert  \left[ \nabla_{\va_t} \sum_{i=1}^T \left(y_i = f\left(\{\va_t\}_{t=1}^T, \{\vtheta_t\}_{t=1}^T \right) \right) \right]_{S_t}
    =  
    \max_j \left( \nabla_{\va_t} \sum_{i=1}^T y_i \right) \cdot \mathbf{1}_{|S_t|}, \; \forall t \right \}
\end{equation*}

Recall that $\sum_{t=1}^T y_t = \sum_{t=1}^T \vtheta_t^\top W \left(\mathbf{s}_0 + \Omega \sum_{i=1}^{t-1} \va_i + \va_t \right)$. Therefore, the gradient of $\sum_{t=1}^T y_t$ with respect to $\va_t$ can be written as

\begin{equation*}
    \nabla_{\va_t} \sum_{t=1}^T y_t = W^\top \vtheta_t + \sum_{i=t+1}^T \Omega^\top W^\top \vtheta_i, \; \forall t
\end{equation*}

Note that the form of $\nabla_{\va_t} \sum_{t=1}^T y_t$ is the same as the LHS of Equation \ref{eq:stationary2}. We know that if $\{\ve_t\}_{t=1}^T \in \{\va^*_t\}_{t=1}^T$ is incentivizable, the inequality in Equation \ref{eq:stationary2} will hold with equality for all coordinates for which $\{\ve_t\}_{t=1}^T$ has positive support. Therefore, the derivative is maximal at those coordinates since it is bounded to be \emph{at most} $1 + \gamma_t^*$, $\forall t$ (due to the KKT conditions for the dominated effort policy linear program). Because of this, $\{\vlambda_t^*\}_{t=1}^T$ is in $\mathcal{L}$, which means that $\{\ve_t\}_{t=1}^T$ can be incentivized using a linear mechanism.

\end{proof}
\section{Equilibrium characterization for fixed budget setting}\label{sec:lin-eq-proof}

\subsection{Agent effort policy}
\begin{lemma}\label{lem:linear-actions} Under linear assessment policy $\{\theta_1, \ldots , \theta_T\}$, a budget constrained agent will play an effort profile from the following set at round $t$:
\begin{equation*}
\begin{aligned} 
    \ve^*_t = \arg \max_{\ve_t} 
    \quad & \left(\vtheta_t^\top W + \left(\sum_{i=1}^{T-t} \vtheta_{t+i}^\top\right) W \Omega\right) \ve_t\\ 
    \text{s.t. } \quad & e_{t,j} \geq 0, \; \sum_{j=1}^T e_{t,j} \leq 1 \; \forall j
\end{aligned}
\end{equation*}
\end{lemma}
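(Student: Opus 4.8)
The plan is to show that under the linear effort conversion $\vsigma_W = W$, the agent's joint best-response problem separates across time into $T$ independent linear programs, each of exactly the stated form. I would begin from the agent's best-response objective derived earlier, namely $\sum_{t=1}^T \vtheta_t^\top \vsigma_W\!\left(\mathbf{s}_0 + \Omega \sum_{i=1}^{t-1}\ve_i + \ve_t\right)$, and substitute $\vsigma_W = W$. Because $W$ acts linearly, the objective expands as $\sum_{t=1}^T \vtheta_t^\top W \mathbf{s}_0 + \sum_{t=1}^T \vtheta_t^\top W \ve_t + \sum_{t=1}^T \vtheta_t^\top W \Omega \sum_{i=1}^{t-1}\ve_i$. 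The first term does not depend on the agent's chosen effort and is therefore a constant that drops out of the $\arg\max$.

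The crux of the argument is reorganizing the double-sum rollover term so that each effort vector $\ve_t$ appears with a single coefficient. The key observation is that $\ve_i$ enters the score at every future round $t > i$ through its state contribution $\Omega \ve_i$, so collecting the coefficient of a fixed $\ve_t$ aggregates the assessment parameters of all subsequent rounds. Swapping the order of summation gives
\begin{equation*}
\sum_{t=1}^T \vtheta_t^\top W \Omega \sum_{i=1}^{t-1}\ve_i = \sum_{t=1}^T \left(\sum_{i=1}^{T-t}\vtheta_{t+i}^\top\right) W \Omega\, \ve_t,
\end{equation*}
where the outer index can be extended to $T$ since the inner sum is empty at $t=T$. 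Combining this with the own-round term $\vtheta_t^\top W \ve_t$, the objective rewrites as $\sum_{t=1}^T \left(\vtheta_t^\top W + \left(\sum_{i=1}^{T-t}\vtheta_{t+i}^\top\right) W \Omega\right)\ve_t$, a sum whose $t$-th summand depends only on $\ve_t$.

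Finally, I would note that the feasible region is a product set: the constraints $e_{t,j}\ge 0$ and $\sum_{j} e_{t,j}\le 1$ couple only the coordinates within a single round and impose no relation across rounds. A separable objective maximized over a product of feasible regions is maximized block-by-block, so the joint maximizer is recovered by independently solving the per-round linear program in the lemma statement, yielding the claimed $\ve_t^*$. The only non-routine step is the index swap in the rollover term: getting the summation limits right so that the coefficient of $\ve_t$ collects precisely the future parameters $\vtheta_{t+1},\dots,\vtheta_T$ (equivalently $\sum_{i=1}^{T-t}\vtheta_{t+i}$). Everything else—dropping the state constant and invoking separability of both the objective and the constraint set—is mechanical once this reindexing is established.
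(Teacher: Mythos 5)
Your proposal is correct and follows essentially the same route as the paper's proof: substitute the linear conversion, drop the $\mathbf{s}_0$ terms, regroup the rollover contributions so that each $\ve_t$ carries the coefficient $\vtheta_t^\top W + \bigl(\sum_{i=1}^{T-t}\vtheta_{t+i}^\top\bigr)W\Omega$, and invoke separability of the objective and the per-round constraints. Your explicit summation swap is just a cleaner statement of the paper's ``expand and factor based on $\ve_t$'' step.
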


\begin{proof}
The agent's score at each time $y_t$ is a function of $(\ve_1, \ldots, \ve_t)$. We can replace $y_t$, $\mathbf{o}_t$, and $\mathbf{s}_t$ with their respective equations to get an expression for the agent's optimal effort policy $\{\ve^*_t\}_{t=1}^T$ that depends on just $\{\vtheta_t\}_{t=1}^T$, $s_0$, $W$, and $\Omega$:

\begin{equation*} \begin{aligned}
    \{\ve^*_t\}_{t=1}^T = \arg \max_{\ve_1, \ldots, \ve_T}
    \quad & \sum_{t=1}^T \vtheta_t^\top W\left(\mathbf{s}_0 + \Omega \sum_{i=1}^{t-1} \ve_i + \ve_t\right)\\ 
    \text{s.t. } \quad & e_{t,j} \geq 0, \; \sum_{j=1}^T e_{t,j} \leq 1 \; \forall t,j
\end{aligned} \end{equation*}

After expanding the outer sum over the principal assessment rules $\{\vtheta_t\}_{t=1}^T$, factoring based on the agent's effort at each $t$, and dropping the initial state terms (as they don't depend on $\{\ve_1, \ldots, \ve_T\}$), we get 

\begin{equation} \label{eq:effort-expanded-lc}
\begin{aligned} 
    \{\ve^*_t\}_{t=1}^T = \arg \max_{\ve_1, \ldots, \ve_T}
    \quad & \left(\vtheta_1^\top W + \left(\sum_{i=1}^{T-1} \vtheta_{i+1}^\top\right) W \Omega\right) \ve_1 + 
    \left(\vtheta_2^\top W + \left(\sum_{i=1}^{T-2} \vtheta_{i+2}^\top\right) W \Omega\right) \ve_2 +
    \ldots + 
    \vtheta_T^\top W  \ve_T \\ 
    \text{s.t. } \quad & e_{t,j} \geq 0, \; \sum_{j=1}^T e_{t,j} \leq 1 \; \forall t,j
\end{aligned}
\end{equation}

Note that the optimization step in  \eqref{eq:effort-expanded-lc} is linear in the agent effort policy and can be split into $T$ separate optimization problems, one for each $\ve_t$. Thus, the agent can optimize each effort profile $\ve_t$ separately by breaking the objective into $T$ parts, each of which is given by the optimization in Lemma \ref{lem:linear-actions}.
\end{proof}

Since the above objective function is linear in $\ve_t$, the optimal solution for the agent consists of putting his entire effort budget on the highest-coefficient element of $\vtheta_t^\top W + \left(\sum_{i=1}^{T-t} \vtheta_{t+i}^\top\right) W \Omega$. In the classroom setting (Example \ref{running}), this corresponds to a situation in which the student \emph{only cheats} or \emph{only studies} during each evaluation period. More precisely, let $m$ denote the maximal element(s) of $\vtheta_t^T W + \sum_{i=1}^{T-t} \vtheta_{t+i}^\top W \Omega$. We then characterize the set of optimal agent effort profiles at each time-step as $\ve_t^* = \mathbbm{1}\{ j=m \} \; (1 \leq j \leq d)$. We assume that agents are rational and therefore play an effort policy $\{\ve_t\}_{t=1}^T \in \{\ve^*_t\}_{t=1}^T$.

\subsection{Principal assessment policy} \label{sec:pdr-linear}
The goal of the principal is to pick an assessment policy $\{\vtheta\}_{t=1}^T$ in order to maximize the total magnitude of the agent's cumulative effort in desirable directions (parameterized by $\Lambda$), subject to the constraint that $\vtheta_t$ lie in the $n$-dimensional probability simplex, i.e. 

\begin{equation} \label{eq:theta1-lb}
\begin{aligned} 
    \{\vtheta^*_t\}_{t=1}^T = \arg \max_{\vtheta_1, \ldots, \vtheta_T}
    \quad & \| \Lambda \sum_{t=1}^T \ve_t(\vtheta_t, \ldots, \vtheta_T) \|_1 \\ 
    \text{s.t. } \quad & \vtheta_t \in \Delta^n \; \forall t
\end{aligned}
\end{equation}

From Lemma \ref{lem:linear-actions}, we know the form a rational agent's effort $\ve_t$ will take for every $t \in \{1, \ldots, T\}$. Substituting this into Equation \ref{eq:theta1-lb}, we obtain the following characterization of the principal's assessment policy:

\begin{equation*}
\begin{aligned} 
    \{\vtheta^*_t\}_{t=1}^T = \arg \max_{\vtheta_1, \ldots, \vtheta_T}
    \quad & \left \| \Lambda \sum_{t=1}^T \arg \max_{\ve_t}
     \left(\vtheta_t^T W + \sum_{i=1}^{T-t} \vtheta_{t+i}^\top W \Omega\right) \ve_t \right \|_1\\ 
    \text{s.t. } \quad & \vtheta_t \in \Delta^n, \;
    e_{t,j} \geq 0, \; \sum_{j=1}^T e_{t,j} \leq 1 \; \forall t,j
\end{aligned}
\end{equation*}

\section{Proof of Theorem \ref{thm:membership-oracle}} \label{sec:membership-proof}

\subsection{The set of incentivizable effort policies is convex} \label{sec:convex}

\begin{proof}
Let the set of incentivizable effort policies be denoted by $I = \left\{ \{\va_t \}_{t=1}^T | V \left(\{ \va_t\}_{t=1}^T \right) = T \right\}$. In order to show that $I$ is convex, it suffices to show that for all effort policies $\{\mathbf{x}_t\}_{t=1}^T$ and $\{\mathbf{y}_t\}_{t=1}^T \in I$, their element-wise average $\{\mathbf{z}_t\}_{t=1}^T$ also belongs to the set $I$. Let the sets of all possible solutions for for $V(\{ \mathbf{x} \}_{t=1}^T)$ and $V(\{ \mathbf{y} \}_{t=1}^T)$ be denoted by $\left\{ \{ \ve_{x,t} \}_{t=1}^T \right\} \subseteq I$ and $\left\{ \{ \ve_{y,t} \}_{t=1}^T \right\} \subseteq I$. Since $\{ \mathbf{x}_t \}_{t=1}^T \in \left\{ \{ \ve_{x,t} \}_{t=1}^T \right\}$ and $\{ \mathbf{y} _t \}_{t=1}^T \in \left\{ \{ \ve_{y,t} \}_{t=1}^T \right\}$, we use $\{ \mathbf{x}_t \}_{t=1}^T$ and $\{ \mathbf{y}_t \}_{t=1}^T$ as the solutions to $V(\{ \mathbf{x} \}_{t=1}^T)$ and $V(\{ \mathbf{y} \}_{t=1}^T)$ without loss of generality. Let the agent's observable features at time $t$ when playing effort policy $\{\va_t\}_{t=1}^T$ be denoted by $\mathbf{g}_t(\{\va_t\}_{t=1}^T)$. If $\mathbf{z}_t = \frac{\mathbf{x}_t + \mathbf{y}_t}{2}$ for all $t$, we know that $2\mathbf{g}_t(\{\mathbf{z}_t\}_{t=1}^T) = \mathbf{g}_t(\{\mathbf{x}_t\}_{t=1}^T) + \mathbf{g}_t(\{\mathbf{y}_t\}_{t=1}^T)$ for all $t$, due to the linearity of agent feature values. Moreover, this holds for any combination of effort policies from $\left\{ \{ \ve_{x,t} \}_{t=1}^T \right\}$ and $\left\{ \{ \ve_{y,t} \}_{t=1}^T \right\}$. 

Suppose that the effort policy $\{\mathbf{z}\}_{t=1}^T$ is \emph{not} incentivizable. By definition, this must mean that there exists some other effort policy $\{ \boldsymbol{\zeta}_t\}_{t=1}^T$ such that an agent can achieve the same feature values at every time-step as he would have received if he had played effort policy $\{\mathbf{z}\}_{t=1}^T$, while expending less total effort at at least one time-step $s$, i.e. 

\begin{equation*}
    \mathbf{g}_t(\{ \boldsymbol{\zeta}_t\}_{t=1}^T) = \mathbf{g}_t(\{ \mathbf{z}_t\}_{t=1}^T), \; \forall t
\end{equation*}
and 
\begin{equation*}
    \|\boldsymbol{\zeta}_{s}\|_1 < \|\mathbf{z}_{s}\|_1, \; s \in \{1, \ldots, T\}.
\end{equation*}

By linearity, $\mathbf{z}_{s}$'s contribution to the agent's feature values at time $s$ is equal to the average of $\mathbf{x}_{s}$ and $\mathbf{y}_{s}$'s contributions to the agent's feature values at time $s$. This means that $2W\boldsymbol{\zeta}_s = 2W\mathbf{z}_{s} = W\mathbf{x}_{s} + W\mathbf{y}_{s}$. Let $\boldsymbol{\zeta}^*_s$ equal $\boldsymbol{\zeta}_s$ rescaled such that $\| \boldsymbol{\zeta}^*_s \|_1 = 1$. $W\boldsymbol{\zeta}^*_s \succcurlyeq W\boldsymbol{\zeta}_s$ and there exists an index $\ell$ such that $\left[W\boldsymbol{\zeta}^*_s\right]_{\ell} > \left[W\boldsymbol{\zeta}_s\right]_{\ell}$ (since we assume the effort conversion matrix $W$ is monotonic). Therefore, $2W\boldsymbol{\zeta}^*_s \succcurlyeq W\mathbf{x}_{s} + W\mathbf{y}_{s}$ and $\left[W\boldsymbol{\zeta}^*_s\right]_{\ell} > \left[W\mathbf{x}_{s} + W\mathbf{y}_{s}\right]_{\ell}$. Denote the effort policy with the rescaled version of $\boldsymbol{\zeta}_s$ as $\{\boldsymbol{\zeta}^*\}_{t=1}^T = \{\boldsymbol{\zeta}\}_{t=1}^T \backslash \boldsymbol{\zeta}_s \cup \boldsymbol{\zeta}^*_s$. It now follows that $2\mathbf{g}_{s}(\{\boldsymbol{\zeta}^*\}_{t=1}^T) \succcurlyeq \mathbf{g}_{s}(\{\mathbf{x}_t\}_{t=1}^T) + \mathbf{g}_{s}(\{\mathbf{y}_t\}_{t=1}^T)$ and $\left[2\mathbf{g}_{s}(\{\boldsymbol{\zeta}^*\}_{t=1}^T)\right]_{\ell} > \left[\mathbf{g}_{s}(\{\mathbf{x}_t\}_{t=1}^T) + \mathbf{g}_{s}(\{\mathbf{y}_t\}_{t=1}^T)\right]_{\ell}$, which means that $\{\boldsymbol{\zeta}^*\}_{t=1}^T$ must dominate either $\{\mathbf{x}_t\}_{t=1}^T$ or $\{\mathbf{y}_t\}_{t=1}^T$. This is a contradiction, since the effort policies $\{\mathbf{x}_t\}_{t=1}^T$ and $\{\mathbf{y}_t\}_{t=1}^T$ are both incentivizable. Therefore, the set of incentivizable effort policies $I$ must be convex.
\end{proof}

\subsection{Membership oracle-based optimization}
Now that we have shown that the set of incentivizable effort policies is convex, we can proceed with our membership oracle-based optimization procedure. Our goal is find the incentivizable effort policy which is most desirable to the principal. Therefore, the function we are trying to minimize is $f(\{ \va_t\}_{t=1}^T) = -\| \Lambda \sum_{t=1}^T \va_t \|_1$, where $\{ \va_t\}_{t=1}^T$ is an incentivizable effort policy and $\Lambda$ is a diagonal matrix where the element $\Lambda_{jj}$ denotes how much the principal wants to incentivize the agent to invest in effort component $e^{(j)}$. Note that this function is linear, as no element of $\{ \va_t\}_{t=1}^T$ can be negative. We also need a membership oracle to the convex set of inventivizable effort policies. Fortunately, Optimization \ref{eq:kappa} gives us such an oracle. In particular, if a given effort policy $\{\ve_t \}_{t=1}^T$ is incentivizable, $V(\{\ve_t \}_{t=1}^T)$ will equal $T$. If $\{\ve_t \}_{t=1}^T$ is not incentivizable, $V(\{\ve_t \}_{t=1}^T)$ will be some value strictly less than $T$. Armed with these tools, all we need is an initial point $\{ \ve_{t,0} \}_{t=1}^T$ inside the set of incentivizable effort policies to use a membership oracle-based convex optimization procedure such as \cite{kalai2006simulated} to recover the agent effort policy which is most desirable to the principal. We can obtain such a point by fixing an arbitrary assessment policy $\{\vtheta_{t,0}\}_{t=1}^T$ and solving the agent's optimization in Optimization \ref{eq:se-linear} to recover $\{ \ve_{t,0} \}_{t=1}^T$.

Now that we've found the incentivizable agent effort policy that is (approximatley) most desirable to the principal, we need to find the assessment policy which incentivizes it. Optimization \ref{eq:kappa} can help us here as well. Recall that if an effort policy $\{\ve_t\}_{t=1}^T$ is incentivizable, a subset of the dual variables of Optimization \ref{eq:kappa} correspond to a linear assessment policy which can incentivize it. So given the incentivizable effort policy which is most desirable to the principal, we can use the complementary slackness conditions of Optimization \ref{eq:kappa} to recover the assessment policy which can incentivize it.

\section{$(T,t)$-Implementability}\label{sec:Tt}

\begin{proof}
From Proposition \ref{prop:eq}, we know that the agent's effort profile $\ve_t$ at time $t$ will be a basis vector with weight $1$ on the maximal element of $\vtheta_t^T W + \left(\sum_{i=1}^{T-t} \vtheta_{t+i}^\top\right) W \Omega$. Therefore, if $\mathbf{b}_j$ is the effort profile induced at time $t$, then

\begin{equation} \label{eq:T1}
    \sum_{k=1}^n \left(\theta_{t,k} + \Omega_{jj} \left(\sum_{i=1}^{T-t} \theta_{t+i,k}\right)\right) W_{kj} \geq \sum_{k=1}^n \left(\theta_{t,k} + \Omega_{zz} \left(\sum_{i=1}^{T-t} \theta_{t+i,k}\right)\right) W_{kz}, \;
    \text{for } 1 \leq z \leq d
\end{equation}

Since we are interested in deriving an upper bound on $T$, we can consider just assessment policies of the form $\vtheta_t = \vtheta$ $\forall t$ -- that is, we limit the principal to employ the same assessment rule across all time-steps. After making this assumption and collecting terms, Equation \ref{eq:T1} becomes

\begin{equation*}
    \sum_{k=1}^n \theta_k \left(\left(W_{kj} - W_{kz}\right) + \left(T-t\right)\left(\Omega_{jj} W_{kj} - \Omega_{zz} W_{kz}\right)\right) \geq 0, \; \text{for } 1 \leq z \leq d
\end{equation*}

By solving for $T$, we obtain
\begin{equation} \label{eq:T3}
    T \geq t + \frac{ \sum_{k=1}^n \theta_k \left( W_{kz} - W_{kj} \right) }{\sum_{k=1}^n \theta_k \left( \Omega_{jj} W_{kj} - \Omega_{zz} W_{kz} \right)}, \; \text{for } 1 \leq z \leq d
\end{equation}

Since the principal employs the same assessment rule across all time-steps, it is optimal for the principal to play $\theta_{t,k} = \mathbbm{1} \{k=m\}$ $\forall t$, where $m$ is the (non-unique) index of $\vtheta$ which incentivizes $\mathbf{b}_j$ the most. In other words, $m$ is the index that minimizes the RHS of Equation $\ref{eq:T3}$ while still satisfying $\Omega_{jj} W_{kj} \geq \Omega_{zz} W_{kz}$ for all $1 \leq z \leq d$. Equation \ref{eq:T3} now becomes 

\begin{equation} \label{eq:T4}
    T \geq t + \min_k \frac{ \left( W_{kz} - W_{kj} \right) }{\left( \Omega_{jj} W_{kj} - \Omega_{zz} W_{kz} \right)}, \; \text{for } 1 \leq z \leq d
\end{equation}

Note that if $\Omega_{jj} W_{mj} - \Omega_{zz} W_{mz} \leq 0$ for some $z$, then $\mathbf{b}_j$ will \emph{never} be incentivizable at some generic time $t$, since this means an undesirable effort component accumulates at a rate faster than effort component $j$. While this claim only holds for static $\vtheta$-policies, a similar condition holds for the general case - namely the denominator of the bound in Equation \ref{eq:T3} must be greater than $0$ for all $z$ in order for an effort profile to be incentivizable. In the classroom example, this would correspond to (the somewhat unrealistic) situation in which a student gains knowledge by cheating faster than he does from studying.

Finally, picking the $z$ index which maximizes the RHS of Equation \ref{eq:T4} suffices for Equation \ref{eq:T4} to hold for $1 \leq z \leq d$. Since $T \geq t$ must hold, the numerator be at least $0$.

\begin{equation*}
    T \geq t + \max_z \min_m \frac{ \max \left\{0, W_{mz} - W_{mj} \right\} }{\left( \Omega_{jj} W_{mj} - \Omega_{zz} W_{mz} \right)}
\end{equation*}

\end{proof}

\section{Alternative agent cost formulation}\label{sec:quad}
While we assume that each agent selects their action according to a fixed effort \emph{budget} at every time-step, another common agent cost model within the strategic classification literature is that of a \emph{quadratic cost penalty}. We now explore the use of such a cost formulation in our stateful setting.

\subsection{Agent's best-response effort sequence} \label{agent-ep}
Under the quadratic cost setting, a rational agent selects his effort policy in order to maximize his total score minus the quadratic cost of exerting the effort over all time steps. Next, we obtain a close-formed expression for the agent's best-response to an arbitrary sequence of assessment rules under a linear effort conversion function.

\begin{proposition}\label{prop:agent}
If the effort conversion function has the form $\vsigma_W = W$, the set of agent best-responses to a sequence of linear, monotonic assessment rules, $\{\vtheta_t\}_{t=1}^T$, is $\ve^*_t = W^\top \vtheta_t + \left(W \Omega\right)^\top \sum_{i=1}^{T-t} \vtheta_{t+i} \; \forall t$.
\end{proposition}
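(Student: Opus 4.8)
The plan is to write the agent's utility explicitly, substitute the linear conversion and state-accumulation relations, and then exploit concavity and separability to solve by first-order conditions. Under the quadratic-cost model the agent maximizes his cumulative score minus a per-round quadratic penalty, so his objective is $\sum_{t=1}^T \vtheta_t^\top \vo_t - \tfrac{1}{2}\sum_{t=1}^T \|\ve_t\|_2^2$. Substituting $\vo_t = W\!\left(\mathbf{s}_0 + \Omega\sum_{i=1}^{t-1}\ve_i + \ve_t\right)$ --- exactly the substitution used in the fixed-budget derivation leading to Lemma \ref{lem:linear-actions} --- reduces the score term to a function of $\{\ve_t\}_{t=1}^T$, $W$, $\Omega$, $\mathbf{s}_0$, and the announced policy $\{\vtheta_t\}_{t=1}^T$.

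First I would reuse the algebra from the proof of Lemma \ref{lem:linear-actions}: expanding the outer sum over the assessment rules, factoring by the effort $\ve_t$ spent at each round, and discarding the $\mathbf{s}_0$ terms (constant in $\{\ve_t\}$) rewrites the cumulative score as $\sum_{t=1}^T \left(\vtheta_t^\top W + \left(\sum_{i=1}^{T-t}\vtheta_{t+i}^\top\right)W\Omega\right)\ve_t$ up to an additive constant. The key structural point is that $\ve_t$ enters not only the round-$t$ score through $W^\top\vtheta_t$ but every future score through the accumulation term $\Omega\sum_{i=1}^{t'-1}\ve_i$ for $t' > t$; this is precisely why the future assessment parameters $\vtheta_{t+i}$, weighted by $(W\Omega)^\top$, appear in the best response.

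Next I would observe that, since both the factored score and the penalty are additively separable across $t$, the agent's objective decouples into $T$ independent subproblems, each of the form $\max_{\ve_t}\; c_t^\top \ve_t - \tfrac{1}{2}\|\ve_t\|_2^2$ with $c_t = W^\top\vtheta_t + (W\Omega)^\top\sum_{i=1}^{T-t}\vtheta_{t+i}$. Each subproblem is strictly concave (Hessian $-I$), so its unique maximizer is the unconstrained stationary point obtained from $c_t - \ve_t = \mathbf{0}_d$, which yields exactly $\ve_t^* = W^\top\vtheta_t + (W\Omega)^\top\sum_{i=1}^{T-t}\vtheta_{t+i}$. I would then check feasibility: since $W$ and $\Omega$ have nonnegative entries and each $\vtheta_t \in \Delta^n$ is nonnegative, $\ve_t^*$ is automatically nonnegative, so any nonnegativity requirement on effort is met and the interior solution needs no constraint handling.

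I do not expect a serious obstacle, as the computation is routine once the substitution is made. The only point requiring care is the chain-rule bookkeeping for how $\ve_t$ propagates into future rounds through the state, which is what the $\sum_{i=1}^{T-t}\vtheta_{t+i}$ term records. The qualitative contrast worth flagging is that replacing the $\ell_1$ budget with a smooth strictly convex penalty turns the vertex (basis-vector) solutions of the fixed-budget LP into a smooth interior best response, matching the paper's observation that under quadratic costs the agent's optimal effort is generally not a basis vector.
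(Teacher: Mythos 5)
Your proposal is correct and follows essentially the same route as the paper's proof: substitute the linear conversion and state-accumulation relations, expand and factor the cumulative score by $\ve_t$, drop the $\mathbf{s}_0$ constants, decouple into $T$ concave subproblems, and read off the stationary point. Your additional observation that nonnegativity of $W$, $\Omega$, and $\vtheta_t$ makes the unconstrained stationary point automatically feasible is a worthwhile detail the paper leaves implicit.
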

\begin{proof}
The agent solves the following optimization to determine his best-response effort policy:
\begin{equation*} 
\begin{aligned}
    \{\ve^*_t\}_{t=1}^T = \arg \max_{\ve_1, ..., \ve_T} 
    \quad & \sum_{t=1}^T \left(y_t = f_t(\ve_1, \ldots, \ve_t) \right) - \frac{1}{2} \|\ve_t\|_2^2\\ 
    \text{s.t. } \quad & e_{t,j} \geq 0 \; \forall t,j
\end{aligned} 
\end{equation*}
Recall that the agent's score $y_t$ at each time-step is a function of $(\ve_1, \ldots, \ve_t)$, the cumulative effort expended by the agent so far.
Replacing the score $y_t$ and observable features $\mathbf{o}_t$ with their respective equations, we obtain the expression
\begin{equation*} \begin{aligned}
    \{\ve^*_t\}_{t=1}^T = \arg \max_{\ve_1, ..., \ve_T}
    \quad & \sum_{t=1}^T \vtheta_t^\top W \left(\mathbf{s}_t + \ve_t \right) - \frac{1}{2} \|\ve_t\|_2^2\\
    \text{s.t. } \quad & e_{t,j} \geq 0 \; \forall t,j
\end{aligned} \end{equation*}
where the agent's internal state $\mathbf{s}_t$ at time $t$ is a function of the effort he expends from time $1$ to time $t-1$. Replacing $\mathbf{s}_t$ with the expression for agent state, we get

\begin{equation*} \begin{aligned}
    \{\ve^*_t\}_{t=1}^T = \arg \max_{\ve_1, ..., \ve_T} 
    \quad & \sum_{t=1}^T \vtheta_t^\top W \left(\mathbf{s}_0 + \Omega \sum_{i=1}^{t-1} \ve_i + \ve_t \right) - \frac{1}{2} \|\ve_t\|_2^2 \\ 
    \text{s.t. } \quad & e_{t,j} \geq 0 \; \forall t,j
\end{aligned} \end{equation*}

Our goal is to separate the above optimization into $T$ separate optimization problems for computational tractability. As a first step towards this goal, we expand the sum over the principal's assessment policy, obtaining the following form:

\begin{equation*} \begin{aligned}
    \{\ve^*_t\}_{t=1}^T = \arg \max_{\ve_1, ..., \ve_T}
    \quad & \vtheta_1^\top W \left(\mathbf{s}_0 + \ve_1\right) +
    \vtheta_2^\top W \left(\mathbf{s}_0 + \Omega \ve_1 + \ve_2 \right) + 
    \ldots + 
    \vtheta_T^\top W \left(\mathbf{s}_0 + \Omega \sum_{i=1}^{T-1} \ve_i + \ve_T\right) 
    \\\quad & - \frac{1}{2} \left(\|\ve_1\|_2^2 + \|\ve_2\|_2^2 + \ldots + \|\ve_T\|_2^2\right)\\ 
    \text{s.t. } \quad & e_{t,j} \geq 0 \; \forall t,j
\end{aligned} \end{equation*}
Next, we factor the above based on $\ve_t$'s. Additionally, we drop the $\mathbf{s}_0$ terms, since they do not depend on any $\ve_t$.
\begin{equation} \label{eq:effort-expanded}
\begin{aligned} 
    \{\ve^*_t\}_{t=1}^T = \arg \max_{\ve_1, ..., \ve_T}
    \quad & \left(\vtheta_1^\top W + \sum_{i=1}^{T-1} \vtheta_{i+1}^\top \Omega W\right) \ve_1 - \frac{1}{2} \|\ve_1\|_2^2
    + \left(\vtheta_2^\top W + \sum_{i=1}^{T-2} \vtheta_{i+2}^\top \Omega W\right) \ve_2\\ 
    \quad & - \frac{1}{2} \|\ve_2\|_2^2 +
    \ldots 
    + \vtheta_T^\top W \ve_t - \frac{1}{2} \|\ve_T\|_2^2\\ 
    \text{s.t. } \quad & a_{t,j} \geq 0 \; \forall t,j
\end{aligned}
\end{equation}
Now Equation \ref{eq:effort-expanded} can be separated based on agent effort profile at each time step $t$. In particular, for $\ve_t$ we have:
\begin{equation*} \begin{aligned}
    \ve^*_t = \arg \max_{\ve_t}
    \quad & \left(\vtheta_t^\top W + \sum_{i=1}^{T-t} \vtheta_{t+i}^\top \Omega W\right) \ve_t - \frac{1}{2} \|\ve_t\|_2^2\\ 
    \text{s.t. } \quad & e_{t,j} \geq 0 \; \forall j
\end{aligned} \end{equation*}
Finally, we can get a closed-form solution for each $\ve^*_t$ by taking the gradient with respect to $\ve_t$ and setting it equal to $\mathbf{0}_d$. Our final expression for $\ve_t^*$ is

\begin{equation} \label{eq:effort}
    \ve^*_t = W^\top \vtheta_t + \left(W \Omega\right)^\top \sum_{i=1}^{T-t} \vtheta_{t+i}
\end{equation}
\end{proof}

\begin{corollary} \label{cor:non-decrease}
The set of effort profiles the agent can play as a best-response to some linear assessment policy at each time step $t$ grows as the time horizon $T$ increases.
\end{corollary}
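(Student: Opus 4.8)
The plan is to read the corollary directly off the closed-form best response in Proposition~\ref{prop:agent}. Fix a time step $t$ and recall that $\ve^*_t = W^\top \vtheta_t + (W\Omega)^\top \sum_{i=1}^{T-t} \vtheta_{t+i}$, so the set of effort profiles the principal can induce at time $t$ under horizon $T$ is exactly
\[
\mathcal{E}_t(T) = \Big\{\, W^\top \vtheta_t + (W\Omega)^\top \sum_{i=1}^{T-t} \vtheta_{t+i} \;:\; \vtheta_t,\dots,\vtheta_T \in \Delta^n \,\Big\}.
\]
First I would rewrite this as a Minkowski sum. Since the Minkowski sum of $\tau := T-t$ copies of $\Delta^n$ is the scaled simplex $\tau\Delta^n = \{\mathbf{x}\geq\mathbf{0} : \mathbf{1}^\top\mathbf{x} = \tau\}$, linearity of the maps $W^\top$ and $(W\Omega)^\top$ gives
\[
\mathcal{E}_t(T) = A \,\oplus\, \tau\, C, \qquad A := W^\top\Delta^n, \quad C := (W\Omega)^\top\Delta^n,
\]
where $A$ and $C$ are polytopes in effort space and $\oplus$ denotes Minkowski addition. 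Increasing the horizon from $T$ to $T+1$ (with $t$ fixed) increments $\tau$ by one, i.e.\ appends one more copy of $C$ to the sum.

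The core of the argument is then to show that appending a copy of $C$ enlarges the reachable set in the directions that matter. The structural fact I would exploit is that $W$ and $\Omega$ are entrywise nonnegative, so every point of $C$ lies in the nonnegative orthant $\mathbb{R}^d_{\geq 0}$. Working with support functions, $h_{A\oplus(\tau+1)C}(\vlambda) = h_A(\vlambda) + (\tau+1)\,h_C(\vlambda)$, and since $h_C(\vlambda) = \max_{\mathbf{c}\in C}\langle\vlambda,\mathbf{c}\rangle \geq 0$ for every $\vlambda \geq \mathbf{0}$, the reachable extent is nondecreasing in $\tau$ in exactly the coordinates the principal cares about (her objective is the weighted $\ell_1$ norm $\|\Lambda\sum_t\ve_t\|_1$ with $\Lambda$ diagonal and nonnegative, a nonnegative linear functional of the nonnegative efforts). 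This delivers the corollary: each additional round hands the principal a fresh degree of freedom $(W\Omega)^\top\vtheta_{T+1}$ that can only push the induced effort outward along desirable directions.

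The step I expect to be the main obstacle is making the phrase ``the set grows'' precise. Literal set inclusion $\mathcal{E}_t(T)\subseteq\mathcal{E}_t(T+1)$ need not hold in full generality: because each $\vtheta_{t+i}$ is forced to live in the simplex rather than being allowed to vanish, the extra copy of $C$ also shifts the set away from the origin, so the points of $\mathcal{E}_t(T)$ nearest the origin can fall outside $\mathcal{E}_t(T+1)$ (this is already visible in the $n=2$, $d=1$ case whenever every entry of $W$ is strictly positive). The clean, provable statement is the directional growth above --- $\max_{\ve_t\in\mathcal{E}_t(T)}\langle\vlambda,\ve_t\rangle$ is nondecreasing in $T$ for every $\vlambda\geq\mathbf{0}$ --- which matches the informal claim and follows immediately from $C\subseteq\mathbb{R}^d_{\geq 0}$; genuine set inclusion is then recovered in the special case $\mathbf{0}\in C$, i.e.\ when some observable feature is unaffected by effort. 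I would therefore frame the corollary in this support-function sense and flag the origin-containing special case where containment holds outright.
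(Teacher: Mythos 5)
Your starting point is the same as the paper's: both of you read the reachable set at time $t$ off the closed form in Proposition~\ref{prop:agent} and write it as the polytope $S_t(T) = \{\, W^\top \vtheta_t + (W\Omega)^\top \sum_{i=1}^{T-t}\vtheta_{t+i} \,:\, \vtheta_t,\dots,\vtheta_T \in \Delta^n \,\}$. Where you diverge is at the single step the paper actually relies on: its entire proof is the unjustified assertion that $S_t(T) \subset S_t(T+1)$, from which the corollary is declared to follow. You are right to flag that this literal inclusion fails in general; your Minkowski decomposition $S_t(T) = A \oplus \tau C$ with $C = (W\Omega)^\top \Delta^n$ makes the failure transparent, and the simplest witness is $W = \Omega = I$ with $n = d$, where $S_t(T) = (1+\tau)\Delta^n$ and $S_t(T+1) = (2+\tau)\Delta^n$ are \emph{disjoint} slices of the nonnegative orthant (each $\vtheta_{t+i}$ must sum to one, so the extra round necessarily translates the set away from the origin). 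Your replacement --- that the support function $\max_{\ve \in S_t(T)} \langle \vlambda, \ve\rangle$ is nondecreasing in $T$ for every $\vlambda \ge \mathbf{0}$, because $C \subseteq \mathbb{R}^d_{\ge 0}$ forces $h_C(\vlambda) \ge 0$ --- is correct, matches what the corollary is used for downstream (Theorem~\ref{th:T-E} only ever asks for \emph{more} cumulative effort in a nonnegative direction, and the principal's objective $\|\Lambda \sum_t \ve_t\|_1$ is a nonnegative linear functional on these sets), and your observation that genuine containment is recovered exactly when $\mathbf{0} \in C$ is also right. In short: same decomposition, but your argument repairs a genuine error in the paper's one-line justification rather than reproducing it; the paper's statement of the corollary should really be read in your directional sense.
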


\begin{proof}
Fix any time horizon $T$ and time step $t\leq T$, the set of effort profiles the agent can play as a best response is a polytope:
\[
  S_t(T) = \left\{ W^\top \vtheta_t + \left(W \Omega\right)^\top \sum_{i=1}^{T-t} \vtheta_{t+i} \mid \theta_t, \theta_{t+1}, \ldots , \theta_T \in \Delta_n \right\}
\]
The corollary then follows from the fact that $S_t(T) \subset S_t(T+1)$.
\end{proof}

\subsection{Principal's equilibrium assessment policy}\label{sec:principal_eq_strategy}

Next, given the form of the agent's best response to an arbitrary assessment policy, we can derive the principal's equilibrium strategy as follows:

\begin{theorem}[Stackelberg Equilibrium]\label{thm:eq-quad}
Suppose the principal's strategy space consists of all sequences of linear monotonic assessment rules. The Stackelberg equilibrium of the stateful strategic regression game, $\left(\{\vtheta^*_t\}_{t=1}^T , \{\ve^*_t\}_{t=1}^T \right)$, can be specified as follows:
\begin{eqnarray*}
\forall t: \indent \ve^*_t &=& W^\top \vtheta^*_t + \left(W \Omega\right)^\top \sum_{i=1}^{T-t} \vtheta^*_{t+i} \\
\vtheta^*_t &=& \mathbbm{1} \lbrace k = \arg \max \| \Lambda \left(I + \left(t-1\right)\Omega^\top\right) W^\top \|_1 \rbrace.
\end{eqnarray*}
\end{theorem}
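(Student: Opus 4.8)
The first line of the claimed equilibrium is simply the agent's best response from Proposition~\ref{prop:agent} (now evaluated at the principal's optimal policy), so the plan is to establish the second line, namely that the principal's optimal assessment rule at each round is a basis vector. First I would substitute the agent's closed-form best response $\ve^*_t = W^\top \vtheta_t + (W\Omega)^\top \sum_{i=1}^{T-t}\vtheta_{t+i}$ into the principal's objective $\left\|\Lambda \sum_{t=1}^T \ve^*_t\right\|_1$ and simplify the resulting double sum.

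The key bookkeeping step is to reindex $\sum_{t=1}^T (W\Omega)^\top \sum_{i=1}^{T-t}\vtheta_{t+i}$ by collecting the coefficient of each $\vtheta_s$. A short count shows that $\vtheta_s$ is counted exactly $s-1$ times (it appears for every $t \in \{1,\ldots,s-1\}$ with $i = s-t$, where the constraint $1 \le i \le T-t$ forces $t \le s-1$), so that $\sum_{t=1}^T\sum_{i=1}^{T-t}\vtheta_{t+i} = \sum_{s=1}^T (s-1)\vtheta_s$. Combining this with the $W^\top \vtheta_t$ terms yields $\sum_{t=1}^T \ve^*_t = \sum_{t=1}^T \left(I + (t-1)\Omega^\top\right)W^\top \vtheta_t$, so the principal's objective becomes $\left\|\sum_{t=1}^T \Lambda\left(I+(t-1)\Omega^\top\right)W^\top \vtheta_t\right\|_1$.

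The crux is then a nonnegativity argument that decouples this objective both across time and within each round. Since $\Lambda$, $\Omega$, and $W$ all have nonnegative entries and each $\vtheta_t \in \Delta^n$, every summand $\Lambda\left(I+(t-1)\Omega^\top\right)W^\top\vtheta_t$ is a nonnegative vector; therefore the $\ell_1$ norm of the sum equals the sum of the $\ell_1$ norms, so the objective separates as $\sum_{t=1}^T \left\|\Lambda\left(I+(t-1)\Omega^\top\right)W^\top\vtheta_t\right\|_1$ with the $t$-th term depending only on $\vtheta_t$. Writing $M_t := \Lambda\left(I+(t-1)\Omega^\top\right)W^\top$, nonnegativity again gives $\|M_t\vtheta_t\|_1 = \mathbf{1}^\top M_t \vtheta_t$, a linear functional of $\vtheta_t$.

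Finally I would optimize each round independently over the simplex. Maximizing the linear functional $\mathbf{1}^\top M_t\vtheta_t$ over $\vtheta_t \in \Delta^n$ is attained at a vertex, namely the basis vector $\mathbf{e}_k$ whose index $k$ selects the column of $M_t$ with the largest column sum, equivalently the largest column $\ell_1$ norm since the columns are nonnegative. This is exactly $\vtheta^*_t = \mathbbm{1}\{k = \arg\max \|\Lambda(I+(t-1)\Omega^\top)W^\top\|_1\}$, which completes the characterization. The main obstacle is the separability step: it is what turns an a priori coupled, norm-of-a-sum objective over $T$ interdependent simplex variables into $T$ independent linear programs, and it relies essentially on the sign structure of $\Lambda,\Omega,W$. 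I would take care to verify that the closed-form agent response is itself nonnegative (so that Proposition~\ref{prop:agent}'s unconstrained stationary point indeed respects $e_{t,j}\ge 0$) before invoking it.
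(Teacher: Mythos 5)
Your proposal is correct and follows essentially the same route as the paper's proof: substitute the closed-form agent best response from Proposition~\ref{prop:agent} into the principal's objective, regroup so the objective becomes $\sum_{t=1}^T \mathbf{1}_d^\top \Lambda\left(I+(t-1)\Omega^\top\right)W^\top\vtheta_t$, and solve $T$ independent linear programs over the simplex, each maximized at a basis vector. If anything, you are more careful than the paper, which performs the reindexing and the $\ell_1$-to-linear-functional separation implicitly without spelling out the nonnegativity of $\Lambda$, $\Omega$, $W$, and $\vtheta_t$ that justifies both steps (and the feasibility of the unconstrained stationary point).
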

\begin{proof}
Proposition~\ref{prop:agent} already calculates the agent's best response an arbitrary assessment policy. It only remains to characterize the principal's best response to the agent. 

The principal's goal is to maximize the value of the agent's internal state at time $T$. Writing this as an optimization problem, we have

\begin{equation} \label{eq:theta1}
\begin{aligned} 
    \{\vtheta^*_t\}_{t=1}^T =  \arg \max_{\vtheta_1, ..., \vtheta_T}
    \quad & \left \| \Lambda \sum_{t=1}^T \ve^*_t(\vtheta_t, ..., \vtheta_T) \right \|_1 \\ 
    \text{s.t. } \quad & \vtheta_t \in \Delta^n \; \forall t
\end{aligned}
\end{equation}

The sequence $\{\vtheta^*_t\}_{t=1}^T$ could correspond to a teacher designing a sequence of \emph{(test, homework)} pairs with different weights in order to maximize a student's knowledge, or a bank designing a sequence of evaluation metrics to determine the amount a loan applicant receives when applying for a sequence of loans over time in order to encourage good business practices.

From Equation \ref{eq:effort} we know the form of the effort profile at each time for a rational agent. Substituting this into Equation \ref{eq:theta1}, we obtain

\begin{equation*} \begin{aligned}
    \{\vtheta^*_t\}_{t=1}^T = \arg \max_{\vtheta_1, ..., \vtheta_T}
    \quad & \left \| \Lambda \sum_{t=1}^T 
    \left(W^\top \vtheta_t + \left(W \Omega\right)^\top \sum_{i=1}^{T-t} \vtheta_{t+i}\right) \right\|_1\\
    \text{s.t. } \quad & \vtheta_t \in \Delta^n \; \forall t
\end{aligned} \end{equation*}

As was the case with the agent's optimal effort policy, we would like to separate the optimization for the principal's optimal assessment policy into $T$ separate optimization problems. The current form can be separated based on $\vtheta$ because we have closed-form solutions for each $\ve^*_t$ ($1 \leq t \leq T$), which are all linear in the principal's assessment policy $\{\vtheta_t\}_{t=1}^T$:

\begin{equation*} \begin{aligned}
    \{\vtheta^*_t\}_{t=1}^T = \arg \max_{\vtheta_1, ..., \vtheta_T}
    \quad & \mathbf{1}_d^\top \Lambda W^\top \vtheta_1 + 
    \mathbf{1}_d^\top \Lambda \left(I + \Omega^\top\right) W^\top \vtheta_2 +
    \ldots + 
    \mathbf{1}_d^\top \Lambda \left(I + \left(T-1\right)\Omega^\top\right) W^\top \vtheta_T \\ 
    \text{s.t. } \quad & \vtheta_t \in \Delta^n \; \forall t
\end{aligned} \end{equation*}

We can now solve a separate linear program for each $\vtheta_t$:
\begin{equation} \label{eq:theta}
\begin{aligned} 
    \vtheta^*_t = \arg \max_{\vtheta_t}
    \quad & \mathbf{1}_d^\top \Lambda \left(I + \left(t-1\right)\Omega^\top\right) W^\top \vtheta_t\\ 
    \text{s.t. } \quad & \vtheta_t \in \Delta^n
\end{aligned}
\end{equation}

Our final solution for $\vtheta_t^*$ has the form $\vtheta_t^* = \mathbbm{1} \{k = m\}$, where $m$ denotes the maximal element of $\mathbf{1}_d^\top \Lambda \left(I + \left(t-1\right)\Omega^\top\right) W^\top$.
\end{proof}

\subsection{The dynamicity of equilibrium policies}
Given our characterization above, 
one might wonder if the optimal solution for the principal is to simply play a fixed $\vtheta$ for all $t \in \{1, \ldots, T\}$. We show that this is generally not the case---specifically, due to the role of $t$ in determining the maximal component of vector $\mathbf{1}_d^\top \Lambda \left(I + \left(t-1\right)\Omega^\top\right) W^\top$.

\begin{theorem}\label{thm:dynamic}
The principal's optimal assessment policy $\{\vtheta^*_t\}_{t=1}^T$ can contain $n$ distinct assessment rules.
\end{theorem}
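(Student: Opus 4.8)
The plan is to work directly from the closed form in Theorem~\ref{thm:eq-quad} (equivalently, the per-round linear program \eqref{eq:theta}). There the optimal assessment rule at round $t$ is a standard basis vector $\vtheta_t^* = \mathbbm{1}\{k = m(t)\}$, where $m(t)$ is the index of the maximal coordinate of the row vector $\mathbf{1}_d^\top \Lambda (I + (t-1)\Omega^\top) W^\top$. Since every $\vtheta_t^*$ is one of the $n$ vertices of $\Delta^n$, there are at most $n$ distinct rules, so the theorem amounts to exhibiting parameters under which all $n$ of them arise. First I would record that, because $\Lambda$ and $\Omega$ are diagonal, the $k$-th coordinate of the objective is the affine function $c_k(t) = a_k + (t-1)\,b_k$ with $a_k = \sum_{j} \Lambda_{jj} W_{kj}$ and $b_k = \sum_{j} \Lambda_{jj}\Omega_{jj} W_{kj}$. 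Thus $m(t) = \arg\max_k c_k(t)$, and the whole problem reduces to designing $n$ affine functions of $t$ whose upper envelope is attained by all $n$ indices over $t \in \{1,\dots,T\}$.

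Second, I would reduce the parameter-design question to choosing the pairs $(a_k, b_k)$ essentially freely. Taking $d = n$, $\Lambda = I$, and $W$ diagonal with positive entries $w_k := W_{kk}$ gives $a_k = w_k$ and $b_k = \Omega_{kk} w_k$, so any target intercept $a_k > 0$ together with any slope $b_k \ge 0$ is realizable by setting $w_k = a_k$ and $\Omega_{kk} = b_k/a_k \ge 0$. This decouples intercept from slope, up to nonnegativity (and, if desired, $\Omega_{kk}\le 1$).

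Third---the crux---I would use a tangent-line construction to force all $n$ indices onto the envelope at prescribed times. Fix $T = n$, write $s = t-1 \in \{0,\dots,n-1\}$, and let $\phi$ be a strictly concave, increasing function on $[0,n-1]$ (e.g.\ $\phi(s) = \sqrt{s+c}$). Define line $k$ to be the tangent of $\phi$ at $s = k-1$, i.e.\ $b_k = \phi'(k-1)$ and $a_k = \phi(k-1) - (k-1)\phi'(k-1)$. By concavity every tangent lies weakly below $\phi$ with equality only at its own tangent point, so at $s = k-1$ (time $t=k$) line $k$ is the strict, unique maximizer among all $n$ lines; hence $m(k)=k$ and $\vtheta_k^* = \mathbbm{1}\{\,\cdot = k\,\}$, yielding $n$ distinct rules. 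Strict concavity makes $\phi'$ strictly decreasing and positive on the interval, so $b_k > 0$; replacing $\phi$ by $\phi + C$ for a large constant $C$ leaves every $b_k$ unchanged while increasing every $a_k$ by $C$, which both secures $a_k > 0$ and drives $\Omega_{kk} = b_k/a_k$ below $1$, so all parameters are admissible.

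I expect the main obstacle to be this third step: one must verify that the desired ``each index wins at its own time step'' pattern is simultaneously (i) realizable as the upper envelope of $n$ affine functions and (ii) compatible with the sign and scale restrictions on $W,\Lambda,\Omega$. The tangent-line device handles (i) cleanly, since tangents to a strictly concave function are exactly the lines that each dominate at a single point, and the ``shift $\phi$ by $C$'' observation handles (ii) by decoupling the intercept scale from the fixed slopes. A minor point to check is the absence of ties at the prescribed times, which strict concavity guarantees and which is anyway consistent with the tie-breaking-in-favor-of-the-principal convention already assumed.
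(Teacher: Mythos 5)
Your overall strategy is sound and genuinely different from the paper's: you reduce the problem to designing $n$ affine functions $c_k(t)=a_k+(t-1)b_k$ whose upper envelope over $t\in\{1,\dots,T\}$ visits all $n$ indices, show the pairs $(a_k,b_k)$ are freely realizable by taking $\Lambda=I$ and $W$ diagonal, and then invoke a tangent-line construction. The paper instead hard-codes $W_{kk}=1/(k+1)^2$ and $\Omega_{kk}=k/(100n^3)$ and runs an explicit induction on the crossover times $\tau_k$, concluding that $T>100n^3$ suffices; your envelope viewpoint is cleaner, decouples the combinatorial question from the parameter restrictions, and would even achieve the switching pattern within $T=n$ rounds. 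Your ``at most $n$'' observation (each $\vtheta_t^*$ is a vertex of $\Delta^n$) is also more direct than the paper's monotonicity argument.

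However, the crux step has the convexity the wrong way around, and as written it fails. For a \emph{concave} $\phi$, the tangent line at $s_0$ is a supporting line from \emph{above}: $L_{s_0}(s)\ge\phi(s)$ for all $s$, with equality only at $s_0$. Hence at $s=k-1$ your line $k$ takes the value $\phi(k-1)$ while every other tangent line is strictly \emph{larger} there, so line $k$ is the unique \emph{minimizer}, not the maximizer, at its own tangent point. Indeed, for a strictly concave $\phi$ the value $L_{s_j}(s)$ is, as a function of the tangent point $s_j$, minimized at $s_j=s$ and increasing as $s_j$ moves away from $s$, so the upper envelope of your $n$ tangent lines is attained only by the two extreme indices --- the construction would yield at most two distinct assessment rules. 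The fix is immediate: take $\phi$ strictly \emph{convex} and increasing (e.g.\ $\phi(s)=(s+c)^2$), so that tangents are affine minorants touching $\phi$ only at their own tangent points; then line $k$ is the strict unique maximizer at $s=k-1$, and the rest of your argument (positivity of the slopes $b_k=\phi'(k-1)$, and shifting $\phi$ by a large constant $C$ to force $a_k>0$ and $\Omega_{kk}=b_k/a_k$ into the admissible range) goes through unchanged. With that correction the proof is complete.
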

The general idea of the proof is as follows. The optimization problem for principal's assessment rule at each time $t$ (Equation \ref{eq:theta}) is linear with respect to $t$, so any assessment rule $\vtheta$ which was optimal at some time $t' < t$ but is no longer optimal at time $t$ will never again be optimal at any time $t'' > t$. (This is because $\mathbf{1}_d^\top \Lambda \left(I + \left(t-1\right)\Omega^\top\right) W^\top$ is growing at rate $\mathbf{1}_d^\top \Lambda \Omega^\top W^\top$ with respect to $t$, so an element which was maximal at some time $t'$ but is not maximal anymore must have a smaller rate of change than the current maximal element, and will therefore never be maximal again.) So we can conclude that the number of optimal solutions of Equation \ref{eq:theta} is \textit{at most} $n$, since each assessment rule $\vtheta_t$ in the assessment policy is a basis vector with dimensionality $n$.

Next, we provide an example for which there are exactly $n$ optimal solutions. In order to construct such an example, we pick $W$, $\Omega$, and $\Lambda$ to be square, diagonal matrices so that Equation \ref{eq:theta} is separable into two terms: one that linearly depends on $t$ and one which has no dependence on $t$. Equation \ref{eq:theta} now takes the form $\arg \max_{\vtheta} \mathbf{V}^\top \vtheta$, where the $k$th element of $\mathbf{V}$ takes the form $W_{kk} \Omega_{kk} + (t-1)W_{kk} \Omega_{kk}^2$. Equation \ref{eq:theta} is linear in $\vtheta$, so $\vtheta$ will be a basis vector with a $1$ at the index where $\mathbf{V}^{\top}$ is maximal and zeros elsewhere. We pick constants $\{W_{kk}\}_{k=1}^n$ and $\{\Omega_{kk}\}_{k=1}^n$ such that each element $V^{(k)} \in \mathbf{V}$ becomes maximal one-after-one over time. Figure \ref{fig:first-term-v-second-term} shows how the two terms of $V^{(k)}$ change with $k$. Figure \ref{fig:different-times} shows how different indices of $\mathbf{V}$ can be maximal for different times.

\begin{figure}
    \begin{subfigure}[b]{0.49\textwidth}
        \centering
        \includegraphics[width=\textwidth, height=\textheight,keepaspectratio]{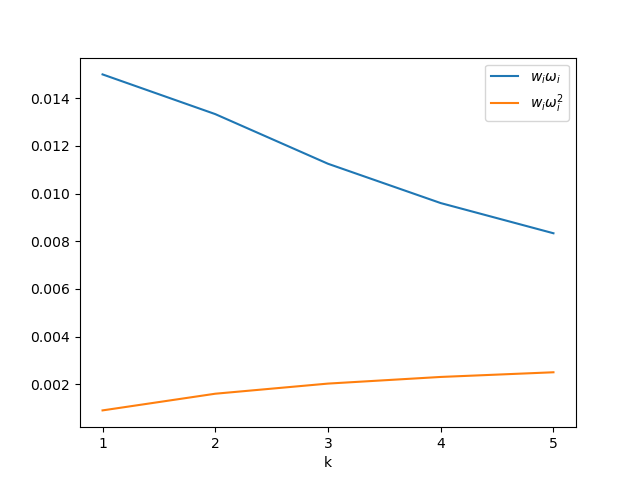}
        \label{fig:first_term_v_second_term}
    \end{subfigure}
    \hfill
    \begin{subfigure}[b]{0.49\textwidth}
        \centering
        \includegraphics[width=\textwidth, height=\textheight,keepaspectratio]{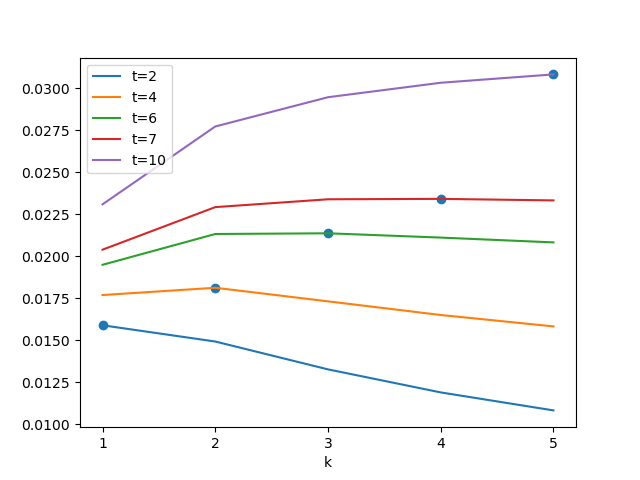}
        \label{fig:different_times}
    \end{subfigure}
    \caption{Left: Comparison of the two terms in each component of vector $\mathbf{V}$. The first term decreases as $\frac{1}{k}$, while the second term asymptotically approaches some value as $k$ increases. Right: A scaled version of vector $\mathbf{V}$ evaluated for different values of $t$. The blue circles denote the maximum component of $\mathbf{V}$ for each time $t$. Elements of $\mathbf{V}$ become maximal one-after-another over time.}
\end{figure}

Next we provide the full proof for the claim that the principal's assessment policy contains $n$ distinct assessment rules.

\begin{proof}{(Theorem~\ref{thm:dynamic})} 
To show that Equation \ref{eq:theta} can have up to $n$ optimal solutions throughout time, it suffices to provide a specific example for which this is the case. Let $\vtheta, \ve \in \mathbb{R}^n$, $\Omega = \Lambda = \in \mathbb{R}^{n \times n}$, and $W = \in \mathbb{R}^{n \times n}$, where $W$ is a diagonal matrix. This corresponds to the case where effort invested in one action corresponds to a change in exactly one observable feature. Under this setting, Equation \ref{eq:theta} simplifies to 

\begin{equation} \label{eq:theta-simp}
\begin{aligned} 
    \vtheta_t = \arg \max_{\vtheta}
    \quad & [\Omega_{11} W_{11} + \left(t-1\right) \Omega_{11}^2 W_{11}, \dots, \Omega_{kk} W_{kk} + \left(t-1\right) \Omega_{kk}^2 W_{kk}, \ldots, 
    \\ &\Omega_{nn} W_{nn} + \left(t-1\right) \Omega_{nn}^2 W_{nn}]^\top \vtheta\\ 
    \text{s.t. } \quad & \vtheta_t \in \Delta^n
\end{aligned}
\end{equation}

Now let $W_{kk} = \frac{1}{(k+1)^2}$ and $\Omega_{kk} = \frac{k}{100n^3}$ ($1 \leq k \leq n$). Equation \ref{eq:theta-simp} becomes

\begin{equation} \label{eq:theta-simp-simp}
\begin{aligned} 
    \vtheta_t = \arg \max_{\vtheta}
    \quad & \mathbf{V}^\top \vtheta\\ 
    \text{s.t. } \quad & \vtheta_t \in \Delta^n
\end{aligned}
\end{equation}

where 

\begin{equation*}
\begin{aligned}
    \mathbf{V} &= 
    \\ & \left[
    \frac{1}{400n^3}\left(1 + \left(t-1\right)\frac{1}{100n^3}\right),
    \frac{2}{900n^3}\left(1 + \left(t-1\right)\frac{2}{100n^3}\right),
    \ldots, \frac{1}{100n^2(n+1)^2}\left(1 + \left(t-1\right)\frac{1}{100n^2}\right) 
    \right]
\end{aligned}
\end{equation*}

Since Equation \ref{eq:theta-simp-simp} is linear in $\vtheta$, $\vtheta$ will be a basis vector with support on the element of $\mathbf{V}^\top$ which is maximal. It is therefore sufficient to show that each element of $\mathbf{V}^\top$ is maximal at some point in time. We show via proof by induction that there exists some time $t \in \mathbb{N}$ for which each element of $\mathbf{V}^\top$ is maximal.

\textbf{Base case:}
$V_1$ is the maximal value of $\mathbf{V}$ when $t=1$:
$\mathbf{V}^\top = \left[\frac{1}{400n^3}, \frac{2}{900n^3}, \ldots, \frac{1}{100n^2(n+1)^2} \right]^\top$.

\textbf{Inductive step:}
Assume there is some time $t_k > 1$ such that the $k$th element of $\mathbf{V}$ is maximal. To show that element $k+1$ is maximal at some time $t_k + \tau_k$ ($\tau_k > 0$), it suffices to show that there exist some $\tau_k$ values such that $V_k < V_{k+1}$ and $V_{k+1} > V_{k+2+m}$ for all $m \geq 0$. It suffices to show this because if $V_k$ is maximal at time $t_k$, $V_{k'<k}$ will never be optimal for times $t_k + \tau_k$ > $t_k$ due to the linearity of the problem. 

We first outline the condition for $V_k < V_{k+1}$:

\begin{equation*}
    \frac{k}{100n^3\left(k+1\right)^2}\left(1+\left(t_k + \tau_k - 1\right) \frac{k}{100n^3}\right) < \frac{\left(k+1\right)}{100n^3 \left(k+2\right)^2}\left(1+\left(t_k + \tau_k - 1\right) \frac{k+1}{100n^3}\right)
\end{equation*}

Next we solve for $\tau_k$ and simplify:

\begin{equation}\label{eq:tau_k-g}
    \tau_k > \frac{100n^3 \left(k^2 + k - 1\right)}{2k^2+4k+1} - \left(t_k-1\right)
\end{equation}

We outline a similar condition for $V_{k+1} > V_{k+2+m}$, for all $m \geq 0$:

\begin{equation*}
    \frac{k+1}{100n^3 \left(k+2\right)^2}\left(1+\left(t_k + \tau_k - 1\right) \frac{k+1}{100n^3}\right) > \frac{ k+2+m}{100n^3\left(k+3+m\right)^2}\left(1+\left(t_k + \tau_k - 1\right) \frac{k+2+m}{100n^3}\right) 
\end{equation*}

We then solve for $\tau_k$:

\begin{equation} \label{eq:tau_k-l-m}
    \tau_k < \frac{100n^3\left(\left(k+1\right)\left(k+3+m\right)^2 - \left(k+2+m\right)\left(k+2\right)^2\right)}{\left(k+2+m\right)^2\left(k+2\right)^2 - \left(k+1\right)^2\left(k+3+m\right)^2} - \left(t_k -1\right)
\end{equation}

Since Equation \ref{eq:tau_k-l-m} needs to hold for \emph{all} $m \geq 0$, it suffices to show that it holds for the value of $m$ which makes the RHS of Equation \ref{eq:tau_k-l-m} maximal. To find this $m$ value, we Take the derivative of Equation \ref{eq:tau_k-l-m} with respect to $m$ and set it equal to $0$. We find that the RHS of Equation \ref{eq:tau_k-l-m} is minimized when 
$m$ is negative. However, $m \geq 0$, so within the constraints of $m$, the RHS of Equation \ref{eq:tau_k-l-m} is minimized when $m=0$. Setting $m=0$ and simplifying, we obtain

\begin{equation} \label{eq:tau_k-l}
    \tau_k < \frac{100n^3\left(k^2+3k+1\right)}{2k^2+8k+7} - \left(t_k -1\right)
\end{equation}

We now have sufficient conditions for $V_k < V_{k+1}$ (Equation \ref{eq:tau_k-g}) and $V_{k+1} > V_{k+2+m}$ (Equation \ref{eq:tau_k-l}). Writing the two inequalities together, we see that 

\begin{equation*}
    \frac{k^2 + k - 1}{2k^2+4k+1} < \frac{k^2+3k+1}{2k^2+8k+7}
\end{equation*}

which holds for all values of $k \geq 1$. Therefore, $V_{k+1}$ will be the maximal element of $\mathbf{V}$ at time $t_k +\tau_k$, where 

\begin{equation} \label{eq:tau_k-final}
    \frac{100n^3 \left(k^2 + k - 1\right)}{2k^2+4k+1} - \left(t_k-1\right) < \tau_k < \frac{100n^3 \left(k^2+3k+1\right)}{2k^2+8k+7} - \left(t_k-1\right)
\end{equation}

$\tau_k$ will be strictly greater than $0$ for all values of $k$, since $\tau_n > 1$. (This is a sufficient condition for $\tau_k > 0 \; \forall k$ because $\tau_k$ decreases as $k$ increases.) We can see this by subtracting the LHS of Equation \ref{eq:tau_k-final} from the RHS at $k=n$ to obtain 

\begin{equation*}
    \frac{100n^3 \left(n^2+3n+1\right)}{2n^2+8n+7} - \left(t_n-1\right) - \left(\frac{100n^3 \left(n^2 + n - 1\right)}{2n^2+4n+1} - \left(t_n-1\right)\right) = 200 \frac{n^5 + 4n^4 + 4n^3}{\left(2n^2 + 8n + 7\right)\left(2n^2 + 4n + 1\right)} 
\end{equation*}

which is greater than $1$ for all values of $n \geq 1$.

Now we characterize a sufficiently long time period for $\mathbf{V}^\top$ to switch to all $n$ values. From Equation \ref{eq:tau_k-final}, we know that
\begin{equation*}
    T = t_{n-1} + \tau_{n-1} > 1 + \frac{100n^3 \left(\left(n-1\right)^2 + \left(n-1\right) - 1\right)}{2\left(n-1\right)^2 + 4\left(n-1\right) + 1}
\end{equation*}

Therefore, picking a time horizon such that $T > 100n^3$ is a sufficient condition for the optimal solution of Equation \ref{eq:theta} to switch to all $n$ basis vectors.
\end{proof}

\subsection{Optimality of linear assessment policies}
So far, for convenience we have focused on \emph{linear} assessment policies for the principal. We next show that this restriction is without loss of generality, that is, linear assessment policies are at least as powerful as the larger class of Lipschitz assessment policies with constant $K \leq 1$, where the comparison is in terms of the effort policies each class can incentivize the agent to play.

\begin{theorem}\label{thm:linear_optimality}
Suppose $K \leq 1$ is constant and $f:\mathbb{R}^n \times \mathbb{R}^n \longrightarrow \mathbb{R}$ is a $K$-Lipschitz function.
For any effort policy $\{\ve_t\}_{t=1}^T$, if there exists a sequence of assessment rules $\{f(\vtheta'_t, \cdot)\}_{t=1}^T$ to which $\{\ve_t\}_{t=1}^T$ is the agent's best-response, then
there exists a linear assessment policy $\{\vtheta_t\}_{t=1}^T$ to which $\{\ve_t\}_{t=1}^T$ is also a best-response.
\end{theorem}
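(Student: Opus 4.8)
The plan is to mirror the gradient‑matching idea behind Proposition~\ref{prop:agent}, exploiting the fact that under a quadratic cost the agent's objective is smooth, and that the \emph{linear} surrogate I will construct is moreover strongly concave. Throughout I work with the linear effort conversion $\vsigma_W = W$ used in this section and take, as in the paper's running convention, the comparison class to be monotone $K$‑Lipschitz functions. First I would write the agent's best‑response program against the given policy $\{f(\vtheta'_t,\cdot)\}_{t=1}^T$, namely $\max_{\ve_1,\dots,\ve_T \ge 0}\ \sum_{t=1}^T f\!\left(\vtheta'_t,\, W(\vs_0 + \Omega\sum_{i<t}\ve_i + \ve_t)\right) - \tfrac12\sum_t\|\ve_t\|_2^2$, and record the first‑order (KKT) conditions at the hypothesized optimum $\{\ve_t\}$ (the constraints are linear, so a constraint qualification holds automatically). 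Differentiating the score through the state recursion exactly as in the derivation of Proposition~\ref{prop:agent}, $\ve_t$ enters $\vo_t$ through $W$ and each later $\vo_s$ ($s>t$) through $W\Omega$, so the gradient of the cumulative score in $\ve_t$ is $W^\top g_t + \Omega^\top W^\top \sum_{s>t} g_s$, where $g_s := \nabla_{\vo} f(\vtheta'_s, \vo_s)$ is the gradient of the assessment rule in its feature argument, evaluated at the induced features. Stationarity then reads $\ve_t = \big(W^\top g_t + \Omega^\top W^\top\sum_{s>t}g_s\big)$ on the support of $\ve_t$, with the bracketed quantity $\le 0$ off the support.

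The crucial observation is that this is exactly the best‑response map of Proposition~\ref{prop:agent} with the linear weights $\vtheta_s$ replaced by the gradients $g_s$. I would therefore \emph{define} the candidate linear policy by $\vtheta_t := g_t = \nabla_{\vo} f(\vtheta'_t, \vo_t)$ and verify it reproduces $\{\ve_t\}$. Because the linear score $\sum_s \vtheta_s^\top \vo_s$ is linear in the efforts, the surrogate objective $\sum_s \vtheta_s^\top\vo_s - \tfrac12\sum_s\|\ve_s\|_2^2$ is strongly concave, so its unique maximizer is pinned down by the very same stationarity conditions; since $\{\ve_t\}$ satisfies them—by construction the two score gradients agree at this point—it is the, and hence a, best response to $\{\vtheta_t\}$. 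I would stress that this step uses only the \emph{necessary} FOC of the possibly non‑concave original problem together with the \emph{sufficient} FOC of the concave surrogate, so no concavity of $f$ is needed.

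I would also check consistency on the zero coordinates. Since $W,\Omega \succeq 0$ and $g_t \ge 0$, the bracketed quantity $\big[W^\top g_t + \Omega^\top W^\top\sum_{s>t}g_s\big]_j$ is simultaneously $\ge 0$ and (by the off‑support KKT inequality for the original problem) $\le 0$, hence $0$; plugging this into the surrogate's stationarity forces the associated multiplier to vanish, so the surrogate's best response has exactly the same zeros as $\{\ve_t\}$. Thus the matching is exact on both the support and its complement.

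The main obstacle is the membership check $\vtheta_t \in \Delta^n$. Nonnegativity $g_t \ge 0$ is immediate from monotonicity of $f$, and $K$‑Lipschitzness with $K \le 1$ (in the $\ell_\infty$ norm dual to the $\ell_1$ simplex) gives $\|g_t\|_1 \le K \le 1$, placing $g_t$ in the scaled simplex $\{x \ge 0 : \|x\|_1 \le 1\}$. The delicate point is promoting this to the exact normalization $\|g_t\|_1 = 1$ demanded by $\Delta^n$: I expect to absorb the deficit either by observing that one may take the incentivizing $f$ with its Lipschitz bound tight along the realized feature direction without altering the induced best response, or by using the kernel of $W^\top$ to redistribute the missing $\ell_1$ mass onto coordinates that leave every $\ve_t$ unchanged. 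This normalization—rather than the gradient‑matching itself—is where the argument needs the most care, and it is the analogue of the complementary‑slackness bookkeeping that made the budgeted proof of Theorem~\ref{thm:linear-opt} more involved than its single‑round predecessor.
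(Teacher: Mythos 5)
Your proposal follows essentially the same route as the paper's proof: both observe that the agent's (quadratic-cost) best response depends on the assessment rules only through the feature-gradients $\nabla_{\vo_t} f(\vtheta'_t,\vo_t)$, and both substitute those gradients as the coefficients of the candidate linear policy --- your KKT/strong-concavity justification of the matching step is in fact more explicit than the paper's. The normalization issue you flag at the end, namely that monotonicity and $K\le 1$ only place $g_t$ in $\left\{x\ge 0:\|x\|_1\le 1\right\}$ rather than exactly on $\Delta^n$, is a real loose end, but the paper's own proof leaves it equally unaddressed, so your attempt matches the paper's argument in both substance and level of rigor.
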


Here is the proof sketch. In order to show that linear assessment policies are optimal, we re-derive the optimal effort policy a rational agent will play for some arbitrary assessment policy $\{f(\vtheta_t, \cdot)\}_{t=1}^T$. We find that an agent's optimal effort policy is linear in $\{\nabla_{\mathbf{o}_t} f(\vtheta_t, \cdot)\}_{t=1}^T$, the gradient of the assessment policy with respect to the agent's observable features. Therefore, picking each decision rule to be $f(\vtheta_t, \mathbf{o}_t) = \vtheta_t^\top \mathbf{o}_t$ is optimal, assuming no restrictions on $\vtheta_t$. However, since we restrict each linear decision rule $\theta_t$ to lie in the probability simplex $\Delta^n$, playing the optimal $\{\vtheta_t\}_{t=1}^T$ is at least as good as any assessment policy in the set of Lipschitz continuous assessment policies with Lipschitz constant $K \leq 1$.

\begin{proof}
Recall that 
\begin{equation} \label{eq:e-rewritten}
\begin{aligned}
    \{\va^*_t\}_{t=1}^T = \arg \max_{\va_1, \ldots, \va_T} 
    \quad & \sum_{t=1}^T y_t - \frac{1}{2} \|\va_t\|_2^2 \\ 
    \text{s.t. } \quad & a_{t}^{(j)} \geq 0 \; \forall t,j
\end{aligned}
\end{equation}
This is the generic optimization problem for the agent's optimal effort policy $\{\va^*_t\}_{t=1}^T$ from Section \ref{agent-ep}. However, instead of specifying the score $y_t$ achieved at each time step to be a linear function of the agent's observable features $\mathbf{o}_t$, we leave the relationship between observable features and score as some generic function $y_t = f(\vtheta_t, \mathbf{o}_t)$, parameterized by $\vtheta_t$. We can still obtain an expression for $\va^*_t$ by taking the gradient of Equation \ref{eq:e-rewritten} with respect to $\va_t$ and setting it equal to $\mathbf{0}_d$. By applying the chain rule, we obtain

\begin{equation*}
    \va^*_t 
    = \nabla_{\va_t} \sum_{t=1}^T y_t
    = \sum_{t=1}^T \nabla_{\va_t} f(\vtheta_t, \mathbf{o}_t)
    = \sum_{t=1}^T \nabla_{\va_t} W \left(\mathbf{s}_0 + \Omega \sum_{k=1}^{i-1} \mathbf{a_k} + \v{e}_i \right) \cdot \nabla_{\mathbf{o}_t} f(\vtheta_t, \mathbf{o}_t)
\end{equation*}

\begin{equation} \label{eq:e-generic}
    \va^*_t = W^\top \nabla_{\mathbf{o}_t} f(\vtheta_t, \mathbf{o}_t) + \Omega^\top W^\top \sum_{i=t+1}^T \nabla_{\mathbf{o}_i} f(\vtheta_i, \mathbf{o}_i)
\end{equation}

The goal of the principal is to maximize the agent's internal state at time $T$, $ \left \| \Lambda \sum_{t=1}^T \ve_t \right \|_1$. Assuming the agent is rational and plays $\ve_t = \va^*_t$, $\forall t$, we can plug Equation \ref{eq:e-generic} into this expression and simplify to obtain

\begin{equation*}
\begin{aligned}
    \left \| \Lambda \sum_{t=1}^T \ve_t \right \|_1 = \left \| \Lambda W^\top \sum_{t=1}^T \nabla_{\mathbf{o}_t} f(\vtheta_t, \mathbf{o}_t) \right \|_1 + \left \| \Lambda \Omega^\top W^\top \sum_{t=1}^T \sum_{i=t+1}^T \nabla_{\mathbf{o}_i} f(\vtheta_i, \mathbf{o}_i)
    \right \|_1 \\ = \left \| \Lambda \sum_{t=1}^T \left( I + (t-1)\Omega^\top \right) W^\top \nabla_{\mathbf{o}_t} f(\vtheta_t, \mathbf{o}_t) \right \|_1
\end{aligned}
\end{equation*}

Due to the linearity of the problem, the optimal $\nabla_{\mathbf{o}_t} f(\vtheta_t, \mathbf{o}_t)$ will be basis vectors for all $t$. Since we restrict $\vtheta_t$ to be in $\Delta^n$, $f(\vtheta_t, \cdot) = \vtheta_t$ is at least as optimal as all Lipschitz continuous functions with Lipschitz constant $K \leq 1$.
\end{proof}

Note that while linear optimality does not hold across the set of \emph{all} assessment policies, this is a result of our restrictions on $\vtheta_t$ and not due to some suboptimality of linear mechanisms. For example, if we chose to restrict our choice of assessment rules to lie within a probability simplex rescaled by $\Gamma \in \mathbb{R}_+$, then there would exist a linear assessment policy which would be at least as optimal as all Lipschitz functions with Lipschitz constant $K \leq \Gamma$.

\subsection{What levels of effort can be incentivized within $T$ rounds?}\label{sec:TE}

According to Corollary~\ref{cor:non-decrease}, we know that longer time horizons always expand the set of implementable effort sequences. In what follows, we characterize the number of rounds sufficient for reaching a cumulative effort level of $\mathcal{E}$ in a designated effort component. 

\begin{definition}[$(T,\mathcal{E})$-Incentivizability]
An effort component $j$ is $(T,\mathcal{E})$-incentivizable if a rational agent can be motivated to expend at least $\mathcal{E}$ units of effort in the direction of $j$ over $T$ rounds.
\end{definition}

\begin{theorem} \label{th:T-E}
Let $W_{mj}$ denote the maximal element in the $j$th column of $W$. Then if 

\begin{equation} \label{eq:T-bound}
    T = \left \lceil \frac{1}{2} - \frac{1}{\Omega_{jj}} + \frac{1}{2} \sqrt{\left(\frac{2}{\Omega_{jj}} - 1\right)^2 + \frac{8\mathcal{E}}{\Omega_{jj} W_{mj}}} \right \rceil, 
\end{equation}
effort component $j$ is $(T,\mathcal{E})$-incentivizable for $\Omega_{jj} > 0$.
\end{theorem}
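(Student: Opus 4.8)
The plan is to directly compute the maximum cumulative effort the principal can drive into coordinate $j$, using the closed-form agent best-response from Proposition~\ref{prop:agent}, and then invert the resulting expression for $T$. Since $(T,\mathcal{E})$-incentivizability only asks for the \emph{existence} of an assessment policy under which the agent's best response expends at least $\mathcal{E}$ units of effort in direction $j$, it suffices to exhibit the policy that maximizes $\sum_{t=1}^T e^*_{t,j}$ and to check that its value meets the threshold for the stated $T$.

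First I would substitute the best response $\ve^*_t = W^\top\vtheta_t + (W\Omega)^\top\sum_{i=1}^{T-t}\vtheta_{t+i}$ into the cumulative $j$-th effort and collect, for each round $\tau$ and feature $k$, the total coefficient of $\theta_{\tau,k}$. Using that $\Omega$ is diagonal (so $[(W\Omega)^\top\vtheta]_j = \Omega_{jj}[W^\top\vtheta]_j$) and counting how often $\theta_{\tau,k}$ reappears through the state-accumulation terms, one finds
\[
\sum_{t=1}^T e^*_{t,j} = \sum_{\tau=1}^T\sum_{k=1}^n \theta_{\tau,k}\, W_{kj}\bigl(1+(\tau-1)\Omega_{jj}\bigr).
\]
Because $\Omega_{jj}>0$, the multiplier $1+(\tau-1)\Omega_{jj}$ is strictly positive for every $\tau$, so maximizing over each simplex $\vtheta_\tau\in\Delta^n$ independently places all mass on the single feature $m=\arg\max_k W_{kj}$ for \emph{every} round; that is, the maximizing policy is the static rule $\vtheta_t=\mathbf{b}_m$ (the $n$-dimensional basis vector with a one in coordinate $m$) for all $t$. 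This collapses the objective to $W_{mj}\sum_{\tau=1}^T\bigl(1+(\tau-1)\Omega_{jj}\bigr) = W_{mj}\bigl(T+\Omega_{jj}\tfrac{T(T-1)}{2}\bigr)$, using $\sum_{\tau=1}^T(\tau-1)=\tfrac{T(T-1)}{2}$. I would also note in passing that the closed-form best response has nonnegative entries whenever $W,\Omega,\vtheta\ge 0$, so the constraint $e_{t,j}\ge0$ is automatically satisfied and no feasibility issue arises.

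It then remains to impose $W_{mj}\bigl(T+\Omega_{jj}\tfrac{T(T-1)}{2}\bigr)\ge\mathcal{E}$ and solve for $T$. Multiplying through by $2/W_{mj}$ and dividing by $\Omega_{jj}>0$ yields the quadratic inequality $T^2+\bigl(\tfrac{2}{\Omega_{jj}}-1\bigr)T-\tfrac{2\mathcal{E}}{\Omega_{jj}W_{mj}}\ge0$; its larger root is exactly the expression inside the ceiling in Equation~\eqref{eq:T-bound}, so any integer $T$ at least the ceiling of that root satisfies the inequality and hence witnesses $(T,\mathcal{E})$-incentivizability. The argument is essentially bookkeeping once the best response is plugged in; the only conceptual step is recognizing that the per-round positivity of $1+(\tau-1)\Omega_{jj}$ makes the same feature $m$ optimal at every round, which is what reduces the policy to the static rule and yields the clean closed form. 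I expect the main (minor) obstacle to be carefully justifying the coefficient-counting step for $\theta_{\tau,k}$ --- in particular that $\theta_{\tau,k}$ appears once with weight $W_{kj}$ and $(\tau-1)$ additional times with weight $\Omega_{jj}W_{kj}$ --- and then matching the positive quadratic-formula root to the stated form without sign errors.
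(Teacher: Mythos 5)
Your proposal is correct and follows essentially the same route as the paper: both exhibit the static assessment policy $\vtheta_t = \mathbf{b}_m$ with $m = \arg\max_k W_{kj}$, reduce the cumulative $j$-effort to $W_{mj}\bigl(T + \Omega_{jj}\tfrac{T(T-1)}{2}\bigr)$, and invert the quadratic to obtain the stated bound. Your explicit coefficient-collection argument showing that the objective decouples across rounds (so the static basis-vector rule is in fact the maximizer) is a slightly more careful justification of a step the paper simply asserts, but it is the same proof.
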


\begin{figure}
    \begin{subfigure}[b]{0.49\textwidth}
        \centering
        \includegraphics[width=\textwidth, height=\textheight,keepaspectratio]{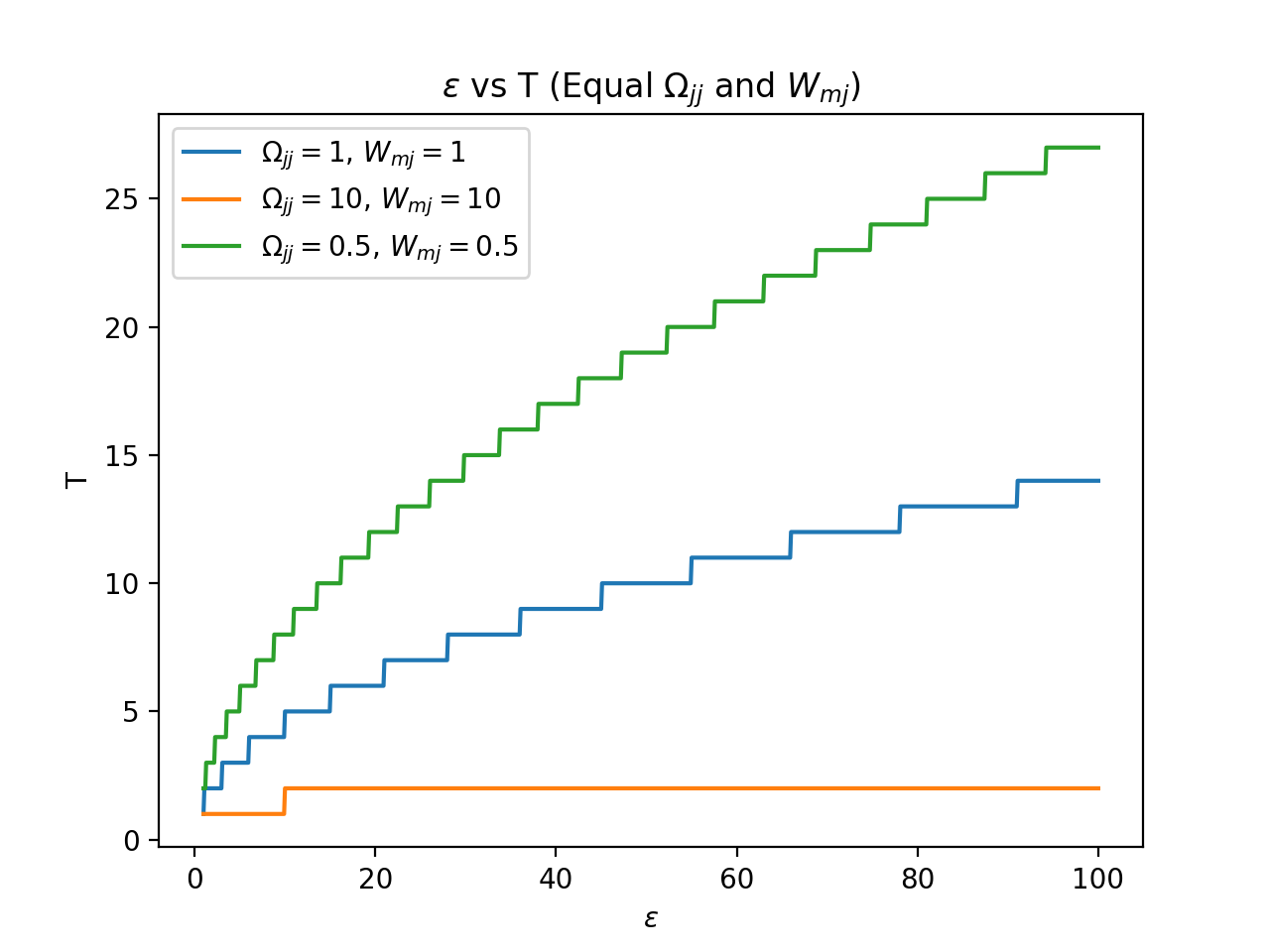}
        \label{fig:linear_budget_equal}
    \end{subfigure}
    \hfill
    \begin{subfigure}[b]{0.49\textwidth}
        \centering
        \includegraphics[width=\textwidth, height=\textheight,keepaspectratio]{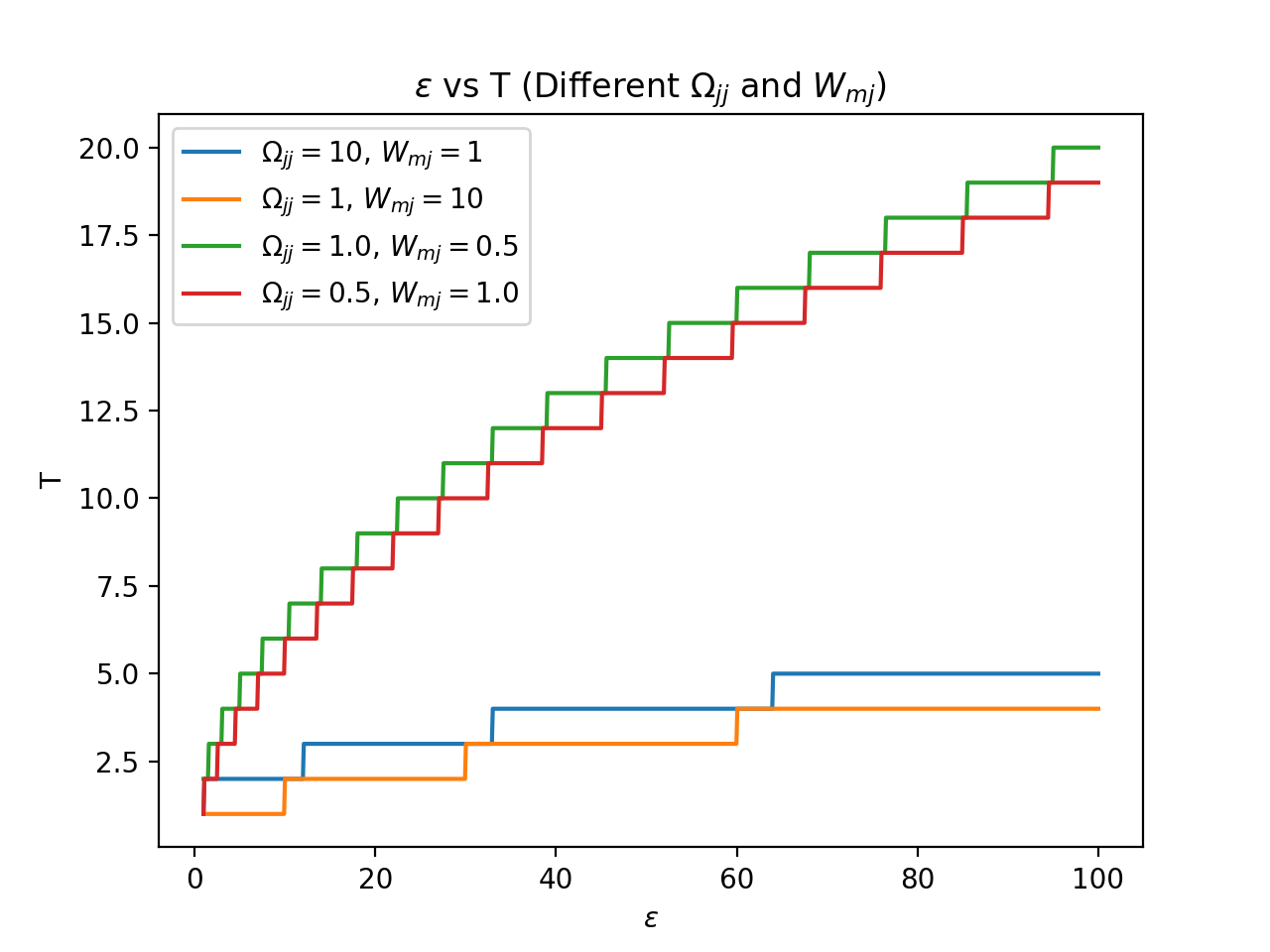}
        \label{fig:linear_budget_different}
    \end{subfigure}
    \caption{Left: $T$ as a function of $\mathcal{E}$. Larger $\Omega_{jj}$ and $W_{mj}$ terms correspond to fewer time-steps to incentivize $\mathcal{E}$ units of effort. Right: $T$ as a function of $\mathcal{E}$. While $T$ is inversely proportional to both $\Omega_{jj}$ and $W_{mj}$, increasing $\Omega_{jj}$ decreases the time required to incentivize $\mathcal{E}$ units of effort more than an equal increase in $W_{mj}$.}
\end{figure}

\begin{proof}
The relationship between total effort $\mathcal{E}$ and the minimum time horizon $T$ required to induce an agent to expend $\mathcal{E}$ units of effort in the direction of effort component $j$ can be written as

\begin{equation} \label{eq:T-E1}
\begin{aligned}
    \min_{\vtheta_1, \ldots, \vtheta_T} \quad & T\\
    s.t. \quad &  \mathcal{E} \leq \sum_{t=1}^T a_{t}^{*(j)}, \;
    \vtheta_t \in \Delta^n \; \forall t, \;
    T > 1
\end{aligned}
\end{equation}

where $a_{t}^{*(j)} = \sum_{k=1}^n (\theta^{(k)}_t + \Omega_{jj} (\sum_{i=1}^{T-t} \theta^{(k)}_{t+i})W_{kj}$ (see Equation \ref{eq:effort}). Since we only care about the effort accumulated in coordinate $j$ at each time-step, the principal's optimal assessment policy is to pick the assessment rule $\vtheta_t$ that maximizes the effort the agent expends in coordinate $j$ at time $t$. This translates to picking $\theta^{(k)}_t = \mathbbm{1} \{W_{kj} = W_{mj} \} \; \forall t$, where $W_{mk} = \max_k W_{kj}$. In other words, the principal wants to play the same basis vector at every time-step, which will have weight on the observable feature that effort component $j$ contributes the most to. Plugging in this expression for $\theta^{(k)}_t$, the constraint in Equation \ref{eq:T-E1} simplifies to 

\begin{equation*}
    \mathcal{E} \leq \sum_{t=1}^T \left(1 + \Omega_{jj} \left(T-t\right)\right) W_{mj} = \left(T + \frac{\Omega_{jj}}{2} \left(T^2-T\right)\right) W_{mj}
\end{equation*}

Note that this will hold with equality if $\mathcal{E} = \sum_{t=1}^T a_{t}^{*(j)}$. After solving for $T$ and simplifying, we get 
\begin{equation} \label{eq:T-bound2}
    T \geq \frac{1}{2} - \frac{1}{\Omega_{jj}} + \frac{1}{2} \sqrt{\left(\frac{2}{\Omega_{jj}} - 1\right)^2 + \frac{8\mathcal{E}}{\Omega_{jj} W_{mj}}}
\end{equation}

Since we want the time horizon to be as small as possible but need $T$ to be an integer, we take the ceiling of Equation \ref{eq:T-bound2} to get our final time horizon value.
\end{proof}
Note that the time horizon $T$ scales as $\sqrt{\mathcal{E}}$ because $\va^*_t$, the optimal agent effort profile at time $t$, has a linear dependence on $T-t$, and the total effort $\mathcal{E}$ expended by the agent is proportional to $\sum_{t=1}^T a_{t}^{*(j)}$. Intuitively, this can be seen as the agent choosing to put in most of the work ``up front'' in order to reap the benefits of his effort across a longer period of time.

Note that the bound on $T$ is tight for $(T, \mathcal{E})$ pairs where $\mathcal{E} = \sum_{t=1}^T e_{t}^{(j)}$. For example, let $j=1$ and $\Omega = W = I \in \mathcal{R}^{2 \times 2}$. If we pick $\vtheta_t = \begin{bmatrix} 1 & 0 \end{bmatrix}^\top$, then $e_{t}^{(1)} = 1 + (T-t)$, from which it is straightforward to see that with $2$ total time-steps, the cumulative effort in the direction of $j$ will be 3. By setting $\mathcal{E} = 3$ in Equation \ref{eq:T-bound}, we get $T \geq 2$, showing that our lower bound on $T$ is indeed tight for this example.

A natural question is if we can recover a similar definition of $(T,\mathcal{E})$-incentivizability if we want to incentivize some arbitrary subset of effort $\ve_S$ over time. While we can obtain a bound for incentivizing \emph{one} index $j \in S$ using the above formulation, obtaining a tighter characterization may require playing different assessment rules over time. Determining these optimal assessment rules requires solving an optimization problem, so a closed-form bound for this setting is not easy to obtain.

\subsection{Discussion: comparing the fixed budget and quadratic cost models}
While the principal is able to incentivize a wider range of effort profiles under both the fixed budget and quadratic cost setting, there are several differences in the optimal policies recovered in each setting. In the fixed budget setting, the optimal agent effort policy under linear effort conversion function is to play a basis vector at every time-step (see Proposition \ref{prop:eq-linear}), while the principal's optimal decision rules are generally not basis vectors. Somewhat surprisingly, in the quadratic cost setting the roles are exactly reversed. The principal's optimal linear assessment policy is to play a sequence of basis vectors, while the agent's effort policy will generally involve spending effort in different directions at the same time-step. While in settings such as our classroom example it may be desirable to incentivize agents to play basis vectors (e.g. only study), the choice of constraint on agent effort is problem-specific and should be chosen based on what is most realistic under the specific setting being studied.

Another difference between the two settings is the computational complexity of recovering the optimal linear policies for the principal and agent. In the fixed budget setting, we can recover the agent's optimal effort policy by solving a sequence of linear programs, and we can recover the principal's optimal assessment policy by using a membership oracle-based method. On the other hand, we have a simple closed-form solution for the agent's optimal effort policy and can recover the principal's optimal linear assessment policy by solving a sequence of linear programs under the quadratic cost formulation.

\end{document}